\newcommand*{\addFileDependency}[1]{
\typeout{(#1)}
\@addtofilelist{#1}
\IfFileExists{#1}{}{\typeout{No file #1.}}
}
\newcommand*{\myexternaldocument}[1]{%
\externaldocument{#1}%
\addFileDependency{#1.tex}%
\addFileDependency{#1.aux}%
}
\newtheorem{theorem}{Theorem}
\newtheorem{lemma}{Lemma}
\newtheorem{corollary}{Corollary}
\newtheorem{assumption}{Assumption}%
\newtheorem{definition}{Definition}
\newcommand{\Acal}{\mathcal A}
\newcommand{\Fcal}{\mathcal F}
\newcommand{\Gcal}{\mathcal G}
\newcommand{\Ical}{\mathcal I}
\newcommand{\Kcal}{\mathcal K}
\newcommand{\Ncal}{\mathcal N}
\newcommand{\Pcal}{\mathcal P}
\newcommand{\Rcal}{\mathcal R}
\newcommand{\Scal}{\mathcal S}
\newcommand{\Tcal}{\mathcal T}
\newcommand{\Wcal}{\mathcal W}
\newcommand{\E}{\mathbb{E}}
\newcommand{\N}{\mathbb{N}}
\newcommand{\R}{\mathbb{R}}
\newcommand{\I}{\mathbb{I}}
\newcommand{\m}{\mathbf{m}}
\newcommand{\p}{\mathbf{p}}
\newcommand{\af}{\mathbf{a}}
\newcommand{\mb}{\mathbb}
\newcommand{\calN}{\mathcal{N}}
\newcommand{\Mean}{{\mathbb{E}}}
\newcommand{\prob}{{\mathbb{P}}}
\newcommand{\bbs}{{\mathbb{S}}}
\DeclareMathOperator*{\argmin}{arg\,min}
\newcommand{\s}{\mathbf{s}}
\title[Causal Deepsets for Off-policy Evaluation under Spatial or Spatio-temporal Interferences]{Causal Deepsets for Off-policy Evaluation under Spatial or \\Spatio-temporal Interferences}
\author[Runpeng Dai$^{a,b*}$, Jianing Wang$^{a**}$, Fan Zhou$^{a**}$]{Runpeng Dai$^{a,b*}$\thanks{The first three authors have contributed equally to this paper.}, Jianing Wang$^{a**}$, Fan Zhou$^{a**}$, Shikai Luo$^{c}$,\\
 Zhiwei Qin$^{d}$, Chengchun Shi$^{e}$\thanks{The final two authors listed are joint senior contributors to this work.  }, and Hongtu Zhu$^{b}$\thanks{Corresponding author. Email: {htzhu@email.unc.edu}}\\}
\address{
        $^a$Shanghai University of Finance and Economics, Shanghai, China\\
        $^b$University of North Carolina at Chapel Hill, North Carolina, USA \\
        $^c$Bytedance, Beijing, China \\      $^d$Independent Researcher, California, USA\\
        $^e$London School of Economics and Political Science, London, UK\\
}
\begin{document}

\begin{abstract}
Off-policy evaluation (OPE) is widely applied in sectors such as pharmaceuticals and e-commerce to evaluate the efficacy of novel products or policies from offline datasets. This paper introduces a causal deepset framework that relaxes several key structural assumptions, primarily the mean-field assumption, prevalent in  existing OPE methodologies   that handle spatio-temporal interference. These traditional assumptions frequently prove inadequate in real-world settings, thereby restricting the capability of current OPE methods to effectively address complex interference effects. In response, we advocate for the implementation of the permutation invariance (PI) assumption. This innovative approach enables the data-driven, adaptive learning of the mean-field function, offering a more flexible estimation method beyond conventional averaging. Furthermore, we present novel algorithms that incorporate the PI assumption into OPE and thoroughly examine their theoretical foundations. Our numerical analyses demonstrate that this novel approach yields significantly more precise estimations than existing baseline algorithms, thereby substantially improving the practical applicability and effectiveness of OPE methodologies. A Python implementation of our proposed method is available at \url{https://github.com/BIG-S2/Causal-Deepsets}.

\keywords{Causal inference, deepset, off-policy evaluation,   permutation invariance, spatial interference.}
\end{abstract}

\section{Introduction}
            
    \subsection{Background}

Many causal inference problems involve spatial or spatio-temporal data, which consist of observations recorded at specific locations and/or times. This data type is increasingly prevalent in environmental studies, epidemiology, social science, two-sided marketplaces, and various other fields 
due to advanced modern data collection methods. Unlike traditional scenarios where observations are typically independent across different subjects, spatial data often exhibit spillover effects. This means the treatment applied to one subject can affect not only their own response but also the responses of their neighboring subjects, challenging the stable unit treatment value assumption \citep[SUTVA,][]{angrist1996identification}.
For example, in infectious disease control, an individual's risk of infection is influenced by the vaccination status of others nearby \citep{hudgens2008toward}. Social scientists have also observed that living in disadvantaged neighborhoods can intensify the difficulties faced by the urban poor, further illustrating the impact of spatial interference \citep{sobel2006randomized}.

This paper is motivated by the challenge of managing complex spatio-temporal interferences in major ride-sourcing platforms such as Uber, Lyft, and Didi Chuxing \citep{hahn2017ridesharing,qin2022reinforcement}. Key interferences in these platforms include  fluctuating demand and supply across various locations and times, the impact of surge pricing, changes in driver income due to shifts in driver locations, and the influence of external factors such as weather and events on demand \citep{WangYang2019,zhou2021graph,qin2022reinforcement}. An example of these challenges can be seen in our evaluation of new passenger-side subsidizing policies. When implemented in specific regions, these policies not only boost demand and draw drivers from nearby areas, affecting income in both the target and surrounding regions, but also have a delayed effect on the driver income distribution, leading to both spatial and temporal interferences. 
The presence of such complex interferences significantly complicates the application of traditional causal inference methods. To navigate these complexities effectively, the development of advanced statistical models and methods is essential for accurately evaluating the impact of these policies in the context of 
causal inference with spatio-temporal interference. 

\subsection{Related Works}
Our proposal intersects with three distinct bodies of literature: spatial interference in causal inference, off-policy evaluation (OPE), and the concept of permutation invariance within the realm of geometric deep learning. 

\subsubsection{Spatial Interference in Causal Inference}

The literature on addressing spatial interference in causal inference broadly falls into two main categories: design-based and model-based methods \citep[see][and the references therein]{reich2021review}. Design-based methods concentrate on leveraging the design of experiments and sampling processes. For instance, \cite{aronow2017estimating} introduced a spatial causal inference framework that estimates average potential outcomes using known treatment distributions and exposure mapping. \cite{li2019randomization} proposed randomization-based point estimators and confidence intervals to study peer effects with arbitrary numbers of peers and peer types.
\cite{wang2020design} proposed estimating the average marginalized response to quantify outcome interference by treatments at specific distances, assuming a Bernoulli treatment distribution. 
On the other hand, model-based methods are divided into four subcategories. The first subcategory, structural models, imposes interference structures to approximate interference effects. Notable examples include conditional autoregressive \citep[CAR,][]{banerjee2003hierarchical} and spatial autoregressive   \citep[SAR,][]{lee2007identification} models, which introduce spatial random effects to represent neighborhood influences. The second subcategory, partial interference methods, divides populations into non-overlapping blocks, assuming interference within these blocks 
\citep{sobel2006randomized, tchetgen2012causal, zigler2012estimating, perez2014assessing}.  
The third subcategory focuses on experimental units in geographical spaces or networks, employing local or network-based assumptions for causal inference \citep{verbitsky2012causal, wang2020design, tchetgen2021auto, puelz2022graph}. 
The final subcategory 
considers scenarios where interference manifests as congestion or pricing effects in two-sided markets \citep{munro2021treatment,  johari2022experimental}. 

\subsubsection{Off-policy Evaluation}
 
Off-policy evaluation (OPE) represents an actively evolving research domain within the field of reinforcement learning 
\citep[RL,][]{SuttonRL,agarwal2019reinforcement,OfflineRL_review}. The primary goal of OPE is to accurately estimate the mean reward that a prospective target policy might yield, utilizing observational data generated under a different, {\it behavior policy}. This area encompasses a variety of methods \citep[see][for reviews]{dudik2014doubly,uehara2022review}, including model-based approaches \citep{gottesman2019combining,yin2020asymptotically}, value-based techniques \citep{le2019batch,luckett2020estimating,hao2021bootstrapping,liao2021off,chen2022well,shi2022statistical}, importance sampling methods \citep{thomas2015high,luedtke2016statistical,liu2018breaking,hu2023off,thams2021statistical,wang2023projected}, and doubly robust strategies \citep{zhang2012robust,jiang2016doubly,thomas2016data,tang2019doubly,kallus2022efficiently,liao2022batch}. 
Despite the progress in OPE methodologies, a notable shortcoming is their general lack of accounting for spatial interference, an essential factor in many real-world scenarios, including ride-sharing platforms. Recognizing this gap, recent research endeavors have started integrating concepts from mean-field multi-agent RL \citep{yang2018mean}  to handle spatio-temporal interference effects \citep{shi2022,luo2022policy}. This approach marks a significant step towards modeling the complex interactions in environments where actions in one location or time can influence outcomes in another. However, a key limitation of these emerging methods is their dependency on specifying the mean-field function. This requirement often poses challenges in terms of flexibility and adaptability, particularly when applied to diverse and unpredictable real-world settings where the dynamics of interference are not easily quantifiable or predefined. Therefore, while these methods mark a critical advancement towards OPE with spatial data, there remains a need for further development to enhance their applicability and efficacy in a wider range of practical applications.

\subsubsection{Permutation Invariance}

Permutation invariant functions, which maintain their output regardless of the order of their inputs, are prevalent in domains such as cosmology \citep{ntampaka2016dynamical} and computer vision \citep{aittala2018burst}. This characteristic has led to the design of numerous geometric deep learning models specifically aimed at approximating these types of functions \citep{bronstein2017geometric}. A notable example is the structure called DeepSets, proposed by  \citet{zaheer2017deep}, which is capable of approximating any permutation invariant function. DeepSets has shown impressive performance in 
a number of tasks such as sum of digits, classification of point-clouds, and regression
with clustering side-information, 
highlighting its versatility. 
Expanding on this concept, the PINE model \citep{pine}  offers a generalization of DeepSets by studying partially permutation invariant functions. In these functions, inputs are categorized into distinct groups, and the function remains invariant to permutations occurring within these individual groups. This design allows for greater flexibility in various contexts.  
Applying these permutation invariant models to studying the interference effects in 
spatio-temporal systems appears to be a natural progression. 
However, there is a lack of existing literature specifically addressing the application of these models in spatio-temporal interference modeling. This gap suggests an opportunity for novel research contributions in this area, potentially leading to advancements in 
understanding 
complex spatio-temporal systems, such as ride-sharing platforms.

\begin{figure}[h]       \center{\includegraphics[width=0.6\linewidth]  {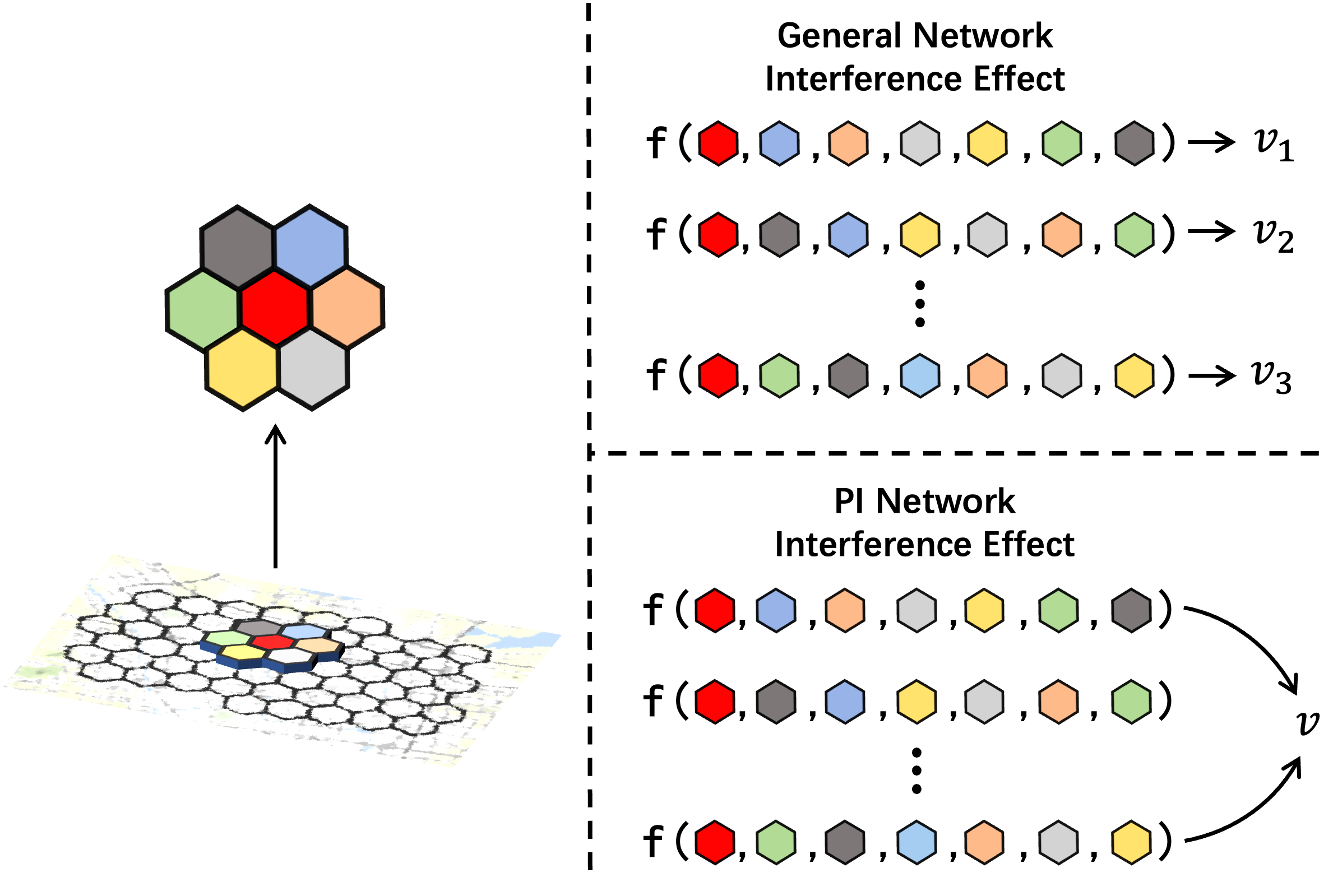}}   
	\caption{The illustration of permutation invariant (PI) mean-outcome function. The red hexagon represents confounder-treatment pair of the central region while the other six hexagons represent its neighboring regions. The upper-right subplot shows the mean-outcome function with only general network interference assumption \ref{ass: neighbor}. Here, the output of $f$ changes across different permutations of neighboring regions. On the other hand, the subplot on the right bottom shows that with the permutation invariant assumption \ref{ass:PI}, all outputs have the same value.}
	\label{fig:permutation invariant}  
\end{figure}

\subsection{Contributions}

In our paper, we introduce a novel causal deepsets model designed to address spatial interference, incorporating several innovative elements. Firstly, we propose a permutation invariant 
neural network architecture that effectively manages spatial interference. 
This architecture aggregates information from neighboring areas and constructs a general permutation invariant function through neural networks, thereby moving beyond traditional parametric models and facilitating adaptive learning of interference patterns. Additionally, we seamlessly combine this architecture with OPE algorithms to produce robust causal estimators. 
For spatio-temporal data, we further employ the multi-agent RL (MARL) framework which models each spatial region as an individual agent to capture spatial interference through agent interactions and utilizes Markov decision processes \citep[MDPs,][]{puterman2014markov}  to model temporal relationships. 
Our methodology's efficacy is evidenced through comprehensive simulations and real data analyses, where it demonstrates outstanding performance. 
Beyond these methodological advancements, we conduct an in-depth theoretical analysis of our proposed estimator. We establish its consistency and convergence rate under practical assumptions. We also prove the minimax optimality of our estimator in approximating permutation invariant functions, underscoring our approach's capability in accurately representing and managing complex interference structures. Considering that our estimator is built upon the assumption of permutation invariance, we have named it the Permutation Invariant Estimator, or PIE, for short.

\subsection{Paper Outline}
The paper is structured as follows. In Sections 2 and 3, we 
focus on the derivation of our estimators, 
along with the underlying assumptions. The statistical properties of our method are discussed in Section 4. In Section 5, we present the results of numerical studies and real data analysis, showcasing the superior performance of our approach.

\section{Spatial Interference in Nondynamic Settings}
\label{sec:nondynamic} 

This section aims to thoroughly explore the issue of spatial interference in nondynamic settings, often referred to as contextual bandits in OPE. 
Spatial interference implies that the potential outcome in one region is influenced by other regions. We denote each region by an index $i$ within the set $\{1,\ldots,R\}$. For the $i$th region, $Y_i$, $A_i$, and $X_i = (X_{i,1}, \ldots, X_{i,M})^\top\in \mathbb{R}^M$ represent the response, treatment, and $M$-dimensional confounding variables, respectively.

In scenarios without spatial interference, a region's outcome is a function of its own confounding variables and treatment:
\begin{equation}
Y_i = f_i(X_i, A_i) + \epsilon_i,
\label{eq: no spatial}
\end{equation}
where the error term $\epsilon_i$ satisfies $\mathbb{E}(\epsilon_i|\{A_j\}_{j=1}^R, \{X_j\}_{j=1}^R)=0$.
In practical applications, however, the assumption of no interference is often unrealistic, making model (\ref{eq: no spatial}) inapplicable. Without specific knowledge of the interference pattern, the outcome in a region can generally be expressed as a function of its own confounding variables and the treatments across all regions:
\begin{equation}
Y_i = f_i(X_i, A_i, m_i(X_{-i}, A_{-i})) + \epsilon_i,
\label{eq: general spatial}
\end{equation}
where $X_{-i}= (X_1, \ldots,X_{i-1},X_{i+1},\ldots, X_R)$ and $A_{-i} = (A_1, \ldots,A_{i-1}, A_{i+1},\ldots, A_R)$.  Here, $f_i$ includes two components: the first reflecting the $i$th region's contribution and the second capturing interference effects from other regions, denoted by the function $m_i(\cdot)$. This model, depicted in Figure \ref{fig: general spatial}, extends beyond traditional models that only consider neighboring region effects \citep[see e.g.,][]{reich2021review}. Specifically, the interference effect function $m_i(\cdot)$ in \eqref{eq: general spatial} accounts for influences from all regions. Moreover, this framework accommodates nonlinear interference effects, which are common in real-world scenarios.

Model \eqref{eq: general spatial} offers a broad framework for understanding spatial interference, but its complexity becomes daunting with the increase in the number of regions, often rendering it impractical to solve. To combat this, various strategies have been developed to reduce its complexity. One notable approach is spatial network inference  \citep[see e.g.,][]{forastiere2021identification}, which narrows the focus to interactions between regions and their immediate neighbors. While ``neighborhood" could be defined in various ways, we simplify it here to mean geographical proximity.

The study of spatial interference has a rich history \citep{blume1993statistical}. Many researchers have posited that interference typically occurs within a defined neighborhood and can be transmitted via environmental conduits like adjacent regions. Although this method simplifies the complexity of inter-regional interactions, it still implies dependencies between any two regions due to their potential indirect connections. There's often at least one indirect path connecting any two regions, creating a dependency link. 
For example, consider two non-adjacent units $A$ and $B$. In this case, $A$ would be conditionally independent of $B$ if we have complete information about $A$'s neighbors. However, indirect connections between $A$ and $B$ are still permissible. Our proposed approach is based on such an understanding of spatial interdependencies.


\begin{figure}[htbp]
  \centering
    \subfloat[No spatial interference (\ref{eq: no spatial})]
  {\includegraphics[width=0.23\textwidth]{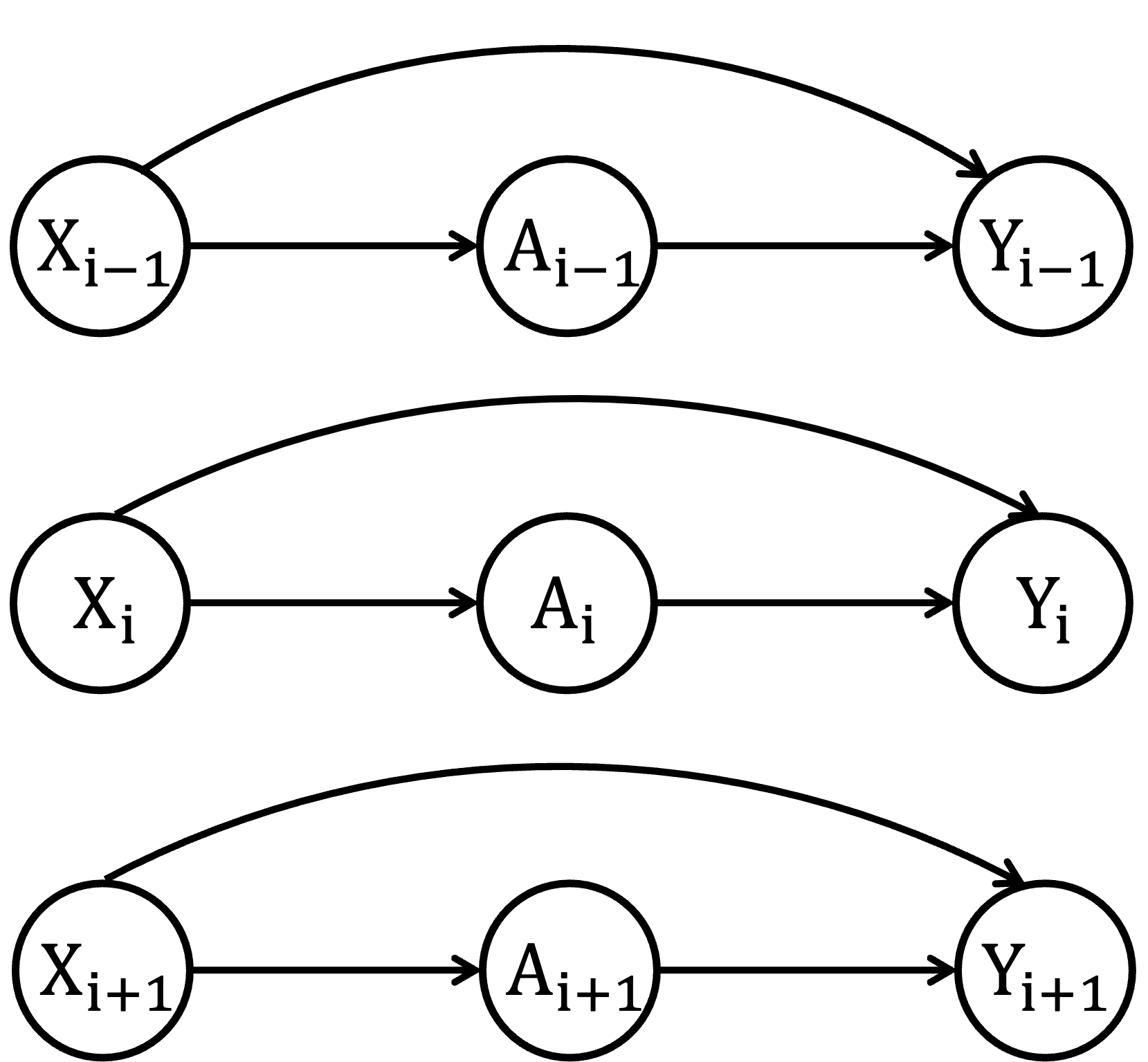}\label{fig: nospatial}}
    \quad
  \subfloat[Full spatial interference (\ref{eq: general spatial})]
  {\includegraphics[width=0.23\textwidth]{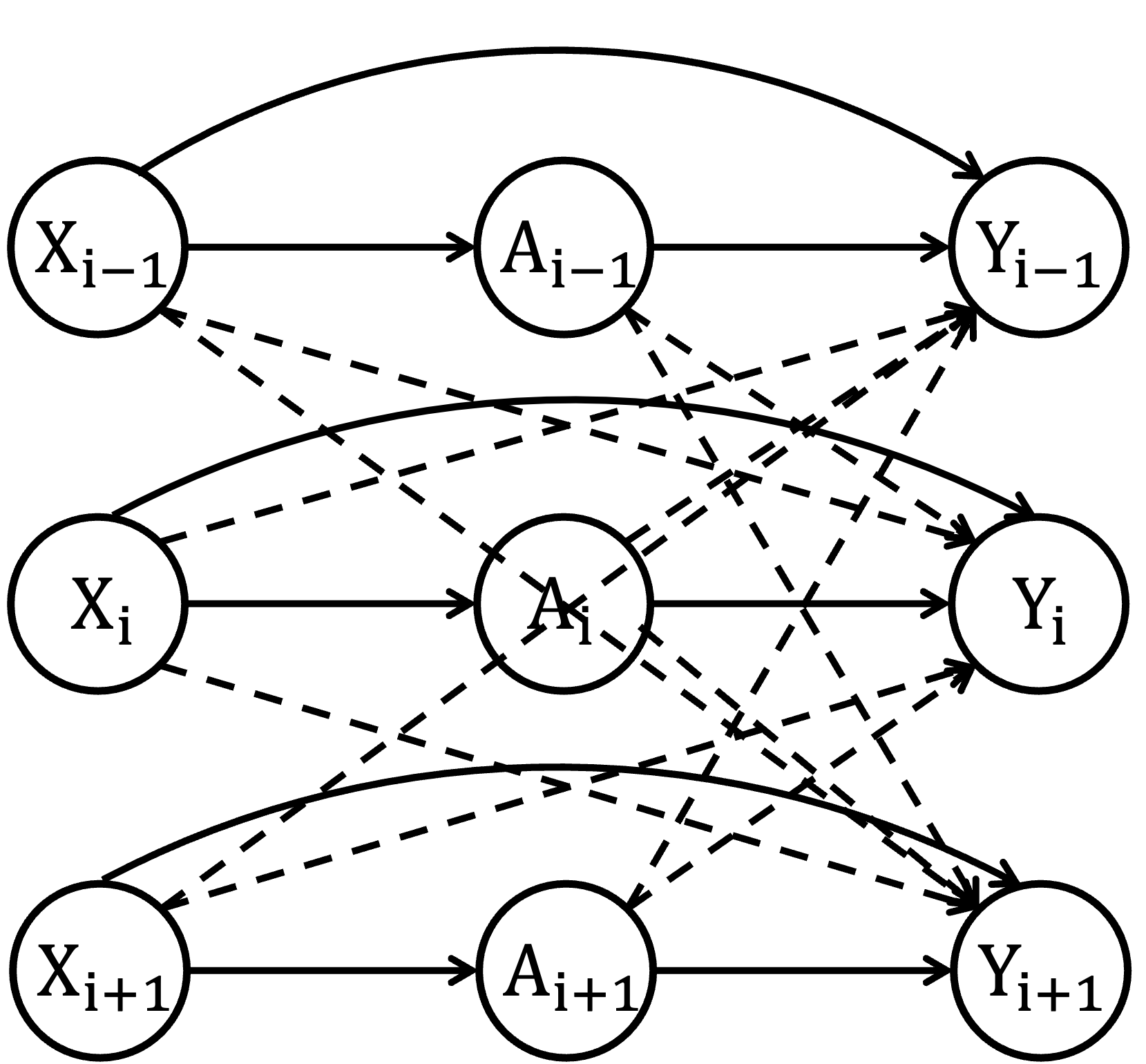}\label{fig: general spatial}}
    \quad
  \subfloat[Network interference \cite{forastiere2021identification}]
  {\includegraphics[width=0.23\textwidth]{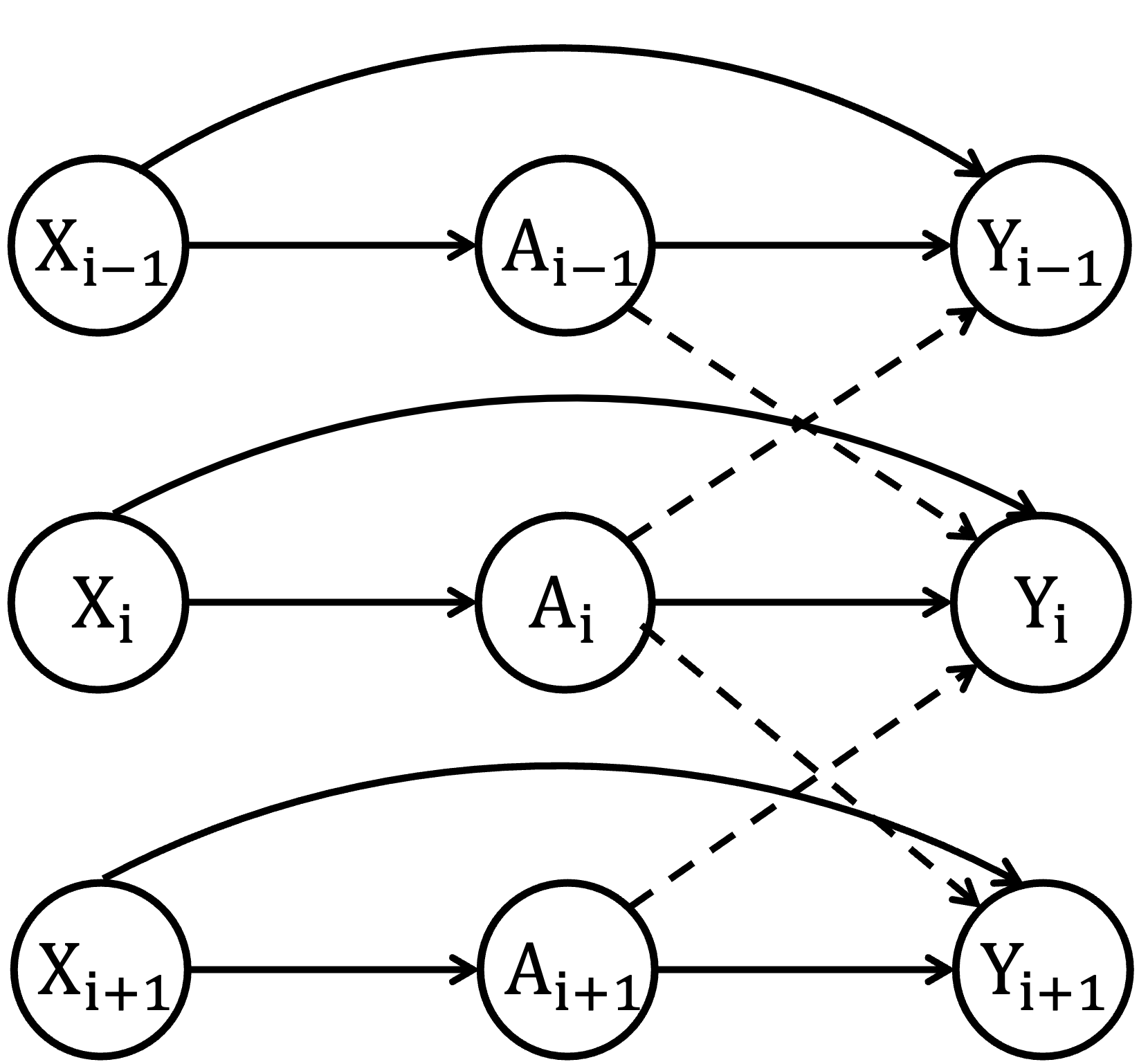}\label{fig: spatial network}}
    \quad
  \subfloat[General network interference (\ref{eq: network spatial})]
  {\includegraphics[width=0.23\textwidth]{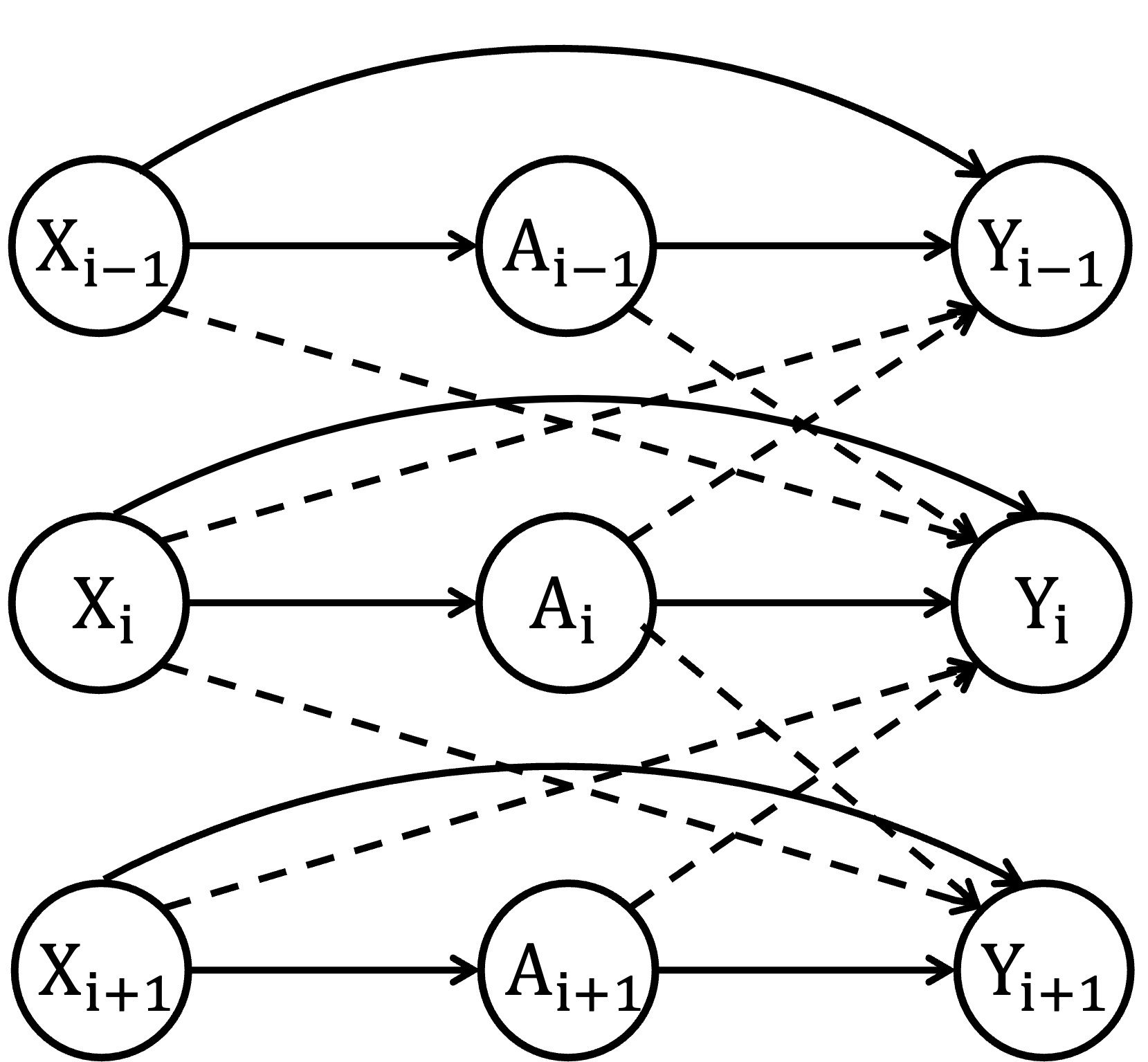}\label{fig: general spatial network}}
  \caption{Variable dependencies under different spatial interference structures. The dashed line represents the spatial interference effect. Each line represents a different region and horizontal adjacency depicts the proximity of regions.}
\end{figure}

\begin{assumption}[General Network Interference]
    The outcome within a particular region is conditionally independent of all other pairs of confounders and treatments, provided the pairs of confounders and treatments of its neighboring regions are given.
    \label{ass: neighbor}
\end{assumption}

We remark that Assumption \ref{ass: neighbor} slightly generalizes existing network inference by permitting the outcome of a region to be influenced by both the treatments and confounders of its neighboring regions; see Figures \ref{fig: spatial network} and \ref{fig: general spatial network} for illustrations. From a mathematical perspective, let $\Ncal(i)$ denote the index set of neighboring regions for region $i$. Let $X_{\Ncal(i)}$ and $A_{\Ncal(i)}$ represent the confounders and treatments of neighboring regions for region $i$, respectively. Then, according to Assumption \ref{ass: neighbor}, 
\begin{equation}
    Y_i= f_i(X_i,A_i, m_i(X_{\Ncal(i)}, A_{\Ncal(i)})) + \epsilon_i,
    \label{eq: network spatial}
\end{equation}
for some interference function $m_i$ that measures the interference effect.

Various methods have been proposed to model the function $m_i(\cdot, \cdot)$. A prominent method is the mean-field approach, in which $m_i(\cdot, \cdot)$ is defined by the average values of $X_{\Ncal(i)}$ and $A_{\Ncal(i)}$ \citep{yang2018mean,li2022random,shi2022}. This can be expressed as:
\begin{equation}
    \label{eq: meanfield}
    m_{\text{MF},i}( X_{\Ncal(i)}, A_{\Ncal(i)}) = \frac{1}{|\Ncal(i)|} \sum_{j\in\Ncal(i)}  \left[X_j ,   A_j\right].
\end{equation}  
The mean-field method greatly simplifies the interference structure by averaging the inputs from neighboring units. However, it also implies a strong parametric assumption, which could limit its flexibility. 
Given these considerations, it is reasonable to explore whether a more robust interference structure can be identified using minimal assumptions. This pursuit leads to the consideration of the permutation invariant assumption, offering a potentially more flexible and comprehensive approach to modeling spatial interference.

\subsection{PIE: A Permutation-Invariant Estimator with Interference Effect }
\label{sec: PIE}
We first introduce some definitions pertaining to permutation-invariant functions. 
\begin{definition}[Permutation operator]
\label{def: PO}
    Given an index set $\Ical = \{1,2,\ldots,N\}$ with $N$ integers, the permutation operator $\varPi$ is a one-to-one mapping $\varPi: \Ical \rightarrow \Ical$ 
    such that $\varPi_{\Ical}=\{\varPi(1),\varPi(2), \ldots, \varPi(N)\}$ is the resulting permutation of $\Ical$. 
    Let the symmetric group $\Scal_N$ denote the set containing all permutations. Apparently, 
    we have $|\Scal_N| = N!$.
\end{definition}
For instance, consider the set $\Ical=\{1,2,3\}$, which has six different permutations. Let $\varPi$ denote the permutation operator such that $\varPi(1)=3, \varPi(2)=1, \varPi(3)=2$. We have $\varPi_{\Ical}=\{3,1,2\}$. With the definition of permutation operator, we impose the permutation invariant assumption on mean outcome function.
\begin{assumption}[Permutation Invariance]
\label{ass:PI}
For any given region $i$, the mean-outcome function $f_i(X_i, A_i, m_i(X_{\Ncal(i)}, A_{\Ncal(i)}))$ exhibits Permutation Invariance with respect to its neighboring regions. This implies that for any permutation $\varPi: \Ncal(i) \rightarrow \Ncal(i)$, the following holds true:
\begin{equation*}
f_i(X_i, A_i, m_i(X_{\Ncal(i)}, A_{\Ncal(i)})) = f_i(X_i, A_i, m_i(X_{\varPi_{\Ncal(i)}}, A_{\varPi_{\Ncal(i)}})).
\end{equation*}
In essence, the outcome function remains unchanged regardless of the order in which the neighboring regions are considered, emphasizing a key characteristic of spatial invariance in the model. 
 \end{assumption}

This assumption is based on the observation that, from the perspective of the target region, all neighboring regions are indistinguishable. Therefore, the interference effect of neighbors on the center region depends solely on their features and is independent of the order. Assumption \ref{ass:PI} provides a problem-specific description of the permutation invariant property within the context of spatial causal inference. A more general definition will be formalized in Section \ref{sec:theory}.  
As an example, it is immediate to see $f_i$ is permutation invariant to neighboring regions with  $m_i(\cdot) = m_{\textrm{MF},i}(\cdot)$ in \eqref{eq: meanfield}. However, Assumption \ref{ass:PI} alleviates the parametric constraint imposed by the mean-field interference function, thereby enhancing its expressiveness. In the subsequent section, we will illustrate through theoretical and experimental analysis how this relaxation enhances expressivity.

Our objective is to develop an estimator that not only adheres to Assumptions \ref{ass: neighbor} and \ref{ass:PI} but also adeptly captures the complex nature of spatial interference. Research on permutation invariant interference structures is scant. The following theorem provides a closed-form expression for any such permutation invariant estimator (PIE) .

\begin{theorem}[Permutation Invariant Estimator (PIE)]\label{theo: PIE}
Assuming that Assumptions \ref{ass: neighbor} and \ref{ass:PI} hold, the mean outcome function $f_i$ can be accurately approximated by the following estimator, achieving any desired level of precision with the appropriate selection of the functions $\phi_i$ and $\psi_i$,
    \begin{align}
        \label{eq:sample}
        &\psi_i(X_i, A_i, m_{\text{PIE},i}(X_{\Ncal(i)}, A_{\Ncal(i)})) \nonumber \\
        with \quad &m_{\text{PIE},i}(X_{\mathcal{N}(i)},A_{\mathcal{N}(i)}) = \frac{1}{|\mathcal{N}(i)|}\sum_{j\in\Ncal(i)}\phi_i(X_j,A_j).
    \end{align}
\end{theorem}

 Theorem \ref{theo: PIE} suggests a versatile method for approximating PI mean outcome functions. The proposed PIE framework, with its focus on averaging over neighboring regions, inherently satisfies the permutation invariance criterion. The mean-field function in (\ref{eq: meanfield}) can be seen as a specific instance of $m_{\text{PIE}, i}$ when $\{\phi_i\}_i$ are identity functions.

It remains to specify $\{\phi_i\}_i$ and $\{\psi_i\}_i$. For practical application, we recommend to employ deep neural networks to parameterize these functions. Figure \ref{network} graphically illustrates the resulting architecture. The treatment and confounding variables from neighboring regions are concatenated into a single vector and input into a feedforward neural network $\phi_i$, capturing their intricate interrelations. Subsequently, to align with 
Assumption \ref{ass:PI}, these neighboring effects are averaged. The output is then combined with the central region's treatment-confounder vector and processed through another feedforward network $\psi_i$, resulting in the final mean outcome value. Section \ref{sec:theory} will demonstrate that this architecture also functions as a universal approximator, capable of precisely approximating any permutation-invariant interference effect function. Contrarily, traditional mean-field structures lack the versatility to effectively represent general permutation-invariant functions.

\begin{figure}      
\center{\includegraphics[width=0.5\linewidth]  {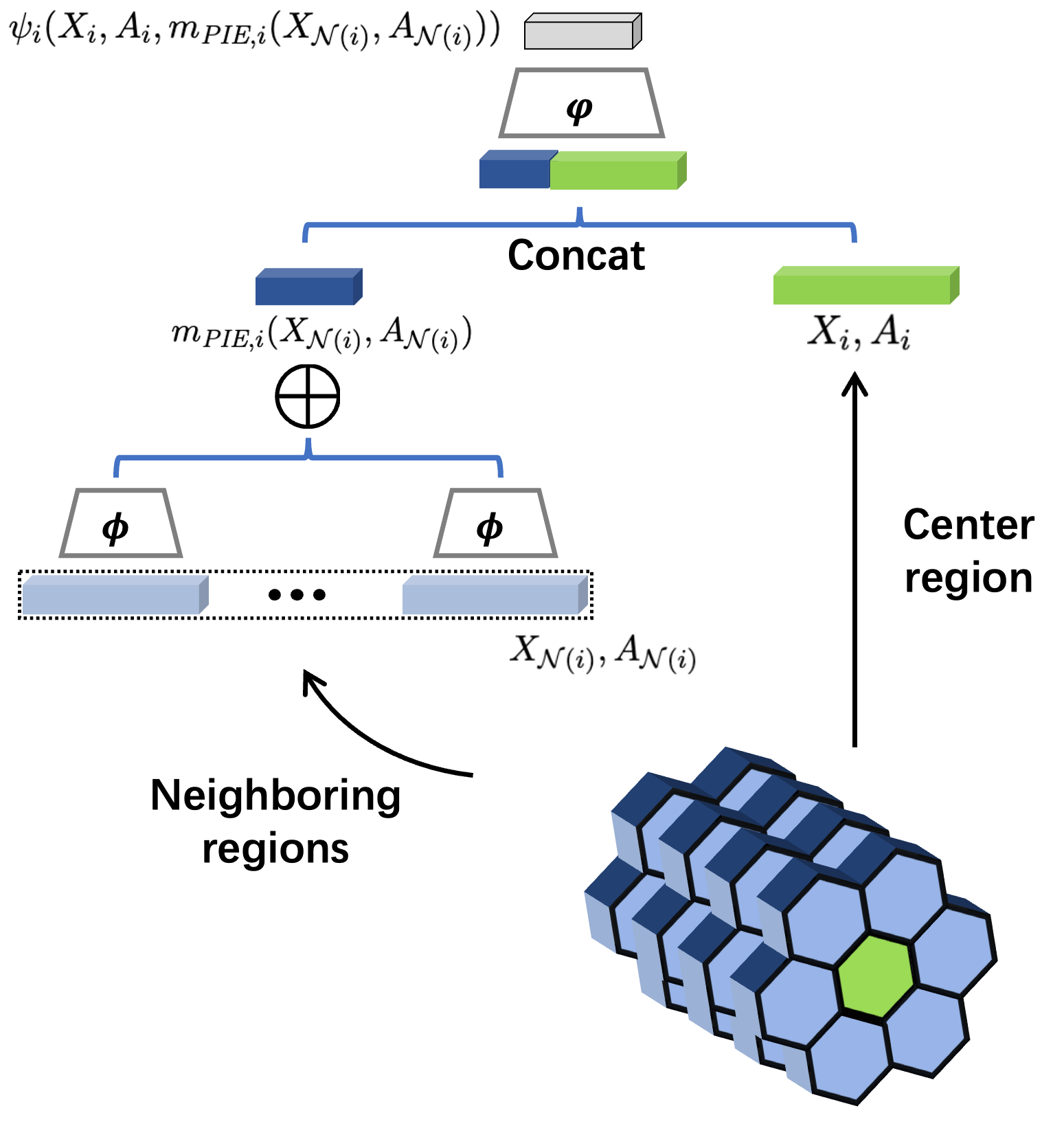}}   
	\caption{The proposed structure is depicted in a graphical visualization. In this representation, the hexagonal prism at the bottom-right corner symbolizes the confounder-treatment vector for a central region (colored green) and its six neighboring regions (colored blue). The vectors from these neighboring regions are simultaneously input into the same neural network, denoted by $\psi$. The aggregated output from this process forms the PIE interference effect function, $m_{\text{PIE}, i}$. Subsequently, the central region's vector is concatenated with $m_{\text{PIE},i}$. The final estimator is then obtained after this combined vector is processed through a feedforward neural network, labeled as $\phi$. } 
	\label{network}  
\end{figure}

To demonstrate the effectiveness of the proposed neural network structure, let us revisit our ride-sharing example. In this system, potential customers use the ride-sharing app to compare prices and decide whether to place an order. Each region's confounding vector $X_i$ includes variables like the total number of orders $O_i$ and the total number of available drivers $D_i$, reflecting the supply and demand dynamics of this marketplace. The system, following a specific dispatch policy, pairs orders with drivers, generating revenue $Y_t$ when an order is completed.

As previously mentioned, this ride-sharing system experiences significant spatial interference, as drivers often gravitate towards areas with more orders and fewer competing drivers. Additionally, the system may strategically assign move drivers to nearby regions with driver shortages, enhancing pickup rates and service coverage. This process, which involves assessing the supply-demand imbalance between neighboring regions, is discussed in detail in \citet{zhou2021graph}. The imbalance in each region, is typically a function of the total orders $O_i$ and drivers $D_i$ and often quantified as $M_{i,t} = |D_{i,t} - O_{i,t}|$ \citep{shi2022}. This results in an average mismatch  $|\Ncal(i)|^{-1}\sum_{j\in\Ncal(i)}|D_{j,t}-O_{j,t}|$ for the $i$th region.

 For an effective assessment of interference, the mean outcome function $f_i$ must accurately reflect this mismatch. The neural network-based PIE is able to approximate this mismatch with high precision. Particularly, when using the ReLU activation function, the proposed $m_{\text{PIE}, i}$ can perfectly replicate the average absolute difference without any approximation error. In contrast, a mean outcome function using the traditional mean-field interference effect can only model functions of $\sum_{j\in\Ncal(i)}D_{j,t}$ and $\sum_{j\in\Ncal(i)}O_{j,t}$. It is, however, not capable of effectively capturing the mismatch represented by $|\Ncal(i)|^{-1}\sum_{j\in\Ncal(i)}|D_{j,t}-O_{j,t}|$.

To conclude this section, here are some key highlights of the proposed estimator: 
\begin{itemize}
    \item 
{\bf Objective and Flexibility:} The primary goal of PIE is to accurately identify suitable mean outcome functions under the permutation invariance condition. The functions $\{\phi_i\}_i$ and $\{\psi_i\}_i$ are instrumental in capturing the complex interference structure, which may vary across different settings. Utilizing deep neural networks to parameterize these functions enables the application of advanced machine learning techniques in spatial causal inference, providing a robust framework for diverse environmental contexts.
\item {\bf  
Permutation Invariance and Extensibility:}  The effectiveness of our approach is anchored in the permutation invariance assumption. If this assumption does not hold, the PIE can be modified to align with necessary extensions. For example, instead of using a simple average in $\phi(X_j, A_j)$, a weighted average could be employed, with weights tailored to the specific characteristics of neighboring regions. This adaptability further enhances the applicability of PIE to a even wider range of scenarios. 
\end{itemize}

\subsection{Application to Policy Evaluation in Nondynamic Settings}
\label{sec: one-stage}
To demonstrate the versatility of the proposed architecture, we apply PIE to OPE in nondynamic settings. The observed data consists of the confounders-treatment-outcome triplets $\{(X_{i,j}, A_{i,j}, Y_{i,j}): 1\le i\le R, 1\le j\le S\}$ measured over time. In our ridesharing application, $\{(X_{i,j}, A_{i,j}, Y_{i,j}): 1\le i\le R\}$ corresponds to the data collected on the $j$th day. We assume these triplets are i.i.d. copies of $\{(X_i,A_i,Y_i): 1\le i\le R\}$ defined at the beginning of Section \ref{sec:nondynamic}. It is important to note that while there are strong temporal correlations within each day's data, it is reasonable to assume independence across different days, provided that the same treatment is implemented throughout each day. This assumption is partly based on the observation that order volumes typically decrease significantly between 1 am and 5 am. Consequently, we treat the observations of each day as new, independent realizations. In Section 3, we will study settings where different treatments are implemented across various time periods within each day.

We next introduce our estimand. In real-world applications, it is often necessary to understand the effects of implementing a specific policy denoted by $\pi=(\pi_1,\ldots,\pi_R)$. Each $\pi_i$ corresponds to a deterministic function of the confounding vector, indicating the treatment assignment for the $i$th agent. Specifically, on the $t$th day, the $i$th agent receives the treatment $\pi_i(X_{i,j})$. To simplify the presentation, we assume that all agents receive the same treatment assignment function, i.e., $\pi_1=\pi_2=\ldots=\pi_R$, which we denote as $\pi$. This assumption allows us to use a single symbol to represent the common mapping. However, it is worth noting that our approach can be easily extended to scenarios where the treatment assignment functions $\{\pi_i\}_i$ differ among agents, accommodating more complex settings.

We are interested in evaluating the average reward under $\pi$, given by 
\begin{eqnarray}\label{eqn:estimand}
   J(\pi)=\sum_{i=1}^R \mathbb{E}[\mathbb{E}(Y_i| \{A_p=\pi(X_p)\}_{p=1}^{R}, \{X_p\}_{p=1}^{R})],
\end{eqnarray}
Here, the initial expectation in \eqref{eqn:estimand} is with respect to the distributions of $\{X_p\}_{p=1}^{R}$, while the second expectation pertains to the conditional distribution of the outcome given all confounder-treatment pairs. 
Following the formulation of our estimand, we construct three OPE estimators utilizing the proposed PIE, corresponding to the value-based estimator, the importance sampling estimator and the doubly robust estimator, respectively. 

\subsubsection{Value-based Method}\label{subsection: de}
The value-based method involves a two-step procedure. Under Assumption \ref{ass: neighbor}, the conditional expectation of $Y_i$ is given by $f_i(X_i, A_i, m_i(X_{\mathcal{N}(i)}, A_{\mathcal{N}(i)}))$, so the initial step involves the estimation of the regression function $f_i$. In the second step, the obtained estimator $\widehat{f}_i$ is plugged in to value-based estimate of the policy value, i.e., 
\begin{eqnarray}
\label{one_vb}
    \widehat{J}_{\textrm{VB}}(\pi)=\frac{1}{S}\sum_{j=1}^S\sum_{i=1}^R \widehat{f}_i(X_{i,j},\pi(X_{i,j}), \widehat{m}_i(X_{\mathcal{N}(i),j}, \pi(X_{\mathcal{N}(i),j}))), 
\end{eqnarray} 
where we use the proposed PIE in Theorem \ref{theo: PIE} to parameterize $\widehat{f}_i$ and $\widehat{m}_i$, respectively. Recall that $X_{\mathcal{N}(i),j}$ denotes the vector obtained by concatenating $\{X_{p,j}\}_{p\in \mathcal{N}(i)}$ and the vector $\pi(X_{\mathcal{N}(i),j})$ is obtained by applying the policy $\pi$ componentwise to each element of $X_{\mathcal{N}(i),j}$. The parameters in $\widehat{f}_i$ and $\widehat{m}_i$ are estimated by minimizing the least square loss function,
\begin{eqnarray}
\label{eq: MSE-non-d}
(\widehat{f}_i,\widehat{m}_i)=\argmin_{(f_i,m_i)}\frac{1}{S}\sum_{j=1}^S [Y_{i,j}-f_i(X_{i,j},\pi(X_{i,j}), m_i(X_{\mathcal{N}(i),j},\pi(X_{\mathcal{N}(i),j)}))]^2.
\end{eqnarray}
The above formulation can be viewed as a regression problem, where the reward $Y_{i,j}$ is regressed on the tuple $(X_{i,j},\pi(X_{i,j}),X_{\mathcal{N}(i),j},\pi(X_{\mathcal{N}(i),j)})$ using the network structures proposed. It is worth mentioning that the value-based estimator may suffer from a significant bias when $\widehat{f}_i$ and $\widehat{m}_i$ are misspecified, rendering it less robust. This motivates us to consider the importance sampling method as an alternative approach. 

\subsubsection{Importance Sampling Method}\label{subsection:is}
The importance sampling method is motivated by the change-of-measure theorem which yields that
\begin{eqnarray*}
	\Mean^{\pi} Y_{i}=\Mean \Big[\underbrace{\frac{\mathbb{I}(A_{i}=\pi(X_{i})) \prod_{p\in \calN(i)}\mathbb{I}(A_p=\pi(X_p))}{\prob(  \cap_{p\in \calN(i)}  \{A_{p}=\pi(X_{p})\}\cap \{A_{i}=\pi(X_{i})\} |X_{i},X_{\calN(i)})}}_{\omega_i^{\pi}(X_i,X_{\calN(i)},A_i,A_{\calN(i)})}Y_{i} \Big],
\end{eqnarray*}
where $\omega_i^{\pi}$ denotes the importance sampling ratio. 
This suggests the following important sampling estimator for the policy value $J(\pi)$, 
\begin{eqnarray*}
	\frac{1}{S}\sum_{j=1}^S \sum_{i=1}^{R}\widehat{\omega}_i^{\pi}(X_{i,j},X_{\calN(i),j},A_{i,j},A_{\calN(i),j})Y_{i,j}, 
\end{eqnarray*}
for some estimated ratio $\widehat{\omega}_i^{\pi}$. 

Nevertheless, the above estimator 
exhibits substantial variance introduced by the importance sampling ratio. To elucidate this limitation, we consider the case where the overall ratio is the product of ratios associated with each individual region in the neighborhood set. As the number of neighborhoods for one unit increases, the variances in the individual ratios accumulate multiplicatively. Consequently, the variance of the overall ratio grows exponentially fast with respect to the number of neighborhoods, leading to an estimator with exceedingly high variance. 

To address this limitation, we notice that under Assumption \ref{ass: neighbor}, $Y_i$ is conditionally independent of $(X_{\calN(i)},A_{\calN(i)})$ given $A_i$, $X_i$ and the interference effect function $m_i(X_{\calN(i)},A_{\calN(i)})$. Utilizing this conditional independence property, we obtain that
\begin{eqnarray*}
    \mathbb{E} [\omega^{\pi}_i(X_i,X_{\calN(i)},A_i,A_{\calN(i)})Y_i]=\mathbb{E} [\overline{\omega}_i^{\pi}(X_i,A_i,m_i(X_{\calN(i)},A_{\calN(i)}))Y_i],
\end{eqnarray*}
where 
\begin{eqnarray*}
\overline{\omega}^{\pi}_i(X_i,A_i,m_i(X_{\calN(i)},A_{\calN(i)}))=\mathbb{E} [\omega^{\pi}_i(X_i,X_{\calN(i)},A_i,A_{\calN(i)})| X_i,A_i,m_i(X_{\calN(i)},A_{\calN(i)})],
\end{eqnarray*}
whose closed-form is given by
\begin{eqnarray*}
	\frac{\mathbb{I}(A_{i}=\pi(X_{i}), m_i(X_{\calN(i)},A_{\calN(i)})=m_i(X_{\calN(i)},\pi(X_{\calN(i)})))}{\prob(A_{i}=\pi(X_{i}), m_i(X_{\calN(i)},A_{\calN(i)})=m_i(X_{\calN(i)},\pi(X_{\calN(i)}))|X_{i},X_{\calN(i)})}. 
\end{eqnarray*}
In practice, the likelihood of the two equalities in the numerator perfectly coincide is quite low. 
Instead of insisting on exact equality, we adopt a different approach to reduce the variance. 
Specifically, we only require that the Euclidean difference between the left-hand side and right-hand side of each equation is smaller than a predefined $\tau$. This yields the following importance sampling ratio, denoted by $\overline{\omega}_{i,\tau}^{\pi}(X_i,A_i,m_i(X_{\calN(i)},A_{\calN(i)}))$, 
\begin{eqnarray*}
    \label{eq:select_data}
    \frac{\mathbb{I}(|A_{i}-\pi(X_{i})|<\tau, |m_i(X_{\calN(i)},A_{\calN(i)})-m_i(\pi(X_{\calN(i)},X_{\calN(i)}))|<\tau)}{\prob(|A_{i}-\pi(X_{i})|<\tau, |m_i(X_{\calN(i)},A_{\calN(i)})-m_i(X_{\calN(i)},\pi(X_{\calN(i)}))|<\tau|X_{i},X_{\calN(i)})}. 
\end{eqnarray*} 
The resulting importance sampling estimator is given by 
\begin{eqnarray*}
    \widehat{J}_{\textrm{IS}}(\pi)=\frac{1}{S}\sum_{j=1}^S \sum_{i=1}^R\widehat{\omega}_{i,\tau}(X_{i,j},A_{i,j},m_i(X_{\calN(i),j},A_{\calN(i),j}))Y_{i,j},
\end{eqnarray*}
where $\widehat{\omega}_{i,\tau}$ denotes some consistent estimator for $\overline{\omega}_{i,\tau}^{\pi}$. The consistency of $\widehat{J}_{\textrm{IS}}(\pi)$ relies on the estimation accuracies of $\widehat{\omega}_{i,\tau}$ and $\widehat{m}_i$. 

\subsubsection{Doubly Robust Method}\label{subsection:dr}
The doubly robust estimator is given by
\begin{eqnarray*}
    \widehat{J}_{\textrm{DR}}(\pi)=\widehat{J}_{\textrm{VB}}(\pi)+\frac{1}{S}\sum_{j=1}^S \sum_{i=1}^R\widehat{\omega}_{i,\tau}(X_{i,j},X_{\calN(i),j},A_{i,j},A_{\calN(i),j})[Y_{i,j}-\widehat{f}_i(X_{i,j},A_{i,j},\widehat{m}_i(X_{\calN(i),j},A_{\calN(i),j}))].
\end{eqnarray*}
By definition, $\widehat{J}_{\textrm{DR}}(\pi)$ can be decomposed into two terms. The first term is essentially the value-based estimator. The second term, is a ``centered" importance sampling estimator where we replace the response $Y_i$ by the residual $Y_{i,j}-\widehat{f}_i(X_{i,j},A_{i,j},\widehat{m}_i(X_{\calN(i),j},A_{\calN(i),j}))$. The purpose of adding the second term is to debias the bias of $\widehat{J}_{\textrm{VB}}(\pi)$ resulting from the estimation of $\{\widehat{f}_i\}_i$. In particular, when $\{\widehat{m}_i\}_i$ is consistent, it can be shown that the consistency of $\widehat{J}_{\textrm{DR}}(\pi)$ relies only on the consistency of $\{\widehat{f}_i\}_i$ or $\{\widehat{\omega}_{i,\tau}\}_i$. Meanwhile, it can be shown that $\widehat{J}_{\textrm{DR}}(\pi)$ converges at a much faster rate than these nuisance functions themselves  \citep{chernozhukov2018double}.

So far, we have presented our approach to estimating the mean outcome in non-dynamic settings. However, interference effects are not confined solely to the spatial realm; they also manifest temporally. The treatments applied at present can impact not only immediate outcomes but also future ones, through their influence on the distribution of forthcoming confounding variables. This necessitates an extension of spatial interference models to account for temporal interference.

\section{Spatio-temporal Interference in Dynamic Settings}

In this section, we will integrate concepts from multi-agent reinforcement learning (MARL) to address additional temporal interference, treating the process as an MDP. This allows us 
model how actions taken at one point in time can affect outcomes at subsequent points. 
We will introduce several estimators designed to accurately evaluate the average treatment effect in scenarios where both spatial and temporal interference are present.

\subsection{Interference Modelling via MDP}

In a dynamic setting, the independence assumption among confounder-treatment-outcome triplets at different time points is no longer valid. Thus, we encounter both spatial and temporal interference, necessitating the introduction of new assumptions to appropriately model both interference structures. 
Spatial interference continues to be described by the spatial interference function $m$, as in the non-dynamic setting. Temporal interference, however, is twofold. Firstly, an individual's response is influenced by the confounders and treatments in neighboring regions. Secondly, the present confounders are affected by the history of past confounder-treatment interactions.

Mirroring Assumption \ref{ass: neighbor} from the non-dynamic context, we approach temporal interference by imposing specific conditional independence assumptions which relate to the future confounder, current confounder-treatment-response triplet, and their historical context. 
For ease of explanation, let $X_{\cdot, t}$ represent the set $\{X_{i,t}\}_i$ at each time $t$, and similarly for $A$ (treatment) and $Y$ (response). The following assumptions detail the conditional independence properties in this dynamic setting.

\begin{assumption}[Conditional mean independence assumption (CIMA)]
     There exist functions $\{r_i\}_i$ such that for each region $i$ (where $1 \leq i \leq R$), the following equation is almost surely satisfied:
   \begin{equation*}
        \mathbb{E}(Y_{i,t}|A_{\cdot,t},X_{\cdot,t},\{A_{\cdot,k}, X_{\cdot,k}, Y_{\cdot,k}\}_{0\leq k<t})=r_i({A}_{i,t},{X}_{i,t},m_i({A}_{\Ncal(i),t},{X}_{\Ncal(i),t})), 
    \end{equation*}
    where  $m_i(\cdot)$ represents the interference effect function. 
\end{assumption}

\begin{assumption}[Markov assumption (MA)]
    There exist Markov transition kernel $\Pcal_i(\cdot)$ such that for any $t\geq 0$ and $x\in\bbs$, we have almost surely that 
    \begin{equation*}
        \Pr\{{X}_{i,t+1}=x|A_{\cdot, t},X_{\cdot,t},\{A_{\cdot, k},Y_{\cdot, k},X_{\cdot,k}\}_{0\leq k<t}\} = \Pcal_i(x;{A}_{i,t},{X}_{i,t},m_i({A}_{\Ncal(i),t},{X}_{\Ncal(i),t})). 
    \end{equation*}
\end{assumption}

To verify these assumptions, one can employ the approach proposed by \cite{shi2020does} and test the conditional independence using appropriate statistical tests \citep[see e.g.,][]{chen2012testing,shah2020hardness,kim2022local,polo2023conditional}. Additionally, these assumptions can be satisfied by concatenating measurements over several decision points and selecting the optimal order. By combining the assumption on spatial interference (Assumption \ref{ass: neighbor}) with these temporal interference assumptions, we establish the assumption on spatio-temporal interference effects.

In a similar vein to Assumption \ref{ass:PI} in a static setting, we propose that both the response function $r_i(\cdot)$ and the transition kernel $\Pcal_i(\cdot)$ are unaffected by the order of neighboring regions. This means for any given region $i$ and any permutation $\varPi: \Ncal(i) \rightarrow \Ncal(i)$, the following holds true:
\begin{align*}
    r_i(X_{i,t}, A_{i,t}, m_i(X_{\Ncal(i),t}, A_{\Ncal(i),t}))=r_i(X_{i,t}, A_{i,t}, m_i(X_{\varPi_{\Ncal(i)},t}, A_{\varPi_{\Ncal(i)},t})), \\
    \Pcal_i(X_{i,t}, A_{i,t}, m_i(X_{\Ncal(i),t}, A_{\Ncal(i),t}))=\Pcal_i(X_{i,t}, A_{i,t}, m_i(X_{\varPi_{\Ncal(i)},t}, A_{\varPi_{\Ncal(i)},t})).  
\end{align*} 
With the interference assumptions in place, conventional reinforcement learning models can be adapted to form PIE estimators. The details and discussions will be presented in the following section.

\subsection{Application to Offline Policy Evaluation}

In the dynamic setting, the observed data consists of $S$ i.i.d samples of $\{(X_{i, t}, A_{i, t}, Y_{i, t}): 1\le i\le R, 1\le t\le T\}$ generated under a specific behavior policy, each of which is a confounders-treatment-outcome triplet measured over time. Consequently, the observed data can be represented as $\{(X_{i, t, j}, A_{i, t, j}, Y_{i, t, j}): 1\le i\le R, 1\le t\le T , 1 \le j\le S \}$, where $j$ indicates the index of the sample. In our illustrative ridesharing example, the index 
$i$ denotes the $i$th spatial unit, $j$ corresponds to the 
$j$th day, and $t$ represents the $t$th time interval within each day. This dynamic model, as compared to the nondynamic one discussed in Section \ref{sec:nondynamic}, facilitates the analysis of scenarios where the company may implement different treatments on each day.

In this section, we consider a known deterministic stationary policy $\pi=(\pi_1,\ldots,\pi_N)$, where each $\pi_i$ is a deterministic decision rule mapping $X_{i,t}\in \R^M$ to $\{0,1\}$ and remains constant across all time periods $t$. 
We assume the conditions of CMIA and MA are met, and extend the concept of average reward to a dynamic context by introducing the discounted cumulative outcome. This is calculated for a given discount factor $0\le \gamma\le 1$ over a time horizon of $T$ steps, as shown in the equation: 
\begin{equation}
J_{\gamma}(\pi)= \sum_{i=1}^R \sum_{t=1}^{T} \gamma^t \mathbb{E}[\mathbb{E}(Y_{i,t}| A_{\cdot,t}=\pi(X_{\cdot,t}), X_{\cdot,t})]. 
\label{eq: dynamic estimand}
\end{equation} 
Here, the first expectation accounts for the distribution of the Markov transition kernels $\{\Pcal_i\}_i$ and the distribution of the initial confounders $X_0$. The second expectation pertains to the conditional distribution of the outcome given the confounder-treatment pairs.

We adapt the three estimators discussed in Section \ref{sec: one-stage} to the dynamic  setting. These are the value-based estimator, the importance sampling estimator, and the doubly robust estimator. Each is tailored to estimate the discounted cumulative outcome function $J_{\gamma}(\pi)$.

\subsubsection{Value-based Method}\label{subsection: vb}

To introduce the value-based estimator, we first simplify the notations. For $0\leq t \leq T$, let 
\begin{eqnarray}
 \label{M}
 (X_{\calN^*(i),t},A_{\calN^*(i),t}) = (X_{i,t}, A_{i,t},X_{\calN(i),t},A_{\calN(i),t}),\nonumber\\
M_i(X_{\calN^*(i),t},A_{\calN^*(i),t})=(X_{i,t}, A_{i,t},m_i(X_{\calN(i),t},A_{\calN(i),t})),   
\end{eqnarray}
where   $\calN^*(i)=\calN(i)\cup \{i\}$ denotes the index of the central and neighboring regions. 
Similarly, we use $\widehat{M}_i$ to denote the corresponding estimated $M_i$ by substituting  $m_i$ with its estimator $\widehat{m}_i$. 
Next, we introduce the Q-function as the expected discounted accumulative outcome given current confounder-treatment pair:
\begin{align}
    Q^{(i,k)}_{\pi}(X_{\cdot,k},A_{\cdot,k}) &= \mathbb{E}[Y_{i,k} + \sum_{t=k+1}^{T} \gamma^{t-k} \mathbb{E}(Y_{i,t}|  X_{\cdot,t},\pi(X_{\cdot,t})) \vert X_{\cdot,k},A_{\cdot,k}] \nonumber \\
    & = \mathbb{E}[Y_{i,k} + \sum_{t=k+1}^{T} \gamma^{t-k} \mathbb{E}\left(Y_{i,t}|  X_{\Ncal^*(i), t},\pi(X_{\Ncal^*(i), t})\right)  \Big|  X_{\Ncal^*(i), k} ,A_{\Ncal^*(i), k}],
    \label{eq: def-Q}
\end{align}
where the second equation comes from assumptions CMIA and MA and we rewrite $Q_{\pi}^{(i,k)}(X_{\cdot,k},A_{\cdot,k})$ to $Q_{\pi}^{(i,k)}(X_{\Ncal^*(i), k},A_{\Ncal^*(i), k})$ due to  the conditional independence. The discounted cumulative outcome $J_{\gamma}(\pi)$ can be represented using the Q-function:
\begin{equation}
    J_{\gamma}(\pi) = \mathbb{E}\left[\sum_{i=1}^R Q^{(i,1)}_{\pi}( X_{\Ncal^*(i), 1},A_{\Ncal^*(i), 1}) \bigg\vert A_{\Ncal^*(i), 1} \sim \pi_i(X_{\Ncal^*(i), 1})\right],
    \label{eq: V-Q}
\end{equation}
where the expectation is taken with respect to the  distribution of $X_{\cdot,1}$. This motivates us to first estimate the Q-function and then   construct the value based estimator $\widehat{J}_{\gamma}(\pi)$.

The Q-function can be estimated with the fitted Q-evaluation algorithm \citep{le2019batch}. For each region $1\le i\le R$, we first set $\widehat{Q}_{\pi}^{(i,T+1)} = 0$ and for $t = T,  \ldots, 1$, we iteratively solve:
\begin{eqnarray}
(\widehat{Q}_{\pi}^{(i,t)},  \widehat{M}_i) \leftarrow \arg\min_{(Q, M)} \sum_{j=1}^S \left[Y_{i,t,j}+\gamma \widehat{Q}^{(i,t+1)}_{\pi}\left(\widehat{M}_i(X_{\calN^*(i),t+1,j},\pi(X_{\calN^*(i),t+1,j}))\right) \right.
\\  \left. - Q(M(X_{\calN^*(i),t,j},A_{\calN^*(i),t,j}))\right]^2,
\label{opt_qvb}
\end{eqnarray} 
Utilizing the relationship between Q-function and $J_{\gamma}$ in \eqref{eq: V-Q}, we can construct   $\widehat{J}_{\gamma}^{VB}(\pi)$ as follows: 
\begin{eqnarray}
\widehat{J}_{\gamma}^{VB}(\pi) 
= \frac{1}{S}\sum_{j=1}^S\sum_{i=1}^R \widehat{Q}^{(i,1)}_{\pi}(\widehat{M}_i(\pi_{i}(X_{\calN^*(i),1,j}),X_{\calN^*(i),1,j}))).
\label{eq: VB}
\end{eqnarray}

\subsubsection{Importance Sampling Method}\label{subsection: is}
The importance sampling estimator is constructed based on the marginal density ratio proposed in \citet{liu2018breaking}:
\begin{align*}
\mu_{\pi}^{(i)}(M_i(X_{\calN^*(i),k},A_{\calN^*(i),k}))&=\frac{\sum_{t = 1}^T \gamma^t p_{\pi,t}(M_i(X_{\calN^*(i),k},A_{\calN^*(i),k}))/\sum_{t=0}^T \gamma^t}{ p_{i}(M_i(X_{\calN^*(i),k},A_{\calN^*(i),k}))} \\ 
&\approx (1-\gamma)\frac{\sum_{t = 1}^T \gamma^t p_{\pi,t}(M_i(X_{\calN^*(i),k},A_{\calN^*(i),k}))}{p_{i}(M_i(A_{\calN^*(i),k},X_{\calN^*(i),k}))} ,
\end{align*} 
where $p_{\pi,t}$ denotes the probability mass function of confounder-treatment pair $(X_{\calN^*(i),t},A_{\calN^*(i),t})$ under $\pi$ at time step $t$, 
and $p_{i}$ denotes the stationary distribution  
under the behavior policy. 
The above approximation  requires $T\rightarrow \infty$, a condition that is reasonable in ridesharing applications where $T$ typically equals 24 or 48. This is considered moderately large relative to the sample size $S$ (number of days). 

Note that 
$\mu_{\pi}^{(i)}$ is the ratio between the average discounted probability distribution of $M_i(X_{\calN^*(i)},A_{\calN^*(i)})$ under the target policy and its data distribution under the behavior policy. Using the change of measure theorem, 
$J_{\gamma}(\pi)$ can be represented as:
\begin{equation*}
    J_{\gamma}(\pi) = \mathbb{E}_{X,A \sim p}\left[\frac{1}{(1-\gamma)} \sum_{i=1}^R \mu_{\pi}^{(i)}(M_i(X_{\calN^*(i), k},A_{\calN^*(i), k }))Y_{i,k}\right]
\end{equation*}
Its estimator $\widehat{\mu}_{\pi}^{(i)}$ can be computed by solving the following minimax optimization, 
\begin{eqnarray*}
	\arg\min_{\mu\in \Omega}\sup_{f\in \mathcal{F}} \{\Mean L(\mu,f)\}^2,
\end{eqnarray*}
for some function classes $\Omega$ and $\mathcal{F}$, where $L(\mu,f)$ equals
\begin{eqnarray*}
	\Mean_{A_{\calN^*(i),t+1}\sim \pi(X_{\calN^*(i),t+1})} \mu(M_i(X_{\calN^*(i),t},A_{\calN^*(i),t}))\{\gamma f(M_i(X_{\calN^*(i),t+1},A_{\calN^*(i),t+1}))\\-f(M_i(X_{\calN^*(i),t},A_{\calN^*(i),t}))\}+(1-\gamma) \Mean_{A_{\calN^*(i),0}\sim \pi(X_{\calN^*(i),0})} f(M_i(X_{\calN^*(i),1},A_{\calN^*(i),1})). 
\end{eqnarray*}
The expectation in the above expression can be approximated by the sample mean. To simplify the
calculation, we may choose $\mathcal{F}$ to be a reproducing kernel Hilbert space (RKHS). This yields a closed form expression
for $\sup_{f\in \mathcal{F}} \{\Mean L(\mu,f)\}^2$ \citep[see e.g.,][]{liu2018breaking,uehara2020minimax,kallus2022efficiently}. 
Notice that in the above optimization, although $M_i$ is unknown, we can replace it by  $\{\widehat{M}_i\}_{i=1}^{R}$ derived in \ref{opt_qvb}. 
Then resulting estimator is given by
\begin{eqnarray*}
    \widehat{V}_{1}^{IS}(\pi) = \frac{1}{ST(1-\gamma)} \sum_{i=1}^R \sum_{j=1}^S\sum_{t=1}^T \widehat{\mu}_{\pi}^{(i)}(\widehat{M}_i(X_{\calN^*(i),t,j},A_{\calN^*(i),t,j}))Y_{i,t,j}.
\end{eqnarray*}

\subsubsection{Doubly Robust Method}\label{subsection: dr}
Combining the direct estimator and the importance sampling estimator yields the following doubly-robust estimator
\begin{eqnarray*}
    &&\widehat{V}_1^{DR}(\pi) = \widehat{J}_{\gamma}^{VB}(\pi) + \frac{1}{ST(1-\gamma)} \sum_{i=1}^R \sum_{j=1}^S \sum_{t=1}^{T} \widehat{\mu}_{\pi}^{(i)} (\widehat{M}_i( X_{\calN^*(i),t,j},A_{\calN^*(i),t,j})) \\
    &\times& \{Y_{i,t,j}+\gamma \widehat{Q}^{(i,t+1)}_i(\widehat{M}_i( X_{\calN^*(i),t+1,j},\pi(X_{\calN^*(i),t+1,j})))-\widehat{Q}^{(i,t)}_i(\widehat{M}_i( X_{\calN^*(i),t,j},A_{\calN^*(i),t,j})) \}.
\end{eqnarray*}
Similar to the doubly robust estimator in non-dynamic settings (see Section \ref{subsection:dr}), the first term is the value based estimator where the second term differs from the importance sampling estimator by substituting the outcome $Y$ with the  temporal difference residual $Y+\gamma Q^{(t+1)}- Q^{(t)}$. 
Either when the value based or important sampling estimator is consistent, the doubly robust estimator is consistent. 

\section{Theoretical Results}\label{sec:theory}

This section provides a detailed analysis of the statistical properties of  PIE. 
Initially, we explore the theoretical properties of PIE in a generic supervised learning setting. This includes an examination of their consistency, convergence rate, and minimax optimality. Subsequently, we apply these theoretical results to evaluate the sample efficiency of value-based PIE  in both non-dynamic (see Section \ref{subsection: de}) and dynamic settings (see Section \ref{subsection: vb}).

\subsection{Theoretical properties of PIE}

We establish the theoretical properties of PIE in a generic supervised learning setting. Let $X = [X_1, X_2, \ldots, X_N] \in \R^{M \times N}$ denote the predictor and $Y \in \R$ denote the outcome variable. Our goal is to estimate the conditional mean function $f^*(x)=\Mean (Y|X=x)$. Toward that end, 
we restrict our attentions to the class of permutation invariant estimators $\Fcal$ 
defined as
\begin{equation*}
\label{eq: naive-PIE}
\Fcal = \left\{f:\mathbb{R}^{M\times N}\mapsto\mathbb{R}\bigg|f(X) = \psi\left(\sum_{q=1}^N \varphi(X_q)\right),\varphi,\psi\in\Fcal_{\text{ReLU}}\right\}, 
\end{equation*}
where the detailed definition of  ReLU neural networks is given in Appendix. Consider the following empirical risk minimization (ERM) predictor 
\begin{equation}
\widehat{f} = \arg \min_{f\in \Fcal} \frac{1}{n}\sum_{j=1}^n [Y_{(j)}-f(X_{(j)})]^2,
\label{eq: MSE}
\end{equation}
 where $\{(X_{(j)},Y_{(j)}):1\le j\le n\}$ denote $n$ i.i.d. copies of $(X,Y)$. 
 

We introduce the following assumptions

\begin{definition}[Permutation Invariant Functions]
\label{def: PI}
A function $f$ is defined to be permutation invariant function if    for any permutation $\varPi \in \mathcal{S}_{N}$.
\begin{equation*}
        f(X) = f(X_{\varPi(1)}, X_{\varPi(2)}, \ldots, X_{\varPi(N)}),
\end{equation*}
where the permutation operator $\varPi$ is defined in Definition \ref{def: PO}.
\end{definition}



\begin{assumption}[Permutation Invariance]
    Assume the target function $f^*$ is a permutation invariant function as defined in Definition \ref{def: PI}.
\end{assumption}

Before establishing the consistency of the PIE, we need the following assumptions on the boundedness and continuity of target function: 
\begin{assumption}[Boundedness]
    \label{ass: bounded}
Let $\|f\|_{\infty}=\sup_{x}|f(x)|$ and $F$ denote the envelope function of $\mathcal{F}$, i.e., $F(x)=\sup_{f\in \Fcal} |f(x)|$. 
Assume there exist some constant $J>0$ such that
\begin{eqnarray*}
\max\left\{\|f^*\|_{\infty},\|F\|_{\infty},|Y|\right\}
    \leq J.
\end{eqnarray*} 
\end{assumption}


\begin{assumption}[Continuity]
    \label{ass: continues}
Assume the target function $f^*$ is continuous function defined on $[0,1]^{M\times N}$.
\end{assumption}
The assumption of a continuous and bounded target function is commonplace in nonparametric settings. 
The response variable $Y$ is often assumed to be bounded in the RL literature \citep[see e.g.,][]{fan2020theoretical}. 

The following theorem specifies the consistency of PIE.
\begin{theorem}[Consistency of PIE]
\label{theory: consistency}
Under Assumption \ref{ass: bounded}, the  ERM estimator $\widehat{f}$ is consistent for the target function $f^*$ in the sense that as $n\to\infty$,
\begin{equation*}
    \|\widehat{f}-f^*\|_{L_2(P_X)}^2 \to 0,
\end{equation*}
in which the convergence occurs with probability at least $1 - 2/n$. Here, $\|\cdot\|_{L_2(P_X)}$ denotes the $L_2$ norm associated with the probability distribution function of $X$ (denoted by $P_X$), i.e., $\|f\|^2_{L_2(P_X)}=\int_x f(x)^2  d P_X$.
\end{theorem}
To further establish the convergence rate of  $\widehat{f}$, an additional smoothness assumption is required.

\begin{assumption}[Smoothness of target function]
    \label{ass: Sobolev}
    Consider a Sobolev ball $\Wcal_J^{\beta,\infty}([0, 1]^{M\times N})$, characterized by smoothness parameter $\beta \in \N_{+}$, is defined as follows, 
    \begin{equation*}
        \Wcal_J^{\beta,\infty}([0, 1]^{M\times N})) = \left\{ f:  \max_{\p: |\p| \leq \beta} ess\sup_{x \in [0, 1]^{M\times N}}  |D^{\p} f(x)| \leq J \right\},
    \end{equation*}
	where $\p = (p_{1,1}, \ldots, p_{M, N})$, $|\p| = p_{1,1} + \ldots + p_{M,N}$, and $D^{\p} f$ denotes  the weak derivative.
 We assume that $f^*$ lies in the Soblev ball, i.e., $f^*\in \Wcal_J^{\beta,\infty}([0, 1]^{M\times N}))$.
\end{assumption}  

The smoothness assumption implies that all derivatives of $f^*$ up to order $\beta - 1$ must be Lipschitz continuous. This assumption is frequently employed to establish the convergence rate of neural network-type estimators \citep[see e.g.,][]{approx, Tengyuan} and supports the effective approximation of smooth functions by ReLU networks. 
The following theorem establishes the convergence rate of the PIE.

\begin{theorem}[Convergence rate of PIE]
\label{theory: convergence}
Under Assumptions  \ref{ass: bounded} and \ref{ass: Sobolev}, the finite sample convergence rate of $\widehat{f}$ in approximating  $f^*$ is described by 
\begin{equation*}
    \| \widehat{f} - f^*\|^2 _{L^2(P_X)} = O_p(n^{-\frac{2\beta}{MN+2\beta}}). 
\end{equation*}
\end{theorem}

Theorems \ref{theory: consistency} and \ref{theory: convergence} introduce pioneering theoretical findings regarding the statistical estimation properties of PIEs. Previous research has primarily focused on the representational capabilities of PIEs and suffered from some limitations:

\begin{itemize}
    \item \textbf{Theorem 2 in \citep{zaheer2017deep}}: This theorem establishes that any continuous function \( g(X) \) is permutation invariant if and only if it can be expressed as \( g(X) = \psi(\sum_{x\in X} \phi(x)) \). However, this result was proven for \( X \in \R^N \), and its applicability to \( X \in \R^{M\times N} \) — crucial for processing confounder-treatment vectors in neighboring regions — is less straightforward.
    \item \textbf{Theorem 3.2 in \citep{pine}}: Extending the input domain from vectors to matrices, it proves that any permutation invariant function \( g(X) \) for \( X \in \R^{M\times N} \) can be approximated with arbitrary precision by a function in the form \( g(X) = \psi(\sum_{x\in X} \phi(x)) \), with \( x \in \R^M \). However, this theorem does not detail the precise rate of convergence.
\end{itemize}
To the best of our knowledge, our work is the first to derive the convergence rate of PIEs and extend its application to value-based methods in OPE; see Sections \ref{subsection:nd} and \ref{subsection:dy}.

It is noteworthy that standard neural network-type estimators, even without leveraging the permutation invariant property, can achieve a convergence rate of 
$O(n^{-2\beta/(MN+2\beta)})$ \citep[see e.g.,][]{Tengyuan}. To explore further, we have rigorously derived the minimax rates for PI estimators in Theorem \ref{theo: minimax}. Our findings reveal that no estimator, even with the incorporation of the permutation invariance assumption, can exceed a convergence rate of 
$O(n^{-2\beta/(MN+2\beta)})$ in a minimax sense. This 
suggests that the cannot enhance the convergence rate beyond this established bound. 


\begin{assumption}
\label{ass: bounded-p}
Let $p_X$ denote the probability density function of $X$. There exist two positive constants $\underline{P_X}$ and $\overline{P_X}$ such that 
\begin{equation*}
0<\underline{P_X} \leq p_X(x) \leq \overline{P_X}<\infty, 
\end{equation*}
for any $x\in [0,1]^{M\times N}$.
\end{assumption}

Building on this assumption, we establish the minimax optimality of the PIE for estimating permutation invariant functions.

\begin{theorem}[Minimax Optimality of PIE]
\label{theo: minimax}
Given the distribution of the confounding variable $X \in \R^{M\times N}$ adhering to Assumption \ref{ass: bounded-p}, the minimax risk for approximating any permutation invariant target function that meets Assumptions \ref{ass: bounded} and \ref{ass: Sobolev} is subject to the following lower bound:
\begin{equation}
\inf_{\widehat{f} \in \Acal_n}\sup_{f \in P^{\beta, M \times N}} |\widehat{f} - f|^2_{L^2(P_X)} \geq C(\beta, M, N)n^{-\frac{2\beta}{2\beta + MN}}, 
\end{equation}
where $C(\beta, M, N)$ is a constant depending only on $M,N$ and $\beta$, $\Acal_n$ represents the space of all measurable functions of the confounding variable in $L_2(P_X)$, and $P^{\beta, M\times N}$ includes all functions that satisfy Assumptions \ref{ass: bounded} and \ref{ass: Sobolev}.
\end{theorem}
Finally, we remark that despite that employing the permutation invariance assumption cannot improve the minimax rate of convergence, it does significantly reduce the variance of the estimator. This variance reduction has been substantiated in our numerical studies, as seen in Sections \ref{sec:numerical}.

\subsection{Nondynamic Results}
\label{subsection:nd} 

We establish convergence rate for the OPE estimator in a nondynamic setting in this subsection. 
Accordingly, the notation used here aligns with that in Section \eqref{sec: one-stage}. 


Recall that under Assumption \ref{ass:PI}, the conditional mean of the response of the $i$th region is denoted by $f_i({X}_i,{A}_i, m_i( {X}_{\Ncal(i)},{A}_{\Ncal(i)}))$.
Let $\Gcal$ represent the PIE function class defined in Theorem \ref{theo: PIE} which approximates the mean-outcome function $f_i$. 


For the nondynamic setting, we impose similar boundedness and smoothness assumptions to Assumptions \ref{ass: bounded} and \ref{ass: Sobolev}. 

\begin{assumption}
\label{ass: bounded-non}
For any $i=1,\cdots,R$ and  any $\tilde{f}\in \Gcal$, there exist a constant $J$ that:
\begin{align*}
 \max\{\|f_i\|_{\infty},|Y_i|,\|\tilde{f}\|_{\infty}\} \leq J.
\end{align*}
Furthermore, it is assumed that each $f_i$ resides within the Sobolev space $\Wcal_J^{\beta,\infty}([0, 1]^{(M+1)\times(N+1)})$.
\end{assumption} 

We extend the results of Theorem \ref{theory: convergence} to derive the convergence rate for estimator in Theorem \ref{theo: PIE}. 

\begin{corollary}[Convergence Rate of  PIE]
\label{co: PPIE rate}
Under Assumption \ref{ass:PI}, Smoothness Assumption that  $f_i\in\Wcal_J^{\beta,\infty}([0, 1]^{(M+1)(N+1)}))$ for any $i=1,\cdots,R$. We derive $\widehat{f}_i\in\Gcal$ in estimation scheme shown in \eqref{eq: MSE}, the PIE  exhibits consistency and the following finite sample convergence rate:
\begin{equation}
        \quad \|\widehat{f}_i-f_i\|^2_{L^2(P_X)} = O_p(S^{-\frac{2\beta}{(M+1)(N+1)+2\beta}}).
\end{equation}
\end{corollary}

To facilitate the derivation of the convergence rate of the OPE estimator $\widehat{J}_{\text{VB}}(\pi)$, we employ a cross-fitting scheme as outlined in \citep{chernozhukov2018double} and \citep{athey2021policy}, the detail of cross-fitting scheme can be found in Appendix.
With above preparation, the following corollary is establishes the convergence rate of OPE estimator $\widehat{J}_{VB}(\pi)$.

\begin{corollary}
\label{coro: nondynamic}
Under Assumption \ref{ass: bounded-non}, the finite sample convergence rate of $\widehat{J}_{\textrm{VB}}(\pi)$ in approximating the true value $J(\pi)$ is described by
\begin{equation}
|\widehat{J}_{\textrm{VB}}(\pi) - J(\pi)| \leq O(RS^{-\frac{\beta}{(M+1)(N+1) + 2\beta}}) + e,
\end{equation}
where this holds for any $e > 0$ with a probability of at least $1 - \exp(-\frac{Se^2}{8J^2})$.
\end{corollary}



\subsection{Dynamic Results}
\label{subsection:dy}






To establish the statistical properties of $\widehat{J}_{\gamma}^{VB}(\pi)$ in dynamic settings,
we begin by defining the transition operator $\Pcal^{\pi}$:
\begin{align}
    &(\Pcal^{\pi}f^{(t)}_i)\left(X_{\calN^*(i),t,j},A_{\calN^*(i),t,j}\right)\nonumber \\=  &\mb{E}\left\{ f^{(t+1)}_i\left(X_{\calN^*(i),t+1,j},\pi(X_{\calN^*(i),t+1,j})\right)\bigg|X_{\calN^*(i),t,j},A_{\calN^*(i),t,j}\right\},
\end{align}
where $M_i$ has been defined in \eqref{M}.
The Bellman operator $\Tcal^{\pi}$ can therefore be defined as
\begin{eqnarray*}
    (\Tcal^{\pi}f^{(t)}_i)\left(X_{\calN^*(i),t,j},A_{\calN^*(i),t,j}\right) = r_{i}\left(X_{\calN^*(i),t,j},A_{\calN^*(i),t,j}\right) + \gamma  (\Pcal^{\pi}f^{(t)}_i)\left(X_{\calN^*(i),t,j},A_{\calN^*(i),t,j}\right)
\end{eqnarray*}

Based on the Bellman operator, we redefine the Smoothness Assumption.
\begin{assumption}
\label{ass: Tass} 
For any $i = 1,\cdots,R$, it is assumed that:
\begin{itemize}
    \item $r_i\in \Wcal_J^{\beta,\infty}([0, 1]^{(M+1)\times(N+1)})$.
    \item For any $f_i\in\Gcal$, we have $\Tcal f_i\in\Wcal_J^{\beta,\infty}([0, 1]^{(M+1)\times(N+1)})$.
\end{itemize}
\end{assumption}


Assumption \ref{ass: Tass} addresses the smoothness of the mean-outcome function and target function in training. 
Recall $p_{i}$  denote the stationary probability measure  under behavior policy, and  $p_{\pi,t}$ denote probability measure  under policy $\pi$ at time step $t$. Distribution shift caused by  policy distinction can be controlled by the following assumption

\begin{assumption}[Concentration]
\label{ass: concentration}
The concentration coefficient at time $t$ is defined as
\begin{equation}
\kappa_t^2 = \E_{p_i}\left\vert\frac{dp_{\pi, t}}{dp_i}\right\vert^2,
\end{equation}
and it is assumed that for any $1 \leq t \leq T$, there exists a constant $\kappa$ such that $\kappa_t \leq \kappa$.
\end{assumption}

The concentration assumption is standard in batch reinforcement learning \citep[see e.g.,][]{fan2020theoretical,chen2019information,xie2020q,xie2021batch}. It essentially stipulates that the policy used in the dataset should be sufficiently diverse to ensure adequate coverage of the $(X \times A)$ probability space, a condition easily met in our application as the data follows a random policy. 
Moreover, the the cross-fitting scheme is employed in this subsection as well.
With these assumptions in place, we can now extend to the OPE estimator in dynamic setting.

\begin{corollary}
\label{coro: dynamic}
Under Assumptions \ref{ass: Tass} and \ref{ass: concentration}, for any $e > 0$, the finite sample convergence rate of $\widehat{J}_{\gamma}^{VB}(\pi)$ in approximating the true value $J_{\gamma}(\pi)$ is:
\begin{align}
|\widehat{J}_{\gamma}^{VB}(\pi) - J_{\gamma}(\pi) | \leq O(\kappa TRS^{-\frac{\beta}{(N+1)(M+1) + 2\beta}}) + e,
\end{align}
with a probability of at least $1-\exp(-\frac{Se^2}{8J^2})$.
\end{corollary}

The bound in Corollary \ref{coro: dynamic} differs from that in Corollary \ref{coro: nondynamic} due to the added complexity of  temporal interference in the dynamic setting, resulting in a linear dependence on $T$. However, the convergence rate relative to the sample size $S$ remains consistent with the non-dynamic setting. Importantly, $\widehat{J}_{VB}(\pi)$ converges with high probability.

\section{Numerical Study}\label{sec:numerical}

In this section, we conduct comprehensive numerical experiments to demonstrate the superior empirical performance of our proposed method compared to established alternatives. 
Section \ref{simu:one_stage} focuses on a comparison between our proposed method and the mean-field approach in a nondynamic setting. 
In Section \ref{simu:multi_stage}, we expand our analysis to dynamic settings. 
Finally, Section \ref{simu:real_data} shifts the focus to a synthetic environment derived from a real-world ride-sharing platform dataset. 


\subsection{Nondynamic Simulation}
\label{simu:one_stage}

In this subsection, we focus on the nondynamic setting  introduced in Section \ref{sec:nondynamic}. We begin by dividing the entire space into $R=l\times l$ non-overlapping spatial units, where $l$ is chosen from $\{5,10\}$. The data are generated as follows:
\begin{enumerate}
    \item \textbf{Confounder}: Each confounder vector $X_i$ is two-dimensional, represented by $(U_i,V_i)$, generated under the CAR model detailed in Section 2.1 of \citet{reich2021review};
    \item \textbf{Treatment}: $\{A_i\}_i$ are independently sampled from a Bernoulli distribution with a success probability of 0.5.
    \item \textbf{Response}: $\{Y_i\}_{1\le i\le R}$ are generated according to the following additive model: 
\begin{eqnarray} \label{simulationModel}
    Y_i = 0.1\times (A_{i}\beta_1 +A^{\top}_{\Ncal(i)}\beta_{2,1:|\Ncal(i)|}) + g(X_i,A_i)\gamma_1 +  \gamma_{2,1:|\Ncal(i)|}^{\top} g(X_{\Ncal(i)},A_{\Ncal(i)}) +\epsilon_{i},
\end{eqnarray}
with the following components:
\begin{itemize}
    \item \textbf{Linear component coefficients}:
    \begin{itemize}
        \item $\beta_1 \in \mathbb{R}$: Regression coefficient for the treatment variable $A_i$.
        \item $\beta_{2,1:|\Ncal(i)|}$: Subvector of $\beta_2 \in \mathbb{R}^R$, comprising the first $|\Ncal(i)|$ elements, corresponding to the regression coefficients for the treatments of the $i$th unit's neighboring regions.
    \end{itemize}

    \item \textbf{Nonlinear function and coefficients}:
    \begin{itemize}
        \item $g$: A potentially nonlinear scalar function applied to confounder-treatment pairs
        \item $g(X_{\Ncal(i)},A_{\Ncal(i)})$ applies the scalar function $g$ element-wise to each pair in $\{(X_j, A_j): j \in \Ncal(i)\}$, producing an $|\Ncal(i)|$-dimensional vector.
        \item $\gamma_1 \in \mathbb{R}$: Regression coefficients associated with the function $g$ applied to $(X_i, A_i)$.
        \item $\gamma_{2,1:|\Ncal(i)|}$: Subvector of $\gamma_2 \in \mathbb{R}^R$, consisting of the first $|\Ncal(i)|$ elements, corresponding to the regression coefficients for the treatments of the $i$th unit's neighboring regions.
    \end{itemize}
    \item \textbf{Residuals}: $\{\epsilon_i\}_{1\le i\le R}$: Independent standard normal random variables. 
\end{itemize}
\end{enumerate}
Notice that under Model \eqref{simulationModel}, the interference effect is captured by the terms $0.1A_{\Ncal(i)} \beta_{2,1:|\Ncal(i)|} + \gamma_{2,1:|\Ncal(i)|}^{\top} g_i(X_{\Ncal(i)}, A_{\Ncal(i)})$. 
We evaluate the performance of our method under three distinct scenarios by varying the link function $g$ and the regression coefficients: 

\begin{enumerate}
    \item{} \textbf{Linear Setting}: This setting is relatively straightforward, as both  the treatment and interference effects are linear. The mean-field method is expected to be effective in this linear model. Specifically, we set $g(X_{i},A_{i}) = U_{i}+V_{i}$, which is independent of the treatment. The parameters are chosen as $\gamma_1 = \beta_1 = 1.5$ and $\gamma_2 = \beta_2 = (-0.5, -0.5, -0.5, -0.5, \ldots)$.
    \item{} \textbf{Nonlinear Setting I}:  
This scenario incorporates an interaction term between the confounder and the treatment, violating the linear structure and challenging the mean-field method. However, our proposed method can accurately model this interaction, adhering to the permutation invariance assumption.
Specifically, we set $g(X_{i}, A_{i})$ to $A_{i} \times U_{i}$. 
The parameters are given as $\gamma_1 = \beta_1 = 1.5$ and $\gamma_2 = \beta_2 = (-0.5, -0.5, -0.5, -0.5, \ldots)$. 
    
    \item{} \textbf{Nonlinear Setting II}:  
    This is a more complex scenario where $\beta_2$ and $\gamma_2$ are not constant vectors, deviating from both permutation invariance and linearity assumptions. This setting tests the robustness of the mean-field method and our proposed method under model mis-specification.
Specifically, we set $g(X_i, A_i) = A_i \times U_i$. The parameters are given as $\gamma_1 = \beta_1 = 1.5$, and $\gamma_2 = \beta_2 = (-0.2, -0.8, -0.2, -0.8, \ldots)$. 
\end{enumerate}   
Additionally, we consider a deterministic linear target policy $\pi$ such that for any $X_i$, $\pi(X_i)=1$ if and only if $U_i\kappa+V_i(1-\kappa)>0.5$ for some $\kappa \in [0,1]$. We vary $\kappa\in \{0.2, 0.5, 0.8\}$ to investigate the performance with different target policies. 
We also 
fix the number of days $S$ to 100.  

Figure \ref{one_results} presents 
the results of our proposed value-based estimator (denoted as PIE) and the baseline value-based estimator using mean-field approximation (denoted as Mean-field). In the linear setting, both methods show comparable performance, with our method achieving a marginal improvement in accuracy. However, the mean-field method struggles in the nonlinear settings, failing to produce consistent causal estimators due to the violation of the linearity assumption. In contrast, our estimator exhibits significantly lower mean squared errors (MSEs), maintaining robust performance even when the permutation invariance assumption is not fully met. 

\begin{figure}[h]   
\center{\includegraphics[width=0.75\linewidth]  {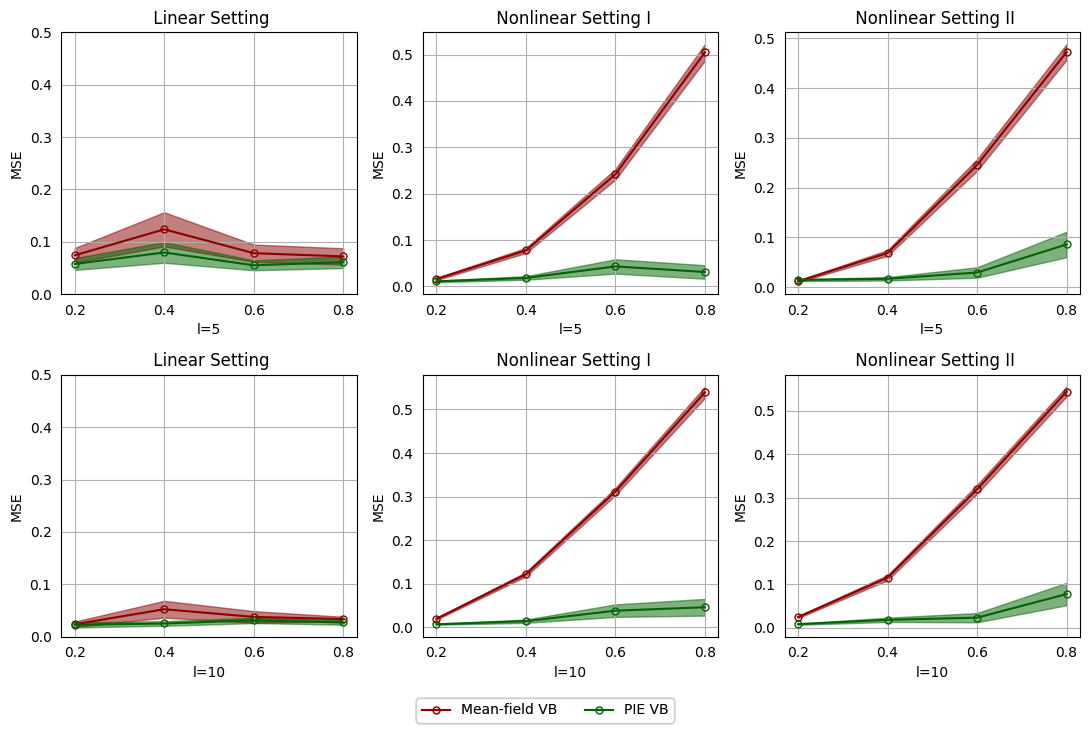}}   
\caption{Nondynamic simulation results: Mean Squared Errors (MSEs) of various policy value estimators are aggregated over 50 simulation replications. The top panels display results for $l=5$, while the bottom panels are for $l=10$. The left panels correspond to the linear setting, the middle panels to nonlinear setting I, and the right panels to nonlinear setting II.}
\label{one_results}  
\end{figure}

\subsection{Dynamic Simulation}
\label{simu:multi_stage}
In this subsection, we design a synthetic environment to simulate the dynamics of order dispatching within a ridesharing platform. We analyze a map grid of size \( l \times l \), with \( l \) taking values from the set \{5, 10, 15\} to represent different scales of the operational environment. This results in \( R = l^2 \) spatial units. The data generation process is outlined as follows:

\begin{enumerate}
    \item \textbf{Confounder}: For each spatial unit \( i \), at time \( t \), on day \( j \), we construct a five-dimensional observation \( X_{i,t,j} \), comprising:
    \begin{enumerate}
        \item \textbf{Order counts} (\( O_{i,t,j} \)): Generated by first drawing a mean \( \mu_i \) from a uniform distribution ranging between 40 and 180. Then sampling from a normal distribution with mean \(\mu_i\) and standard deviation \( 1 \). The order counts, indicative of the intrinsic demand within each region, are not altered by treatments. The actual order counts post-treatment are calculated using the formula \(O_{i,t,j}^* = O_{i,t,j} + 0.3 A_{i,t,j}O_{i,t,j}\). 
        
        \item \textbf{Connectivity factor} (\( C_{i} \)): The connectivity factor, reflecting a grid's road condition, is generated following a uniform distribution ranging from 0.1 to 1. A higher connectivity factor facilitates more rapid redistribution of drivers. 
        \item \textbf{Number of neighboring grids }(\(\Ncal(i)\)).
        \item \textbf{Driver counts} (\( D_{i,t,j} \)): Initially set 130 drivers per grid. Redistribution of these drivers to neighboring grids is determined by the actual orders counts, past drivers counts, and the connectivity factor of both the central grid and its neighboring grids. Specifically, this redistribution process adheres to the methodology outlined in Algorithm \ref{alg: driver}.
        \item \textbf{Mismatch rate} (\( M_{i,t,j} \)): Calculated as \( 0.9 [1 - |D_{i,t,j} - O^*_{i,t-1,j}| / (1 + D_{i,t,j} + O^*_{i,t-1,j})] + 0.1 M_{i,t-1,j} \), reflecting higher values when driver and order counts are closely aligned. 

    \end{enumerate}
    \item \textbf{Treatment} (\( A_{i,t,j} \)): Represents whether a discount is offered for orders in the \( i \)-th grid at time \( t \) on day \( j \), independently sampled from a Bernoulli distribution with a success probability of 0.5.
    \item \textbf{Response} (\( Y_{i,t,j} \)): Computed as \( M_{i,t+1,j}^2 \min(D_{i,t+1,j}, O^*_{i,t,j}) - 2|D_{i,t+1,j} - O^*_{i,t,j}|\).
\end{enumerate}

\SetKwComment{Comment}{/* }{ */}

\begin{algorithm}
\caption{Driver transition process }\label{alg: driver}
$\forall ~ i,t,j ~~ V_{i,t,j} \gets 0$\;
\For{$i\leq N$}{
    $\Delta_{i,t,j} = |D_{i,t-1,j} - O^*_{i,t-1,j}|$\Comment*[r]{Calculate the surpass of drivers}
    \For{$k\in \Ncal(i)$}{
         $C_{i-k} = \min\{C_{i}, C_{k}\} $\Comment*[r]{Calculate bi-connectivity}
        $V_{i,t,j} = V_{i,t,j} - C_{i-k}(\Delta_{i,t,j}-\Delta_{k,t,j})$ \Comment*[r]{Update transition vectors}
        $V_{k,t,j} = V_{i,t,j} + C_{i-k}(\Delta_{i,t,j}-\Delta_{k,t,j})$ \;
    }
} 
$D_{i,t,j} = D_{i,t-1,j} + V_{i,t,j}/\Ncal(i)\ $\Comment*[r]{Update the driver numbers}
\end{algorithm}



 Data span $S=200$ days, each with $T=40$ time points. We evaluate top-$Q$ policies that subsidize the $Q$ spatial units with highest average number of orders. These units receive treatment 1, while others receive treatment 0. The discount factor $\gamma$ is set to 0.9. We employ Monte Carlo simulations to estimate the oracle policy values, and implement the proposed value-based (VB) and doubly robust (DR) methods detailed in Sections \ref{subsection: vb} and \ref{subsection: dr}, comparing them against mean-field versions. Their MSEs, for various $l$ and $Q$ combinations, are shown in Figure \ref{multi_results}. It can be seen that our VB and DR estimators yield much smaller MSEs than their mean-field counterparts, highlighting the effectiveness of the proposed neural network architecture in capturing complex interference structures.

\begin{figure}[htb]   
\center{\includegraphics[width=0.75\linewidth]  {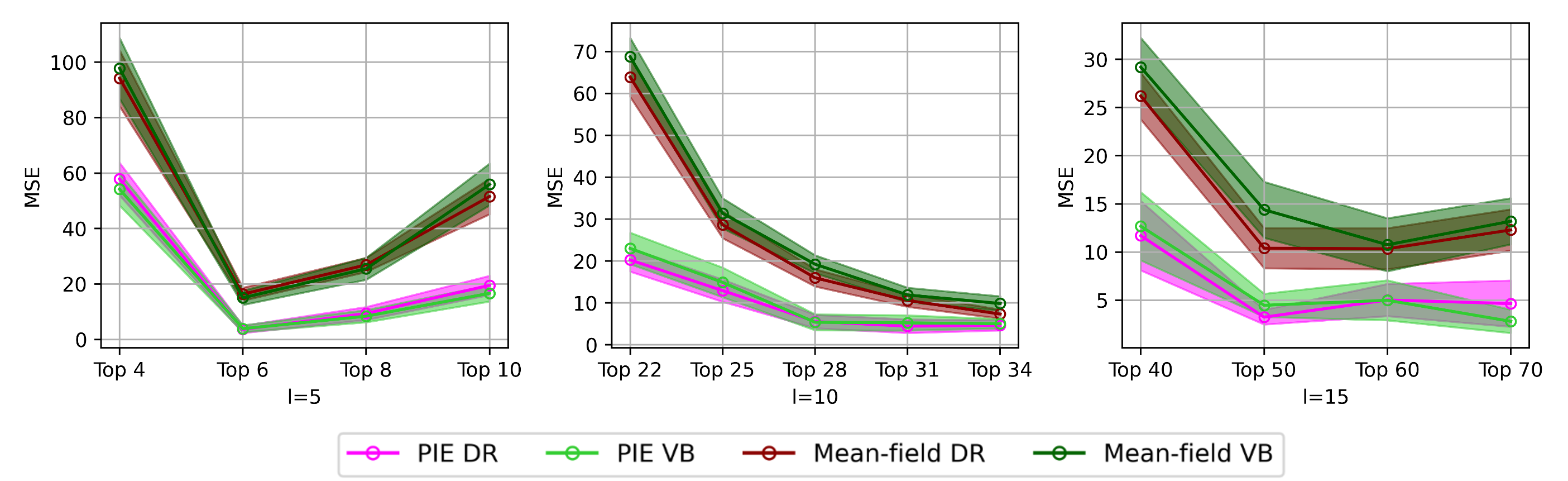}}   
\caption{Dynamic simulation results: MSEs of VB and DR estimators corresponding to different combinations of $l$ and $Q$.}
\label{multi_results}  
\end{figure}

\subsection{Real Data-based Simulation}
\label{simu:real_data}
In this subsection, we leverage a historical dataset from a globally renowned ridesharing company to develop a simulator, aimed at evaluating our proposed method. This dataset encompasses a five-day period in a particular city, capturing drivers' trajectories, order requests, idle drivers' movements, and their status of being online or offline. We follow \citet{tang2019deep} to simulate key dynamics of real-world ridesharing markets. The process involves:

\begin{enumerate}
    \item \textbf{Spatial Division}: The city is divided into \( R = 85 \) hexagonal regions, providing a detailed spatial framework for the simulation.
    \item \textbf{Time Partitioning}: Each day, spanning from 4 am to 12 pm, is segmented into numerous slices, with each slice lasting 2 seconds. 
\end{enumerate}

At each simulation's start, drivers are distributed across the city based on their empirical distribution from the offline dataset. The simulator, during each time slice, updates the states of drivers and orders through the following steps:

\begin{enumerate}
    \item Drivers assigned to orders decide whether to accept these based on probabilities calculated by a pre-trained LightGBM model, taking into account driver and order characteristics.
    \item Idle drivers, not assigned to orders, are directed to specific locations according to the historical idle movement data.
    \item Drivers subjected to repositioning follow the platform's instructions, determined by a pre-trained repositioning algorithm.
    \item Drivers engaged with accepted orders move to pick-up locations, collect passengers, and head to the destinations.
    \item The driver pool is dynamically updated, reflecting new drivers entering the city and existing drivers going offline.
\end{enumerate}
The order update process in the simulator comprises two main components:
\begin{enumerate}
    \item[(I)] Generation of new orders based on historical data, influenced by the passenger-side subsidy policy under evaluation. This policy offers discounts to passengers placing orders within certain spatial units at specific times, thereby increasing order volume in those areas.
    \item[(II)] Processing and dispatching of both existing unassigned and new orders, following a dispatch algorithm detailed in \citep{tang2019deep}.
\end{enumerate}

For evaluation, we set the immediate outcome as the gross merchandise value (GMV), sample treatments from a Bernoulli distribution (with success probability \( p = 0.5 \)), and run the simulator \( S \) times to generate an offline dataset where \( S \) varies among \{4, 7, 14\}. $T$ is fixed to $120$. We use the total numbers of orders and drivers in each region at each time as the time-varying confounding vector.

Analyzing this dataset poses multiple challenges: 
\begin{enumerate}
    \item[(I)] The underlying dynamics demonstrate significant variability throughout the day and across distinct spatial units, thus exhibiting both temporal nonstationarity and spatial heterogeneity;
    \item[(II)] The Markov test developed by \citep{shi2020does} reveals that the data does not conform to the Markov property. 
\end{enumerate}
To address the first challenge, we include the location of each region and the timestamp in the set of confounding variables. Previous studies, such as \citep{xu2018large}, have shown that a region's proximity to the city center and whether the timestamp corresponds to rush hour are crucial factors in estimating the value function. To account for spatial heterogeneity, we employ one-hot encoding based on the region's index to encapsulate spatial information. 
Regarding temporal nonstationarity, we consider two encoding schemes: one utilizing one-hot encoding based on the current hour (denoted by ``hour"), and the other dividing a day into five periods separated by the morning peak (7-9am) and afternoon peak (5-7pm) times (denoted by ``rush"). 
To overcome the second challenge related to the non-Markov nature of the data, we reconstruct the time-varying confounder by concatenating the past $L$ measurements to satisfy the Markov assumption. We consider two choices for $L$, corresponding to lengths of $4$ and $8$.

Similar to Section \ref{simu:multi_stage}, 
we set the target policy to the top-$Q$ policy that assigns treatments based on the order demand and driver availability. In our experiment, the degree of mismatch is quantified by the difference in the number of orders and drivers in each region. We then apply the subsidying policies to the top $Q$ regions exhibiting the highest degree of mismatch. 
The parameter $Q$ determines the size of the treated regions, and we consider various values: $Q=10,15,20,25,$ and $30$. The discounted factor is fixed to 0.98. 
The MSEs of the proposed value-based, doubly robust estimators and their mean-field counterparts are reported in Figure \ref{didi_results}, with different combinations of encoding schemes, Markov orders $L$, days of experiment $S$, and the number of treated regions $Q$ .

We draw several conclusions: 
\begin{enumerate}
    \item[(i)] The performance of the mean-field method is highly dependent on different encoding schemes and the number of trajectories, exhibiting substantial variation. In certain circumstances, it performs extremely poorly. For instance, when $(S, L, Q)=(7, 8, 15)$ under one-hot encoding, MSEs larger than 1000 are observed;
    \item[(ii)] The proposed methods consistently perform well across the majority of scenarios. They especially excel in conditions where the mean-field method struggles to achieve effective results;
    \item[(iii)] When the mean-field function is mis-specified, both the density ratio and the value function are impacted. This could partially explain why the mean-field DR method underperforms in comparison to the mean-field VB method in certain situations; see e.g., the subplots in row 1 column 3 and row 2 column 4;
    \item[(iv)] Our proposed method shows strong performance when used in conjunction with either DR or VM, showcasing the precision of the proposed interference effect function estimator.
\end{enumerate}

\begin{figure}[t]   
    \center{\includegraphics[width=\linewidth]  {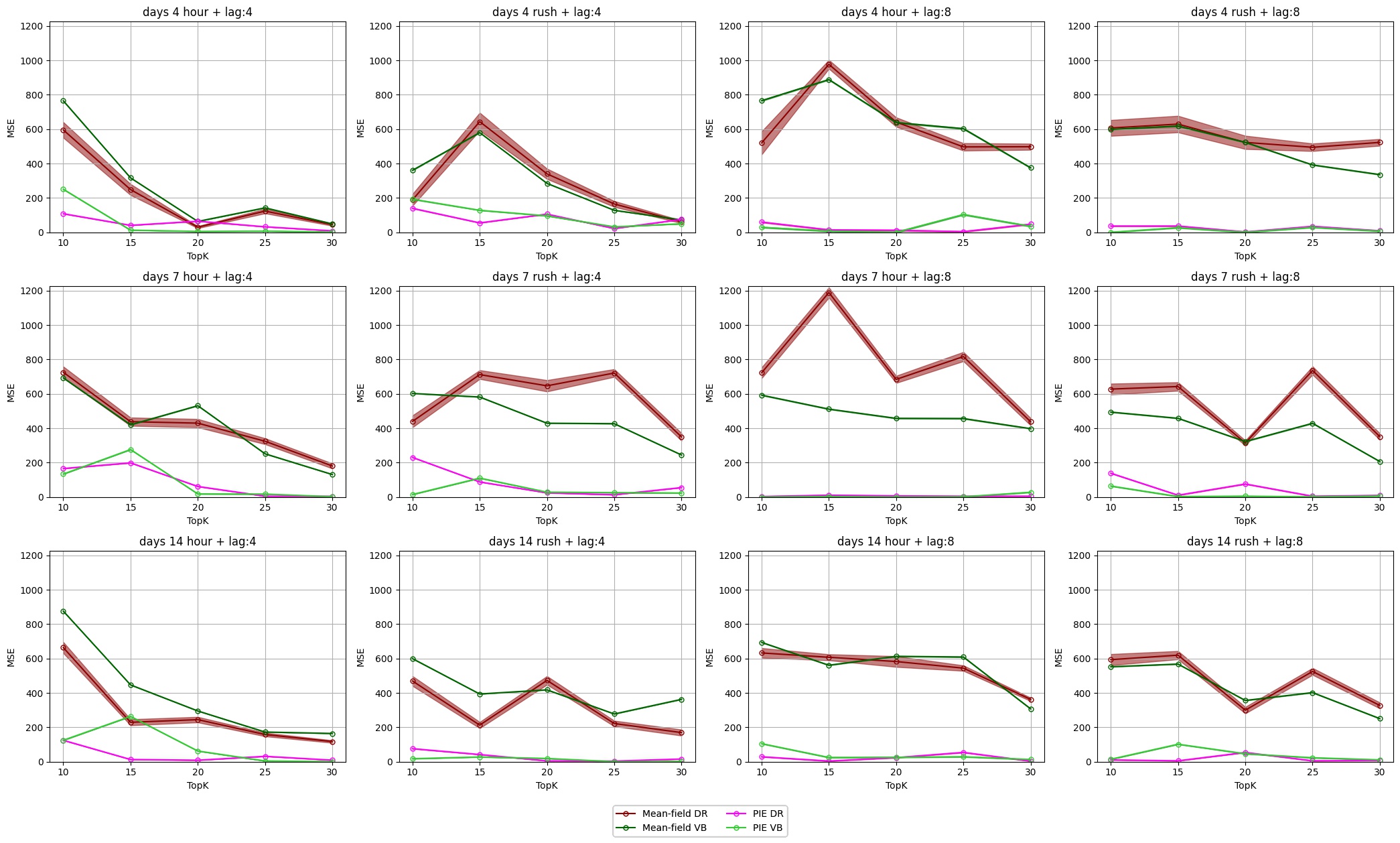}}   
    \caption{MSE Comparison in Real Data-Based Simulations. This figure presents the performance of proposed estimators and their mean-field counterparts in the simulated ridesharing environment. Each row's subplot corresponds to a different number of trajectories, denoted as $S_0$. Columns distinguish between various encoding schemes and Markov orders. The terms 'days', 'lag', 'hour', and 'rush' indicate the number of trajectories $S_0$, the Markov order $L$, and the temporal encoding schemes, respectively. All spatial encoding is performed using the 'one-hot' scheme. In the legend, 'Mean-field VB' refers to the Value-based method combined with the mean-field method, and 'PIE VB' denotes the Value-based method implemented with the PIE method. 'DR' stands for the Doubly Robust method. Each experiment was replicated 21 times, and the confidence bands represent the variance among these replications.
    }
    \label{didi_results}  
    \end{figure}


\section{Discussion}

We have introduced a novel causal deepsets model that effectively addresses the challenge of spatial interference. 
The versatility of our framework opens up a plethora of applications in diverse fields. In urban planning, it could be pivotal in modeling the impact of policy interventions on traffic flow and public transportation. Environmental studies could leverage this model to assess the effects of policy changes on pollution patterns across different regions. In public health, our methodology could be instrumental in analyzing the spread of diseases and the efficacy of intervention strategies over varying spatial scales.

Meanwhile, our theoretical analysis, which establishes the consistency, convergence rate, and minimax optimality of the PIE estimator, is a cornerstone of our research, 
not only validate the robustness of our methodology but also provide a solid foundation for future advancements in the field. They also encourage the exploration of PIE in other contexts where permutation invariance is a key characteristic, potentially broadening the scope of its applicability.
Several important avenues for extension and further research emerge from our study: 

\underline{(i) Spatial-Temporal Interference and Invariance Property}: Our paper represents an early yet significant effort in addressing spatial-temporal interference. The methodologies we have developed for handling either spatial or temporal interference offer avenues for extension and refinement. 

In terms of temporal dynamics, while our model employs MDPs, it is possible that they do not completely capture the intricacies of interference structures. Therefore, more complex models, such as higher-order MDPs or Partially Observable Markov Decision Processes (POMDPs), might be necessary for a more comprehensive understanding. Additionally, the latest advancements in transformer-based models, celebrated for their proficiency in managing temporal correlations \citep{vaswani2017attention, zhou2021informer, li2023survey}, point towards exciting possibilities for future investigations in this area.

In the spatial dimension, our current model primarily considers interference effects among geographically proximate neighbors. This can be expanded by utilizing more complex metrics than Euclidean distance, such as transport costs and actual road conditions, to better reflect real-world scenarios \citep{zhou2021graph}. Moreover, incorporating semantic aspects like functional similarity into the concept of proximity could enhance our understanding of spatial dynamics \citep{geng2019spatiotemporal, wang2019origin}. Finally, transformer-based invariant neural networks, known for their effectiveness in various domains \citep{lee2019set, zhao2021point}, have yet to be explored in spatial modeling contexts. 

Finally, our method is predicated on the assumption of permutation invariance among neighboring regions within a network interference context, which leads to an essential empirical question regarding the validity of this assumption. One solution could involve developing a non-parametric test based on collected data. Alternatively, strategies like adaptively learning invariance properties from the data \citep{benton2020learning}, as well as balancing representational capacity and model invariance through regularization \citep{cohen2020regularizing}, could be explored. An empirical and theoretical examination of the aforementioned extensions would provide invaluable insights, enhancing our understanding and application of spatial-temporal interference models.



\underline{(ii) Unmeasured Confounding}: 
 Our framework operates under the assumption of `unconfoundedness', which presumes the absence of unmeasured confounders that could simultaneously influence both the treatment and the response (or future confounders in a dynamic setting). This is a standard assumption in scenarios where data is generated through automated, data-adaptive policies. However, its applicability is less certain in contexts involving human decision-making. For instance, in ridesharing platforms, human interventions during unexpected events like severe weather or large public gatherings can significantly alter passenger behavior, leading to an inherently confounded dataset due to the omission of these contextual factors \citep{shi2022off}.

Addressing unmeasured confounding in spatial causal inference and OPE algorithms has garnered significant interest recently \citep{jarner2002estimation,thaden2018structural,papadogeorgou2019adjusting,kallus2020confounding,keller2020selecting,tennenholtz2020off,giffin2021instrumental,fu2022offline,shi2022minimax,bennett2023proximal,bruns2023robust,xu2023instrumental}. However, a critical gap remains in these approaches: they often focus solely on either spatial or temporal dependencies, without considering scenarios where both types of confounders coexist. Bridging this gap would significantly enhance the robustness and applicability of spatial-temporal causal inference methods, particularly in complex real-world settings that involve decision making over time and space. 

\underline{(iii) Design of Spatial and/or Temporal Experiments}: 
Our research contributes to the burgeoning field of policy evaluation 
given a pre-collected dataset. 
A compelling area for future exploration is the design of spatial and/or temporal experiments to generate such data, 
in order to optimize the estimation accuracy of 
the resulting treatment effect estimator. 
While there is a growing body of literature on optimal experimental designs \citep[see e.g.,][]{ugander2013graph, li2019randomization, kong2021approximate, wager2021experimenting, hu2022switchback, leung2022rate, bojinov2023design, li2023optimal, Xiong2023}, these studies typically focus on experiments that either primarily feature spatial or temporal dependencies. Rarely do they explore designs that simultaneously account for both dependencies. 
Future research in this domain could potentially revolutionize our approach to understanding and predicting the impact of complex, dynamic interventions in a wide range of real-world scenarios.

\section*{Acknowledgments}
This research received partial support from the National Science Foundation of China (Grant Numbers: \#1636933 and \#1920920) and a grant from the Engineering and Physical Sciences Research Council in the United Kingdom (Grant Number: EP/W014971/1).

\newpage

\begin{appendices}
    \vspace*{2em}
    \section*{\Large Appendix}
    \vspace*{2em}
    \section{Definition and Notations}
    \label{sec:def}

    Consider a random variable governed by a probability measure $Q$ that is compactly supported on $\mathbb{R}^d$. The space $L_2(Q)$ represents the set of real-valued functions on $\mathbb{R}^d$, equipped with an inner product defined as $\langle f, g \rangle_{L_2(Q)} = \int f(x)g(x) dQ(x)$ and a corresponding norm $\|f\|_{L_2(Q)} = \left( \int f(x)^2  dQ(x) \right)^{1/2}$. 
    When considering the Lebesgue measure $\lambda$, the space $L_2$ denotes the set of real-valued functions on $\mathbb{R}^d$, with the norm $\|f\|_{L_2} = \left( \int f(x)^2  dx \right)^{1/2}$.
    The notation $\|\cdot\|_n$ represents the empirical norm, defined for a function $f$ as $\|f\|_n^2 = \frac{1}{n} \sum_{i=1}^n f(x_i)^2$. The empirical norm of a random variable $X$ is defined analogously as $\|X\|_n = \left( \frac{1}{n} \sum_{i=1}^n x_i^2 \right)^{1/2}$. 
    Finally, the infinity norm for a real-valued vector $Y=[Y_1, \ldots, Y_d]^T \in \mathbb{R}^d$ is defined as $\|Y\|_{\infty} = \max_k |Y_k|$. The infinity norm for a real-valued function $f$ is given by $\|f\|_{\infty} = \max_{X \in \mathbb{R}^d} |f(X)|$ and the infinity norm for a vector function $f(X)=[f_1(X), \ldots, f_K(X)]$ is $\max_{k} \|f_k(X)\|_{\infty}$.
    
    To lay the groundwork, we expand upon the definition of the permutation operator (Definition \ref{def: PO}). Specifically, consider a matrix $X = [X_1, X_2, \ldots, X_N]\in \mathbb{R}^{M\times N}$. Under the permutation operator $\varPi$, we define its permutation as another $M\times N$ matrix $\varPi_X = [X_{\varPi(1)}, X_{\varPi(2)}, \ldots, X_{\varPi(N)}]$.

    \section{Proof of Theorem \ref{theory: consistency}}
    \label{sec: consistency}

    To establish the consistency of \(\hat{f}\), it suffices to show that \(\|\hat{f}-f^*\|^2_{L_2} = o_p(1)\). First, let's formalize the definition of the PIE function class. The PIE, as defined in (\ref{eq: naive-PIE}), comprises two components: \(\psi\) and \(\varphi\). Denote \(L_{\psi}\) as the number of hidden layers between the input and output of \(\varphi\). The total number of non-zero parameters in \(\psi\) is capped by \(S_{\psi}\). Similarly, define \(L_{\varphi}\) and \(S_{\varphi}\) as the number of hidden layers and the upper bound for the number of non-zero parameters in \(\varphi\), respectively. Additionally, we assume that the magnitude of all parameters in both \(\psi\) and \(\varphi\) is bounded by \(B\), and all the PIE functions under consideration are bounded such that \(|f| \leq J\). Therefore, we can represent the PIE function class using \(\mathcal{F} = \mathcal{F}(S_{\psi}, S_{\varphi}, L_{\psi}, L_{\varphi}, B)\).

    To  proceed with our proof, let us first introduce the concept of the 'uniform best predictor', denoted by \( f_n \). This predictor is determined by the following criterion
    \begin{equation*}
        f_{n} = \arg \min_{f\in \mathcal{F}} \|f-f^*\|_{\infty} ,\quad \epsilon = \| f_{n}-f^*\|_{\infty}.
    \end{equation*}
    Subsequently, we apply a standard error decomposition, akin to that described in \citep{Tengyuan}
    \begin{equation*}
        \begin{aligned}
            c_1\|\hat{f}-f^*\|^2_{L_2} &\leq \mathbb{E}[l(\hat{f})] - \mathbb{E}[l(f^*)]  \\
            &\leq \mathbb{E}[l(\hat{f})] - \mathbb{E}[l(f^*)] + \mathbb{E}_n[l(f_n,Z) - l(\hat{f},Z)] \\
            &= \mathbb{E}[l(\hat{f})] - \mathbb{E}[l(f^*)] + \mathbb{E}_n[l(f_n,Z) - l(\hat{f},Z)] + \mathbb{E}_n[l(f^*,Z) - l(f^*,Z)] \\
            &= (\mathbb{E}-\mathbb{E}_{n})[l(\hat{f}) - l(f^*)] + \mathbb{E}_n[l(f_n)-l(f^*)].
        \end{aligned}
    \end{equation*}
    In this formulation, the second inequality stems from the definition of \( f_n \). These inequalities effectively disaggregate the error into two principal components: the empirical process term (the first term) and the approximation error (the second term). We intend to establish bounds for both components.

    To bound the approximation error, we utilize the principle of Lipschitz continuity on the loss function. Considering that the mean squared error (MSE) loss \( l(f) \) exhibits Lipschitz continuity for a bounded \( f \), and in light of Assumption \ref{ass: bounded} which confines both \( f^* \) and functions within \( \Fcal \) to a maximum of \( J \), it follows that
    \begin{equation*}
        \forall f \in \Fcal, \quad \|l(f)-l(f^*)\|_{\infty} \leq C_l \|f-f^*\|_{\infty},
    \end{equation*}
    where \( C_l \) denotes the Lipschitz constant. Given that \( f_n \) belongs to \( \Fcal \), we can infer
    \begin{equation}
        \mathbb{E}_n\left[l(f_n)-l(f^*)\right] \leq \mathbb{E}_n\left[C_l | f_n - f^* |\right] \leq C_l \epsilon.
        \label{eq: app}
    \end{equation}
    
     To bound the estimation error, we need the following lemmas
    
    \begin{lemma}[Symmetrization, Lemma 5 in \citep{Tengyuan}]
      \label{lem:Symmetrization}
        Given function class \(\mathcal{G}\), for any \( g \in \mathcal{G} \) that \( |g| \leq G \) and \( \mathbb{V}[g] \leq V \), then with probability at least \( 1 - 2e^{-\gamma} \)
        \[
            \sup_{g \in \mathcal{G}} \left\{ \mathbb{E} g - \mathbb{E}_n g \right\} \leq 6 \mathbb{E}_\eta \mathcal{R}_n \mathcal{G} + \sqrt{\frac{2V \gamma}{n}} + \frac{23 G \gamma}{3n} \enspace,
        \]
        where \( \mathcal{R}_n \mathcal{G} = \mathbb{E}_{\xi}\left[\sup_{g\in \mathcal{G}}\left|\frac{1}{n}\sum_{i=1}^n\xi_i g(x_i)\right|\right] \) denote the empirical Rademacher complexity of function class \(\mathcal{G}\).
    \end{lemma}
    
    \begin{lemma}[Chaining Theorem 2.3.7 in \citep{foundation}] 
        \label{lem:F-Chaining}
        Let \((\mathcal{S}, d)\) be a metric space, where \(d\) is a pseudo-distance. Let \(X(t)\) be a sub-Gaussian process relative to \(d\). Assume that \(\int_0^{\infty}\sqrt{\log \mathcal{N}(\epsilon, \mathcal{S}, d)}d\epsilon < \infty\), then
        \begin{equation*}
            \mathbb{E}\sup_{t\in T}\|X(t)\| \leq \mathbb{E}\|X(t_0)\|+4\sqrt{2}\int_0^{\delta/2}\sqrt{\log 2\mathcal{N}(\epsilon, \mathcal{S}, d)}d\epsilon, 
        \end{equation*}
        where \( t_0 \in T \), \(\delta\) is the diameter of \((\mathcal{S}, d)\).
    \end{lemma}
    
     By setting \(X(t) = \frac{1}{\sqrt{n}}\sum_{i=1}^n\xi_i f(x_i)\) and \(X(t_0) = 0\), the empirical Rademacher complexity \(\Rcal_n\Fcal\) can be upper bounded by \(\frac{4\sqrt{2}}{\sqrt{n}}\int_0^{\delta/2}\sqrt{\log\mathcal{N}(\epsilon,\Fcal, \|\cdot\|_{k})}\), where \(\delta/2\) is the diameter of function class \(\mathcal{F}\) with \(k\)-norm.
    
    From the Lipschitz continuity of the MSE loss \(l\), it follows that 
    \begin{equation*}
        \forall f \in \mathcal{F}, \qquad \|l(f)-l(f^*)\|_{\infty} \leq C_l\|f-f^*\|_{\infty} \leq 2JC_l. 
    \end{equation*}
    Then we have
    \begin{align*}
        \mathbb{V}\left[l(f)-l(f^*)\right] \leq \mathbb{E}\left[(l(f)-l(f^*))^2\right] \leq 4J^2C_l^2.
    \end{align*}
    
    We apply Symmetrization in Lemma \ref{lem:Symmetrization} with \(\Gcal=\{g=l(f)-l(f^*): f\in \mathcal{F}\}\), \(G=2JC_l\) and \(V = 4J^2C_l^2\), so with probability at least \(1-2e^{-\gamma}\)
    \begin{equation}
        (\mathbb{E}-\mathbb{E}_n)\left[l(\hat{f}) - l(f^*) \right] \leq 6\mathbb{E}_{\eta}\mathcal{R}_n \mathcal{G} + \sqrt{\frac{8J^2C_l^2\gamma}{n}}+\frac{46JC_l\gamma}{3n} .
        \label{eq: est}
    \end{equation}

    Due to Lemma \ref{lem:F-Chaining}, we have 
    \begin{align*}
        \mathbb{E}_{\eta}\mathcal{R}_n \mathcal{G} &= \mathbb{E}_{\eta}\left[ \sup_{f\in \mathcal{G}}\left| \frac{1}{n} \sum_{i=1}^n \eta_i f(X^{(i)})\right|\right] \\
        & \leq \frac{4\sqrt{2}}{\sqrt{n}}\int^{JC_l}_0 \sqrt{\log(2\mathcal{N}(\epsilon, \mathcal{G}, \|\cdot\|_{\infty}))}d\epsilon,
    \end{align*}
    where the diameter of \(\mathcal{G}\) comes from
    \begin{equation*}
        \forall g, g' \in \mathcal{G}, \quad \|g-g'\|_{\infty} = \|l(f)-l(f'); f, f' \in \mathcal{F} \|_{\infty}  \leq 2JC_l.
    \end{equation*}

    Putting \eqref{eq: app} and \eqref{eq: est} together, by setting \(\gamma= \log n\),  we get
    \begin{align*}
        &c_1\|\hat{f}-f^*\|^2_{L_2} \\ 
        &\leq c_2\epsilon +\frac{1}{\sqrt{n}}\int^{JC_l}_0 \sqrt{\log(\mathcal{N}(\eta, \mathcal{F}, \|\cdot\|_{\infty}))}d\eta  +\sqrt{\frac{8J^2C_l^2 \log n}{n}}+\frac{67JC_l\log n}{3n}.
    \end{align*}
    
    Following Theorem 3.2 of \citep{pine}, there exists such PIE function class \(\mathcal{F}_0\). By setting \(\Fcal = \Fcal_0\), \(\epsilon = \min_{f\in \mathcal{F}_0}\|f-f^*\|_{\infty}\) can be arbitrarily small. So that \(\|\hat{f}-f^*\|_{L_2}^2 \rightarrow_{n} 0\), with probability \(1-\frac{2}{n}\) converging to one.

    \section{Proof of Theorem \ref{theory: convergence}}
    \label{appendix:convergence-rate}
    \subsection{Error decomposition}

    In this section, we  explore the convergence rate of PIE networks, employing an error decomposition approach akin to the one described in \citep{imaizumi2019deep}. Given that \(\hat{f}\) is the Empirical Risk Minimization (ERM) predictor, it holds for all \(f \in \mathcal{F}\) that
    \begin{equation*}
        \|Y-\hat{f}\|_n^2 \leq \|Y-f\|_n^2.
    \end{equation*}
    This is followed by the representation \(Y = f^*(X)+\xi\), leading to
    \begin{equation*}
        \|f^*+\xi-\hat{f}\|_n^2 \leq \|f^*+\xi-f\|_n^2.
    \end{equation*}
    A straightforward calculation results in
    \begin{align}
        \|f^*-\hat{f}\|_n^2 &\leq \|f^*-f\|_n^2 +\frac{2}{n}\sum_{i=1}^n\xi_i(\hat{f}(x_i)-f(x_i)) \label{eq: basic_inequality} \\
        &\leq \|f^*-f\|_{\infty}^2 +\frac{2}{n}\sum_{i=1}^n\xi_i(\hat{f}(x_i)-f(x_i)) \nonumber.
    \end{align}
    In the above  expression, the first term represents the approximation error, which quantifies the capacity of $\Fcal$ to uniformly approximate \(f^*\). The second term is the estimation error, gauging the variance of \(\hat{f}\).

    \subsection{Approximation Error}
    \label{sec: app}
    We first define uniform best predictor
    \begin{equation*}
        f_{n} = \arg \min_{f\in \Fcal} \|f-f^*\|_{\infty}.  
    \end{equation*}
    
    In this section, we follow a two-step approach to upper bound $\|f^*-f_n\|_{\infty}^2$. We first construct a specially modified Taylor  expansion of the target function $f^*$, hereinafter referred to as $f_1$. Subsequently, this modified  expansion is uniformly approximated using functions from $\Fcal$.
    
    We first perform a Taylor  expansion in line with Theorem 1 from \citep{approx}, we revisit the construction details. Let $N'$ be a positive integer and $d=M\times N$. We partition unity by a grid of $(N'+1)^d$ functions $\phi_m$ on the domain $[0,1]^d$, satisfying
    \begin{equation*}
        \Sigma_{\m} \phi_{\m}(x) \equiv 1, ~~~ x \in [0,1]^{M\times N}.
    \end{equation*} 
    In this context, $\m$ represents a matrix given by $\m =(m_{ij})\in \{0,1,\ldots,N'\}^{M\times N}$, and the function $\phi_{\m}$ is defined as
    \begin{equation*}
        \phi_{\m}(x) = \prod_{i=1}^M\prod_{j=1}^N\omega (3N'(x_{i,j}-\frac{m_{i,j}}{N'})),
    \end{equation*}
    where
    $$\omega(x)=\begin{cases}
        1, & |x|< 1,\\
        0, & 2 < |x|, \\
        2-|x|, & 1\le |x|\le 2 .
    \end{cases}$$
    For any $\m \in \{0,\ldots, N'\}^d$, we consider the Taylor polynomial of degree $-(\beta-1)$ for the function $f^*$, centered at the point $x=\frac{\m}{N'}$. This polynomial is expressed as
    \begin{equation*}
        P_{\m}(x) = \sum_{\p: |\p|\leq\beta}\frac{D^{\p} f^*}{\p!}\bigg|_{x = \frac{\m}{N'}}(x-\frac{\m}{N'})^{\p},
    \end{equation*} 
    where $|\p| = \sum_{i=1}^M\sum_{j=1}^N p_{i,j}$, $\p! = \prod_{i=1}^M\prod_{j=1}^N p_{i,j}!$ and $D^{\p}$ is the respective weak derivative. $(x-\frac{\m}{N'})^{\p} = \prod_{i=1}^M\prod_{j=1}^N(x_{i,j} - \frac{m_{i,j}}{N'})^{p_{i,j}}$.
    
    We now establish an approximation of $f^*$, defined as
    \begin{equation*}
        f^*_1(X) = \sum_{\m \in [0,\ldots,N']^{d}} \phi_{\m} P_{\m}(X).
    \end{equation*}
    Referring to insights from \citep{approx}, the approximation error can be upper bounded as
    \begin{equation*}
        \|f^*-f^*_1\|_{\infty} \leq \frac{2^d d^{\beta} J}{\beta!}\left(\frac{1}{N'}\right)^{\beta}.
    \end{equation*}
    Then we construct a permutation invariant variant approximation of $f^*$, denoted as $f_1$, based on $f^*_1$
    \begin{equation*}
        f_1(X) = \frac{1}{|\Scal_N|}\sum_{\varPi \in \Scal_N}f_1^*(\varPi_X).
    \end{equation*}
    Considering that $f^*$ is inherently permutation invariant, we proceed to assess the approximation error associated with $f_1$
    \begin{align}
        |f^*(X)-f_1(X)| &= \frac{1}{|\Scal_N|}\big| \sum_{\varPi \in \Scal_N} \big(f_1^*(\varPi_X)- f^*(\varPi_X)\big)\big| \nonumber \\
        &\leq \frac{1}{|\Scal_N|}\sum_{\varPi \in \Scal_N} \big|  f_1^*(\varPi_X)- f^*(\varPi_X)\big| \nonumber \\
        &\leq \frac{2^d d^{\beta} J}{\beta!}\left(\frac{1}{N'}\right)^{\beta}, 
        \label{eq: Taylor_error}
    \end{align}
    which leads to
    \begin{equation*}
        \|f^*-f_1\|_{\infty} \leq \frac{2^d d^{\beta} J}{\beta!}\left(\frac{1}{N'}\right)^{\beta}.
    \end{equation*}
    
    Then, we construct the PIE to approximate $f_1$. Let $\omega_{i,j}^k = \omega(3N'(X_{i,j}-\frac{k}{N'}))$ and construct $Y(X,N') \in \R^{K \times N}$ given $X$ and $N'$ as follows
    \begin{align}
        \label{eq: Y}
        Y(X,N')_{\cdot , i} = (\omega_{1,i}^{0},\ldots \omega_{1,i}^{N'},x_{1,i},\ldots ,\omega_{M,i}^{0},\ldots \omega_{M,i}^{N'}
        , X_{M,i}),
    \end{align}
    where $K = (N'+1)\times M$. Note $Y(X,N')_{\cdot, i}$ contains all the terms in $f_1(X)$ that involve $X_i$. For ease of notation, $Y$ and $Y_i$ will be used to denote $Y(X,N')$ and $Y(X,N')_{\cdot,i}$, respectively.
    
    It is noteworthy that $Y$ represents a piece-wise linear transformation of $X$, which can be precisely modeled using a ReLU network, as delineated in Proposition 1 of \citep{approx}. Consequently, the conversion from $X$ to $Y$ does not incur any approximation error. Furthermore, $Y$ is a function contingent on both $X$ and $N'$. By selecting an appropriate $N'$ to minimize the approximation error, the function class and, consequently, $Y$ become fixed entities. Hence, $Y(X,N')$ also does not add to the estimation error. Therefore, in our subsequent analysis, we will focus solely on $Y$, having transformed $X$ into this new representation.
    
    \begin{align}
        \label{eq: g*}
        f_1^*(X) &= \sum_{\m} \prod_{i=1}^M\prod_{j=1}^N\omega (3N'(X_{i,j}-\frac{m_{i,j}}{N'})) \sum_{\p: |\p|\leq \beta}\frac{D^{\p} f}{\p!}\bigg|_{X = \frac{\m}{N'}}(X-\frac{m}{N'})^{\p} \nonumber\\
        & = \sum_{\m} \prod_{i=1}^M\prod_{j=1}^N\omega_{i,j}^{m_{i,j}} \sum_{\p: |\p|\leq \beta}\frac{D^{\p} f}{\p!}\bigg|_{X = \frac{m}{N'}}(X-\frac{\m}{N'})^{\p} \nonumber \\
        & = \sum_{\m} \prod_{i=1}^M\prod_{j=1}^N\omega_{i,j}^{m_{i,j}} \sum_{\p: |\p|\leq \beta}\frac{D^{\p} f}{\p!}\bigg|_{X = \frac{\m}{N'}}\prod_{s=1}^M\prod_{t=1}^N(X_{s,t} - \frac{m_{s,t}}{N'})^{p_{s,t}}\nonumber \\
        & = g^*(Y(X,N')) ,
    \end{align}
    so that we have 
    \begin{equation*}
        f_1(X) = \frac{1}{|\Scal_N|}\sum_{\varPi \in \Scal_N}f_1^*(\varPi_X) = \frac{1}{|\Scal_N|}\sum_{\varPi \in \Scal_N}g^*(Y(\varPi_X,N')).
    \end{equation*}
    
    Consider the function $g(Y)$, defined as 
    \begin{equation*}
        g(Y) = \frac{1}{|\Scal_N|}\sum_{\varPi \in \Scal_N}g^*(Y(\varPi_X, N')),
    \end{equation*}
    which ensures that $g(Y)$ possesses permutation invariance. As indicated by \eqref{eq: g*}, $g(Y)$ can be expressed as a polynomial in terms of $Y$, characterized by a degree of $MN + \beta$. To further dissect the structure of $g(Y)$, additional lemmas will be introduced and utilized in the analysis.

    \begin{lemma}[Weyl's Polarization \citep{pine}]
        \label{lem: weyl}
        For any polynomial permutation invariant function $f:\R^{K\times N} \mapsto \R$, there exist a series of $1\times K$ vectors $\{ \af_t\}_{t=1}^T$ and a series of permutation invariant functions $f_t:\R^{1\times N} \mapsto \R$, such that $f$ can be represented by
        \begin{equation*}
            f(X) = \sum_{t=1}^Tf_t(\af_t^{\top}X). 
        \end{equation*}
    \end{lemma}
    \begin{lemma}[Hilbert finiteness Theorem \citep{pine}]
        \label{lem: Hilbert}
        There exists finitely many permutation invariant polynomial basis $f_1,\ldots,f_N: \R^N \rightarrow \R$ such that any permutation invariant polynomial $f:\R^N \rightarrow \R$ can be expressed as
        \begin{equation*}
            f(X) = \tilde{f}(f_1(X), \ldots, f_N(X)),
        \end{equation*}
        with some polynomial $\tilde{f}$ of $N$ variables. Especially, the following power sum basis is one possible permutation invariant polynomial basis
        \begin{equation*}
            f_j(X) = \sum_{n=1}^NX^j_n ,\quad j=1,\ldots, N, 
        \end{equation*}
        where $X_n$ is the $n-$th entry of $X$.
    \end{lemma}
    
    Given that $g(Y)$ is a permutation invariant polynomial function, we can apply Lemma \ref{lem: weyl}
    \begin{equation*}
        g(Y) = \sum_{t=1}^T s_t(a_t^{\top}Y) = \sum_{t=1}^T s_t(a_t^{\top}Y_1, \ldots, a_t^{\top}Y_N), 
    \end{equation*}
    where $s_t:\R^{N} \mapsto \R$ is a permutation invariant polynomial and $a_t=[a_{t,1}, \ldots, a_{t,K}] \in \R^K$. We postulate that $T = O(N'^{MN})$, assuming the components of $a_t$ are uniformly bounded by a constant $a$, with $|a_{i,j}| < a$. While Lemma \ref{lem: weyl} is broad and doesn't impose an upper limit on $T$, our specific application of $g$ as a polynomial with unique structure allows us to estimate the order of $T$ using linear equations, making our claim about $T$'s order plausible. Subsequently, Lemma \ref{lem: Hilbert} is employed to express $s_t(a_t^{\top}Y)$ in terms of $h_t(g_t(Y))$. Here, $g_t(Y) = [\sum_{n=1}^N(a_t^{\top} Y_n)^1,\ldots, \sum_{n=1}^N(a_t^{\top} Y_n)^N]$ forms the power sum basis, and $h_t : \R^N \to \R$ is a polynomial
    \begin{equation*}
        h_t(X) = \sum_{i = 0}^{MN+\beta} \sum_{\s \in \{ 1,..,N \}^i} c_{t,\s} \prod_{j \in \s}X_j, \quad  X \in \R^N. 
    \end{equation*}
    Recall that $g(Y)$ is a polynomial of degree $MN+\beta$, where each term in $g_t(Y)$ is of at least first order. Consequently, we assert that $h_t$ is a polynomial whose degree does not exceed $MN + \beta$, and the coefficients, denoted by $c_{t,\s}$, are constrained by a bound $C$. Integrating these elements yields the following expression for $g(Y)$
    \begin{equation*}
        g(Y) = \sum_{t=1}^T h_t(g_t(Y)) = \sum_{t=1}^T h_t(\sum_{n=1}^N(a_t^{\top}Y_n), \ldots, \sum_{n=1}^N(a_t^{\top}Y_n)^N).  
    \end{equation*}
    
    To approximate $f_1$ above, we construct a PIE $\tilde{f} \in \Fcal(S_{\psi}, S_{\varphi}, L_{\psi}, L_{\varphi}, B)$  and is characterized by the following structure
    \begin{gather*}
        \tilde{f}(X) = \psi(\varphi(Y)) = \tilde{g}(Y),  \\
        \tilde{g}(Y) = 
        \sum_{t=1}^T\psi_t(\varphi_t(Y)) 
        = \sum_{t=1}^T\psi_t(\sum_{n=1}^N \varphi_{t,1,n}(Y_n),\ldots, \sum_{n=1}^N \varphi_{t,N, n}(Y_n)),
    \end{gather*}
    where $\psi_t : \R^N \mapsto \R$ and $\varphi_{t,1,n} :\R^K \mapsto \R $ are fully connected ReLU neural networks. The detailed structure and attributes of these networks  will be provided later.
    
    In light of the definition of $f_n$, the following inequality holds
    \begin{equation*}
        \|f_1-f_n\|_{\infty} \leq \|f_1-\tilde{f}\|_{\infty} \leq |g(Y) - \tilde{g}(Y)|, ~~ \forall Y .
    \end{equation*}
    
    To establish a bound for $\|f_1-f_n\|_{\infty}$, it suffices to bound the term $|g(Y) - \tilde{g}(Y)|$. We approach this through the following decomposition
    \begin{align}
        |g(Y) - \tilde{g}(Y)| \nonumber &\leq |\sum_{t=1}^T h_t(\varphi_t(Y)) - \sum_{t=1}^T h_t(g_t(Y))| + |\sum_{t=1}^T\psi_t(\varphi_t(Y)) - \sum_{t=1}^T h_t(\varphi_t(Y))| \nonumber \\
        & \leq \sum_{t=1}^T \Big[ |h_t(\varphi_t(Y)) - h_t(g_t(Y))| + |\psi_t(\varphi_t(Y)) - h_t(\varphi_t(Y))| \Big].
        \label{decomp}
    \end{align}
    
    \begin{lemma}[ReLU Approximation \citep{approx}]
        \label{lem: relu}
        Given $M'>0$ and $\epsilon \in (0,1)$, there's a ReLU network $\eta$ with two input units that implements a function $\tilde{x}:\R^2 \rightarrow \R$ so that
        \begin{itemize}
            \item for any inputs $x,y$, if $|x|\leq M' $ and $|y|\leq M'$, then $|\tilde{x}(x,y)-xy|\leq \epsilon$;
            \item the depth and the number of computation units in $\eta$ are $O(ln(1/\epsilon)+ln(M'))$;
            \item the width of $\eta$ is a constant $C_w$ independent of $x,y$.
        \end{itemize}
    \end{lemma}
    
    \textbf{Bound of the first term in (\ref{decomp})}

    We commence by outlining the detailed structure of $\varphi_t$
    
    \begin{equation}
        \varphi_t(Y) = \left[\varphi_{t,1}, \ldots, \varphi_{t,N}\right] 
        = \left[\sum_{n=1}^N \varphi_{t,1,n}, \ldots, \sum_{n=1}^N \varphi_{t,N,n}\right],
        \label{eq: varphi_t}
    \end{equation}
    where
    \begin{equation}
        \varphi_{t,1,n}(Y_n) = w_t^{\top} Y_i,\quad \varphi_{t,k+1,n}(Y_n) = \tilde{x}_1(\varphi_{t,1,n}(Y_n), \varphi_{t,k,n}(Y_n)),
        \label{eq: varphi_construction}
    \end{equation}
    with $w_t \in [-B,B]^{K}$ representing learnable weights approximating $a_t^{\top}$, and $\tilde{x}_1$ being a ReLU network as per Lemma \ref{lem: relu}. Given that $\|Y\|_{\infty} \leq 1$, it follows that $|(a_t^{\top} Y_n)^j| \leq |a^{\top}_{t}Y_n|^j \leq (aK)^j$. Consequently, the deviation $|\varphi_t(Y) - g_t(Y)|$ can be bounded by iteratively applying Lemma \ref{lem: relu} with parameters $M' = (aK)^N$ and $\epsilon = \delta_1$, leading to 
    \begin{align*}
        &|\varphi_{t,1,n}(Y_n) - (a_t^{\top} Y_n)^1| = 0,  \\
        &|\varphi_{t,2,n}(Y_n) - (a_t^{\top} Y_n)^2|= |\tilde{x}_1(\varphi_{t,1,n}(Y_n),\varphi_{t,1,n}(Y_n))- (a_t^{\top}Y_n)^2| \leq \delta_1, \\
        &|\varphi_{t,k+1,n}(Y_n) - (a_t^{\top}Y_n)^{k+1}| \\
        &= |\tilde{x}_1(\varphi_{t,1,n}(Y_n), \varphi_{t,k,n}(Y_n))-\varphi_{t,1,n}(Y_n)\varphi_{t,k,n}(Y_n)+a_t^{\top}Y_n\varphi_{t,k,n}(Y_n)-(a_t^{\top}Y_n)^{k+1}| \\
        &\leq \delta_1 + aK|\varphi_{t,k,n}(Y_n) - (a_t^{\top}Y_n)^k|. 
    \end{align*}
    Through recursion, we obtain
    \begin{equation}
        \forall ~ t,n, \quad |\varphi_{t,k,n}(Y_n) - (a_t^{\top}Y_n)^k| \leq \delta_1 (\sum_{n=0}^{k-1}(aK)^n) \leq \delta_1(aK)^k.
        \label{eq: reduction}
    \end{equation}
    So we can bound the error between $\varphi_t(Y)$ and $g_t(Y)$ as follows
    \begin{align*}
        &\|\varphi_t(Y)-g_t(Y)\|_{\infty} \nonumber\\
        &= \left\|\left[\sum_{n=1}^N\varphi_{t,1,n}(Y_n),\ldots, \sum_{n=1}^N\varphi_{t,N,n}(Y_n)\right]-\left[\sum_{n=1}^N(a_t^{\top}Y_n), \ldots, \sum_{n=1}^N(a_t^{\top}Y_n)^N\right]\right\|_{\infty} \nonumber \\
        & =\|\sum_{n=1}^N\varphi_{t,N,n}(Y_n) - \sum_{n=1}^N(a_t^{\top}Y_n)^N\|_{\infty} \nonumber \\
        & \leq \sum_{n=1}^N\|\varphi_{t,N,n}(Y_n)-(a_t^{\top}Y_n)^N\|_{\infty} \nonumber \\
        &\leq \delta_1 N(aK)^N.
    \end{align*}
    
    \begin{lemma}[Lipschitz continulty of polynomial functions]
        \label{lem: Lipschitz}
    Suppose $f(x): [-k,k]^n \rightarrow \R$ is a polynomial with degree $\beta$ and coefficients bounded by $C$. $f$ is Lipschitz continues with Lipschitz constant $C\beta k^{\beta-1}$
    \begin{equation*}
        |f(x)-f(y)| = |f(x) -(f(x) - \nabla f(\xi)^{\top} (y-x))| = |\nabla f(\xi)^{\top}(x-y)| \leq C\beta k^{\beta-1}\|x-y\|_{\infty}.
    \end{equation*}
    \end{lemma}
    
    Recall that $h_t$ is a polynomial over N variables, having a maximum order of $MN + \beta$. Its coefficients are bounded by $C$ and the infinity norm of the input $\|g_t\|_{\infty} = \sum_{n=1}^N(a^{\top}_tY_n)^N$ can be upper bounded by $ N(aK)^N$. By applying Lemma \ref{lem: Lipschitz} with a parameter $k = N(aK)^N$ to $h_t$, we obtain
    \begin{align}
        &|h_t(\varphi_t(Y)) - h_t(g_t(Y))| \nonumber \\
        & \leq C(MN+\beta)(N(aK)^N)^{MN+\beta}\|\varphi_t(Y)-g_t(Y)\|_{\infty} \nonumber \\
        & \leq C(MN+\beta)(N(aK)^N)^{MN+\beta}N^2(aK)^N \delta_1.
        \label{eq: inner_diff}
    \end{align}
    
    \textbf{Bound of the second term in (\ref{decomp})}

    Then we provide the detailed structure of $\psi_t$
    \begin{equation}
        \psi_t(x) = \sum_{i = 0}^{MN+\beta} \sum_{\s \in \{ 1,\ldots ,N \}^i} \gamma_{t,\s} \tilde{\psi}_{\s}(x), \quad  x \in \R^N, 
        \label{eq: psi_t}
    \end{equation} 
    where $\gamma_{t,\s}\in [-B,B]$ are learnable weights and $\tilde{\psi}_{\s}(x)$ is the following ReLU network approximating $\prod_{j\in \s}x_j$ by 
    $$
    \left\{ 
    \begin{array}{l}
        \tilde{\psi}_{\s,1}(x) = x_{\s_1}, \\
        \tilde{\psi}_{\s,k}(x) = \tilde{x}_2(\tilde{\psi}_{\s,k-1}(x), x_{\s_k}), \quad k \leq |\s|,  \\
        \tilde{\psi}_{\s} = \tilde{\psi}^{\s}_{|\s|}, 
        \end{array} 
        \right. 
    $$
    where $\tilde{x}_2$ is the ReLU network from Lemma \ref{lem: relu}, with parameter settings of $\epsilon = \delta_2$ and $M' = (\|\varphi_{t}(X)\|_{\infty})^{MN+\beta} \leq N^{MN+\beta}(aK)^{MN^2+\beta N}$. Adhering to the procedure in (\ref{eq: reduction}), we derive
    \begin{equation*}
        |\tilde{\psi}_{\s}(\varphi_t(Y))-\prod_{j\in \s}\varphi_{t,j}(Y)| \leq \delta_2 (MN+\beta)\|\varphi_t(Y)\|_{\infty}^{MN+\beta} \leq \delta_2(MN+\beta)(N(aK)^N)^{MN+\beta}.
    \end{equation*}
    By setting $\gamma_{t,\s}= c_{t,\s}$, we establish a bound for the error as follows
    \begin{align}
        & |\psi_t(\varphi_t(Y)) - h_t(\varphi_t(Y))| \nonumber \\
        & = \sum_{i = 0}^{MN+\beta} \sum_{\s \in \{ 1,\ldots,N \}^i} c_{t,\s} |\tilde{\psi}_{\s}(\varphi_t(Y))-\prod_{j\in \s}\varphi_{t,j}(Y)| \nonumber \\
        & \leq N^{MN+\beta} C (MN+\beta)(N(aK)^N)^{MN+\beta}\delta_2 .
        \label{eq: outer_diff}
    \end{align}
    
    In summary, we have demonstrated the following
    \begin{align}
    \label{eq: in-out diff}
        \|f_1-f_n\|_{\infty} &\leq |g(Y) - \tilde{g}(Y)| \nonumber \\
        &\leq \sum_{t=1}^T \Big[ |h_t(\varphi_t(Y)) - h_t(g_t(Y))| + |\psi_t(\varphi_t(Y)) - h_t(\varphi_t(Y))| \Big] \nonumber \\
        &\leq TC(MN+\beta)\left[(N(aK)^N)^{MN+\beta+2} \delta_1 +(N^2(aK)^N)^{MN+\beta}\delta_2\right],
    \end{align}
    where the final inequality is deduced from \eqref{eq: inner_diff} and \eqref{eq: outer_diff}, along with certain calculations.

    To put things together, recall that
    \begin{equation}
        \|f_n-f^*\|_{\infty} \leq \|f_1-f^*\|_{\infty} + \|f_n-f_1\|_{\infty}.
        \label{eq: approx_decomp}
    \end{equation}
    To ensure $\|f_n-f^*\|_{\infty} \leq \epsilon$, we set $\|f_1-f^*\|_{\infty}$ and $\|f_1-f_n\|_{\infty}$ to be less than or equal to $\frac{\epsilon}{2}$. Based on the upper bound for $\|f_1-f^*\|_{\infty}$ as given in (\ref{eq: Taylor_error}), we establish
    \begin{equation*}
        \frac{2^d d^{\beta} J}{\beta!}\left(\frac{1}{N'}\right)^{\beta}= \frac{\epsilon}{2}  .
    \end{equation*}
    This setting confirms that $\|f_1-f^*\|_{\infty} \leq \frac{\epsilon}{2}$, leading to $N' = \left( \frac{{\beta}! \epsilon}{2^{d+1} d^{\beta} J} \right)^{-1/{\beta}} = O(\epsilon^{-1/\beta})$. Consequently, we have $T = O((N')^{MN}) = O(\epsilon^{-\frac{MN}{\beta}})$ and $K = (N'+1)M = O(M\epsilon^{-1/\beta})$.

    Finally, we set both terms in \eqref{eq: in-out diff} equal to $\frac{\epsilon}{4}$, leading to 
    \begin{align*}
        TC(MN+\beta)(N(aK)^N)^{MN+\beta+2}\delta_1 = \frac{\epsilon}{4}, \\
        TC(MN+\beta)(N^2(aK)^N)^{MN+\beta}\delta_2 = \frac{\epsilon}{4}, \\
    \end{align*}
    disregarding lower-order terms and constants, we have
    \begin{eqnarray}
        \delta_1 = O(\frac{\epsilon}{T N^{MN+\beta}(aK)^{MN^2+\beta N}}) = O(\epsilon^{\frac{MN^2+N\beta}{\beta}}),
        \label{delta1} \\
        \label{delta2}
        \delta_2 = O(\frac{\epsilon}{T N^{2MN+2\beta}(aK)^{MN^2+\beta N}}) = O(\epsilon^{\frac{MN^2+N\beta}{\beta}}).
    \end{eqnarray}
     
    Utilizing $\delta_1$ and $\delta_2$, we determine the bounds for the structural parameters of the PIE $\psi(\varphi(Y))$, referencing Lemma \ref{lem: relu}. Notably, $\psi(\varphi(Y)) = \sum_{t=1}^T \psi_t(\varphi_t(Y))$ signifies that the outlier network $\psi$ is a composition of $T$ sub-networks $\psi_t$, each being a weighted average of $N^{MN+\beta}$ distinct $\tilde{\psi}_{\s}$ functions. Each $\tilde{\psi}_{\s}$ is recursively constructed using at most $MN+\beta$ instances of $\tilde{x}_2$, as indicated in (\ref{eq: approx_decomp}), where $\tilde{x}_2(\cdot)$ possesses a depth and weight count of $O(\ln(1/\delta_2))$. Thus, each $\tilde{\psi}_{\s}$ exhibits a depth and parameter count of at most $(MN+\beta)O(\ln(1/\delta_2))$. Consequently, $\psi_t$ represents a weighted aggregation of up to $N^{MN+\beta}$ distinct $\tilde{\psi}_{\s}$ functions. Therefore, we establish that 
    \begin{align*}
        &L_{\psi} = (MN+\beta)O(\ln(1/\delta_2)) = O(\ln(\epsilon^{-\frac{MN^2+N\beta}{\beta}})),  \\
        &S_{\psi} = TN^{MN+\beta}(MN+\beta)O(\ln(1/\delta_2)) = O(\epsilon^{-\frac{MN}{\beta}}).
    \end{align*}
    Likewise, the inner network $\varphi$ comprises $T$ sub-networks $\varphi_1,\ldots, \varphi_T$, each defined iteratively using $N$ instances of $\tilde{x}_1$ and $T$ distinct learnable weights $w_t$, as outlined in \eqref{eq: varphi_construction}. Hence, we ascertain
    \begin{align*}
        &L_{\varphi} = NO(\ln(1/\delta_1)) = O(\ln(\epsilon^{-\frac{MN^2+N\beta}{\beta}})),\\
        &S_{\varphi} = KT + TN O(\ln(1/\delta_1)) = O(\epsilon^{-\frac{MN}{\beta}}).
    \end{align*}

    \subsection{Estimation error}
    \label{sec: est}
    Here, we evaluate the term
    \begin{equation*}
        \frac{2}{n} \sum_{i=1}^n \xi_i(\hat{f}(Y_{(i)}) - f(Y_{(i)})).
    \end{equation*}
    As defined in (\ref{eq: Y}), here $Y_{(i)} \in \R^{K\times N}$ is the transformation of the $i$-th sample $X_{(i)} \in \R^{M\times N}$ in the dataset.
    To employ concentration inequalities, we examine the expectation  of the upper bound of this term
    
    \begin{equation*}
        \E_{\xi}\left[\sup_{f\in \Fcal}\left|\frac{2}{n}\sum_{i=1}^n \xi_i(\hat{f}(Y_{(i)}) - f(Y_{(i)}))\right|\right].
    \end{equation*} 
    
    Initially, we define a subset $\Fcal_{\delta} \subset \Fcal$ as
    \begin{equation*}
        \Fcal_{\delta} = \{f-\hat{f} : \|f-\hat{f}\|_n \leq \delta, f\in \Fcal\}.
    \end{equation*}
    where $\delta$ is finite, given that functions in $\Fcal_{\delta}$ are bounded as
    \begin{equation*}
        \|f-\hat{f}\|_n \leq \|f-\hat{f}\|_{\infty} \leq \|f\|_{\infty} + \|\hat{f}\|_{\infty} \leq 2J, \quad f\in \Fcal.
    \end{equation*}

    Thus, by applying the chaining technique from Lemma \ref{lem:F-Chaining}, we deduce
    \begin{equation}
        E_{\xi} \left[ \sup_{f'\in \Fcal_{\delta}} \left| \frac{1}{n} \sum_{i=1}^n\xi_if'(Y_{(i)})\right| \right] \leq \frac{4\sqrt{2}\sigma}{n^{1/2}}\int_0^{\delta/2}\sqrt{2\log\Ncal(\epsilon', \Fcal_{\delta}, \| \cdot \|_{\infty})} d\epsilon',
    \label{expectation}
    \end{equation}
    and subsequently, we establish a bound for $\log\Ncal(\epsilon', \Fcal_{\delta}, \|\cdot\|_{\infty})$, analogous to Theorem 2 in \citep{cover}.
    \begin{lemma}
        \label{lem: cover}
        Let $f:[-M,M]^d \mapsto \R$ be fully connected ReLU neural network. Suppose $f$ has $L$ layers $W_1,\ldots, W_L$ with weights bounded by $B$ and let $\sigma$ denotes the ReLU activation function, we can represent $f$ as
        \begin{equation*}
            f(X) = W_L\sigma(W_{L-1} \cdots\sigma(W_1 X)\cdots).
        \end{equation*}
        Let $p_l$ denotes the width of layer $W_l$, then $f(X)$ is upper bounded by $\prod_{l=1}^L(p_lB)M = MB^{L}\prod_{l=1}^Lp_l$ and is Lipschitz continues with Lipschitz constant $\prod_{l=1}^L(p_lB)=B^{L}\prod_{l=1}^L p_l$. Suppose $f^*$ is another neural network with same structure as $f$ but with all the parameters $\epsilon$ away from $f$, then
        \begin{equation*}
            \|f-f^*\|_{\infty}\leq \epsilon \prod_{l=1}^L(p_lB)L M = \epsilon LM B^{L}\prod_{l=1}^Lp_l.
        \end{equation*}
    \end{lemma}
    Lemma \ref{lem: cover} is an application of Lemma 8 in \citep{cover}, we skip the detailed proof.
    
    Consider $\psi(\varphi), \psi^*(\varphi^*) \in \Fcal$, which are two PIEs with identical structures. However, each corresponding parameter in these PIEs differs by no more than $\zeta$. We aim to establish a bound for $\|\psi(\varphi) - \psi^*(\varphi^*)\|_{\infty}$
    
    \begin{equation*}
        |\psi(\varphi(Y))-\psi^*(\varphi^*(Y))| \leq \sum_{t=1}^T|\psi_t(\varphi_t(Y))-\psi^*_t(\varphi^*_t(Y)) |.
    \end{equation*}
    
    To further analyze this, we decompose the last term as follows
    \begin{equation}
        |\psi_t(\varphi_t(Y))-\psi^*_t(\varphi^*_t(Y)) |\leq 
        |\psi_t(\varphi_t(Y))-\psi_t(\varphi^*_t(Y))|+|\psi_t(\varphi^*_t(Y))-\psi^*_t(\varphi^*_t(Y)) |.
        \label{eq: cover-decomp}
    \end{equation}

    \textbf{Bound of the first term in \eqref{eq: cover-decomp}}
    
    We first bound $\|\varphi_t(Y)-\varphi_t^*(Y)\|_{\infty}$. Recall that $\varphi_t$ is iteratively defined with ReLU network $\tilde{x}_1$,which is a special case of $f$ of Lemma \ref{lem: cover}. Here we use $L_{\tilde{x}_1}$ to denote the number of layers in $\tilde{x}_1$. From the construction of $\varphi$, we know $L_{\varphi} = N L_{\tilde{x}_1}$. From Lemma\ref{lem: relu}, the width of $\tilde{x}_1$ is bounded by $C_w$. We apply Lemma \ref{lem: cover} with $L$ being $L_{\tilde{x}_1}$ and $p_l = C_w$, so for any $t,n$
    \begin{align*}
        \varphi_{t,1,n}(Y_n) &= w_t^{\top}Y_n \leq BK, \\
        \varphi_{t,2,n}(Y_n) &= \tilde{x}_1(\varphi_{t,1,n}(Y_n), \varphi_{t,1,n}(Y_n)) \leq (C_wB)^{L_{\tilde{x}_1}}BK, \\
        \varphi_{t,k+1,n}(Y_n) &= \tilde{x}_1(\varphi_{t,1,n}(Y_n), \varphi_{t,k,n}(Y_n)) \leq (C_wB)^{L_{\tilde{x}_1}} |\varphi_{t,k,n}(Y_n)| \leq (C_wB)^{i\times L_{\tilde{x}_1}}BK.
    \end{align*}
    Thus, $\varphi_{t,N,n}(Y_n) \leq (C_wB)^{L_{\varphi}} BK$ and $\varphi_t(Y) \leq N (C_wB)^{L_{\varphi}} BK$. By defining $\Delta_{t,k,n} = |\varphi_{t,k,n}(Y_n) - \varphi_{t,k,n}^*(Y_n)|$, we iteratively bound the differences
    
    \begin{align*}
        \Delta_{t,1,n} &= |w_t^{\top}Y_n-w_t^{*\top}Y_n| \leq K \zeta ,\\
        \Delta_{t,k,n} &\leq |\tilde{x}_1(\varphi_{t,1,n}(Y_n),\varphi_{t,k-1,n}(Y_n)) - \tilde{x}_1(\varphi_{t,1,n}(Y_n) \varphi_{t,k-1,n}^*(Y_n))| \\
        &+ |\tilde{x}_1(\varphi_{t,1,n}(Y_n),\varphi_{t,k-1,n}^*(Y_n)) - \tilde{x}_1^*(\varphi_{t,1,n}(Y_n), \varphi_{t,k-1,n}^*(Y_n))|\\
        &\leq (C_wB)^{L_{\tilde{x}_1}}\Delta_{t,k-1,n} + \zeta (C_wB)^{L_{\tilde{x}_1}}L_{\tilde{x}_1}\max(\varphi_{t,1,n}(Y_n),\varphi_{t,k-1,n}^*(Y_n)) \\
        &\leq (C_wB)^{L_{\tilde{x}_1}}\Delta_{t,k,n} + \zeta (C_wB)^{L_{\tilde{x}_1}}L_{\tilde{x}_1}(C_wB)^{i-1\times L_{\tilde{x}_1}}BK.
    \end{align*}
    
    Through recursive application, $\Delta_{t,N,n}$ is bounded by $\zeta (C_wB)^{N \times L_{\tilde{x}_1}} L_{\varphi} BK = \zeta (C_wB)^{L_{\varphi}} L_{\varphi} BK$, allowing us to establish a bound for $\|\varphi_t(Y) - \varphi_t^*(Y)\|_{\infty}$
    \begin{align}
        &\|\varphi_t(Y)-\varphi_t^*(Y)\|_{\infty} \nonumber\\
        &= \left\|\left[\sum_{n=1}^N\Delta_{t,1,n}(Y_n),\ldots, \sum_{n=1}^N\Delta_{t,N,n}(Y_n)\right]\right\|_{\infty} \nonumber\\
        &\leq \sum_{n=1}^N\Delta_{t,N,n}(Y_n) \nonumber\\
        &\leq \zeta (C_wB)^{L_{\varphi}}L_{\varphi}BKN.
        \label{eq: inner_epsilon}
    \end{align}
    
    To finalize the bound on the first term in \eqref{eq: cover-decomp}, it is necessary to establish the Lipschitz continuity of $\psi_t$. Assume $x, y \in \R^N$, then
    \begin{equation*}
        |\psi_t(x)-\psi_t(y)| = | \sum_{i = 0}^{MN+\beta} \sum_{\s \in \{ 1,..,N \}^i} \gamma_{t,\s} (\tilde{\psi}_{\s}(x)-\tilde{\psi}_{\s}(y))|.
    \end{equation*}
    According to Lemma \ref{lem: relu}, $\tilde{x}_2$ is Lipschitz continuous with a Lipschitz constant of $(C_wB)^{L_{\tilde{x}_2}}$. Let $\Gamma_i = |\tilde{\psi}_{\s,i}(x) - \tilde{\psi}_{\s,i}(y)|$, we have 
    \begin{align*}
        \Gamma_1 &= |x_{\s_1}-y_{\s_1}| \leq \|x-y\|_{\infty}, \\
        \Gamma_{i+1} &=\tilde{x}_2(\tilde{\psi}_{\s,i}(x), x_{\m_{i+1}}) - \tilde{x}_2(\tilde{\psi}_{\s,i}(y), y_{\m_{i+1}}) \\
        &\leq (C_wB)^{L_{\tilde{x}_2}} \max\{\Gamma_i, |x_{\s_{i+1}}-y_{\s_{i+1}}|\} \\
        &\leq (C_wB)^{L_{\tilde{x}_2}} \Gamma_i.
    \end{align*}
    After some computations, we find $|\tilde{\psi}_{\s}(x) - \tilde{\psi}_{\s}(y)| \leq (C_wB)^{L_{\psi}}\|x - y\|_{\infty}$. Consequently, $\psi_t$ exhibits Lipschitz continuity with
    \begin{equation*}
        |\psi_t(x)-\psi_t(y)| \leq N^{MN+\beta} (C_wB)^{L_{\psi}}\|x-y\|_{\infty}.
    \end{equation*}
    Therefore, the first term in \eqref{eq: cover-decomp} can be bounded as follows
    \begin{align}
        \|\psi_t(\varphi_t(Y))-\psi_t(\varphi^*_t(Y))\|_{\infty} &\leq N^{MN+\beta} (C_wB)^{L_{\psi}}\|\varphi_t(Y)-\varphi_t^*(Y)\|_{\infty} \nonumber \\
        &\leq \zeta N^{MN+ \beta}(C_wB)^{L_{\psi}+L_{\varphi}}L_{\varphi}K,
        \label{eq: est-first}
    \end{align}
    where the last inequality is derived by incorporating \eqref{eq: inner_epsilon} and disregarding the term $BN$.
    
    \textbf{Bound of the second term in \eqref{eq: cover-decomp}}
    
    With the similar argument as in \eqref{eq: inner_epsilon}, we derive the following bound
    \begin{align*}
        \|\psi_t(\varphi^*_t(Y))-\psi^*_t(\varphi^*_t(Y)) \|_{\infty} \leq |\sum_{i = 0}^{MN+\beta} \sum_{\s \in \{ 1,..,N \}^i} (\gamma_{t, \s}\tilde{\psi}_{\s}-\gamma_{t, \s}\tilde{\psi}^*_{\s})(\varphi^*_t(Y))|.
    \end{align*}
    Additionally, we establish that
    \begin{align*}
        |(\gamma_{t, \s}\tilde{\psi}_{\s}-\gamma_{t, \s}\tilde{\psi}^*_{\s})(\varphi^*_t(Y))| &\leq |(\gamma_{t, \s}\tilde{\psi}_{\s}-\gamma_{t, \s}\tilde{\psi}^*_{\s}+\gamma_{t, \s}\tilde{\psi}^*_{\s}-\gamma_{t, \s}\tilde{\psi}^*_{\s})(\varphi^*_t(Y))| \\
        & \leq |\gamma_{t,\s} \zeta(C_wB)^{L_{\psi}}L_{\psi}\|\varphi^*_t(Y)\|_{\infty}| + |(\gamma_{t,\s}-\gamma_{t,\s}^*)(C_wB)^{L_{\psi}}\|\varphi^*_t(Y)\|_{\infty}| \\
        & \leq [\zeta B(C_wB)^{L_{\psi}}L_{\psi} + \zeta (C_wB)^{L_{\psi}}]\|\varphi^*_t(Y)\|_{\infty} \\
        & \leq \zeta (C_wB)^{L_{\psi}}L_{\psi}(C_wB)^{L_{\varphi}}K \\
        & = \zeta (C_wB)^{L_{\psi}+L_{\varphi}}L_{\psi}K.
    \end{align*}
    In a similar fashion to our earlier derivation for $\varphi$, we apply bounds to $\tilde{\psi}_{\s}$ and evaluate the discrepancy between $\tilde{\psi}_{\s}$ and $\tilde{\psi}^*_{\s}$. For simplicity, we omit the lower order term $B$ in the fourth inequality. Consequently, we obtain
    \begin{equation}
        \|\psi_t(\varphi^*_t(Y))-\psi^*_t(\varphi^*_t(Y)) \|_{\infty} \leq \zeta (C_wB)^{L_{\psi}+L_{\varphi}}L_{\psi}N^{MN+\beta}K .
        \label{eq: est-second}
    \end{equation}
    To summarize, we bring (\ref{eq: est-first}) and (\ref{eq: est-second}) together
    \begin{equation}
    \label{est_error}
        \|\psi(\varphi(Y))-\psi^*(\varphi^*(Y))\|_{\infty}\leq \zeta T(C_wB)^{L_{\psi}+L_{\varphi}}(L_{\psi}+L_{\varphi})N^{MN+\beta}K .
    \end{equation}

    Given that the total number of parameters is constrained by $S_{\psi} + S_{\varphi}$, with each parameter bounded by $B$, we discretize the range of each parameter using a grid of size
    \begin{equation*}
        \Delta = \epsilon'/T(C_wB)^{L_{\psi}+L_{\varphi}}(L_{\psi}+L_{\varphi})N^{MN+\beta}K ,
    \end{equation*}
    which leads to an upper bound on the covering number as follows
    \begin{equation*}
        \log\Ncal(\epsilon', \Fcal_{\delta}, \| \cdot \|_{\infty}) \leq \log((\frac{2B}{\Delta})^{S_{\psi}+S_{\varphi}})= (S_{\psi}+S_{\varphi})\log(\frac{BT(C_wB)^{L_{\psi}+L_{\varphi}}(L_{\psi}+L_{\varphi})N^{MN+\beta}K }{\epsilon'}).
    \end{equation*}
    
    Returning to \eqref{expectation} and integrating, we derive
    \begin{equation*}  
        E_{\xi} \left[ \sup_{f'\in \Fcal_{\delta}} \left| \frac{1}{n} \sum_{i \in [n]}\xi_if'(Y_{(i)})\right| \right] \leq 2\sqrt{2} \frac{\sigma \sqrt{S_{\psi}+S_{\varphi}}\delta }{n^{1/2}} \left( \log  \frac{(T(C_wB)^{L_{\psi}+L_{\varphi}}(L_{\psi}+L_{\varphi})N^{MN+\beta}K}{ \delta} +1 \right).
    \end{equation*}
    Utilizing this bound for the expectation , we apply the Gaussian concentration inequality, as presented in \citep{foundation}, by considering $\frac{1}{n}\sum_{i=1}^n\xi_i f'(Y_{(i)})$ as the Gaussian process $X(t)$. The variance of $\frac{1}{n}\sum_{i=1}^n\xi_i f'(Y_{(i)})$ is upper bounded by $\frac{\sigma^2 \delta^2}{n}$, thus
    \begin{align}
        1 &-exp(-nu^2/2\sigma^2 \delta^2) \nonumber \\
        &\leq P_{\xi}\left(2 \sup_{f'\in \Fcal_{\delta}}|\frac{1}{n}\sum_{i=1}^n\xi_i f'(Y_{(i)})|\leq 2E_{\xi} [ \sup_{f'\in \Fcal_{\delta}} | \frac{1}{n} \sum_{i \in [n]}\xi_if'(Y_{(i)})| ]+u\right) \nonumber \\
        & \leq P_{\xi}\left(2 \sup_{f'\in \Fcal_{\delta}}|\frac{1}{n}\sum_{i=1}^n\xi_i f'(Y_{(i)})|\leq V_n \delta  (\log  \frac{(T(C_wB)^L LN^{MN+\beta}K}{\delta} +1)+u\right), \label{eq: est-concentration}
    \end{align}
    where $L=L_{\psi}+L_{\varphi}$, $S=S_{\psi}+S_{\varphi}$ and $V_n = 4\sqrt{2} \frac{\sigma \sqrt{S}}{n^{1/2}}$.
    
    Let $\delta = \max\{\|\hat{f}-f\|_n, V_n\}$, then by the definition of $\Fcal_{\delta}$, we have
    \begin{equation*}
        \sup_{f\in \Fcal}|\frac{1}{n}\sum_{i=1}^n\xi_i (\hat{f}(Y_{(i)}-f(Y_{(i)}))| \leq \sup_{f'\in \Fcal_{\delta}}|\frac{1}{n}\sum_{i=1}^n\xi_i f'(Y_{(i)})|.
    \end{equation*}
    Thus, for any $f \in \Fcal$, it follows that
    \begin{align}
        &| \frac{2}{n}\sum_{i=1}^n\xi_i(\hat{f}(Y_{(i)})-f(Y_{(i)}))| \nonumber\\
        &\leq \max\{\|\hat{f}-f\|_n, V_n\}\left\{V_n (\log  \frac{(T(C_wB)^L LN^{MN+\beta}K}{V_n} +1)\right\} +u \nonumber\\
        &\leq \frac{1}{4}(\max\{\|\hat{f}-f\|_n, V_n\})^2+2\left\{V_n(\log  \frac{(T(C_wB)^L LN^{MN+\beta}K}{V_n} +1) \right\}^2 +u\label{eq: after_xy},
    \end{align}
    with probability at least $1 -exp(-nu^2/2\sigma^2 \delta^2$). The last inequality holds since $xy\leq \frac{1}{4}x^2+2y^2$. Utilizing the inequality given in \eqref{eq: basic_inequality}, we have
    \begin{equation*}
        -\frac{2}{n}\sum_{i=1}^n\xi_i(\hat{f}(Y_{(i)})-f(Y_{(i)})) + \|f^*-\hat{f}\|_n^2 
        \leq \|f^*-f\|_n^2 .
    \end{equation*}
    Applying the inequality $\frac{1}{2}\|\hat{f} - f\|_n^2 \leq \|f - f^*\|_n^2 + \|f^* - \hat{f}\|_n^2$, we obtain
    \begin{equation}
        -\frac{2}{n}\sum_{i=1}^n\xi_i(\hat{f}(Y_{(i)})-f(Y_{(i)})) + \frac{1}{2}\|\hat{f}-f\|^2_n \leq 2\|f^*-f\|_n^2. \label{eq: modify_basic}
    \end{equation}
    Combining \eqref{eq: modify_basic} and \eqref{eq: after_xy}, we derive
    \begin{equation*}
        -\frac{1}{4}(\max\{\|\hat{f}-f\|_n, V_n\})^2-2\left\{V_n(\log  \frac{(T(C_wB)^L LN^{MN+\beta}K}{V_n} +1) \right\}^2 -u + \frac{1}{2}\|\hat{f}-f\|^2_n \leq 2\|f^*-f\|_n^2.
    \end{equation*}
    It can be verified that whether $\|\hat{f} - f\|_n \geq V_n$ or $\|\hat{f} - f\|_n \leq V_n$, the following holds
    \begin{equation}
        \|\hat{f}-f\|_n \leq 4\left\{V_n(\log  \frac{(T(C_wB)^L LN^{MN+\beta}K}{V_n} +1) \right\}^2 +2u +4\|f^*-f\|_n^2.
        \label{eq: bound-hatf-f}
    \end{equation}
    
    Apply (\ref{eq: bound-hatf-f}) to the inequality $\frac{1}{2}\|\hat{f}-f^*\|_n^2\leq \|f^*-f\|^2_{n}+\|\hat{f}-f\|_n^2$, we obtain
    \begin{equation}
        \|\hat{f}-f^*\|_n^2 \leq 10\|f^*-f\|_n^2+ 8\left\{V_n(\log  \frac{(T(C_wB)^L LN^{MN+\beta}K}{V_n} +1) \right\}^2 +4u,
        \label{eq: final-decomp-results}
    \end{equation}
     with probability at least $1-\exp(-nu^2/2\sigma^2 \delta^2)$ for all $u>0$.

    \subsection{Overall order}
    
    Recall that \eqref{eq: final-decomp-results} is valid for any $f \in \Fcal$, allowing us to select $f$ as $f_n$ from Section \ref{sec: app}. So that $\delta^2 = \max\{\|\hat{f}-f\|^2_n, V_n^2\}$. Then we set $u = \frac{1}{16} \delta^2$
    \begin{align}
        \|\hat{f}-f^*\|_n^2 &\leq 10\|f^*-f_n\|_n^2+ 8\left\{V_n(\log  \frac{(T(C_wB)^L LN^{MN+\beta}K}{V_n} +1) \right\}^2 + \frac{1}{4}(\|\hat{f}-f_n\|_n^2 + V_n^2) \nonumber \\ 
        &\leq 21\|f^*-f_n\|_n^2+ 17\left\{V_n(\log  \frac{(T(C_wB)^L LN^{MN+\beta}K}{V_n} +1) \right\}^2 , 
        \label{eq: u1}
    \end{align}
    with probability at least $1-\exp(-n \delta^2/2\sigma^2)$. The latter inequality follows since $\frac{1}{2}\|\hat{f}-f_n\|_n^2\leq \|f^*-f_n\|^2_{n}+\|\hat{f}-f^*\|_n^2$. Let $\epsilon=n^{-\frac{\beta}{MN+2\beta}}$ and substitute the order of of $S_{\psi}, S_{\varphi}, L_{\psi}, L_{\varphi}$ from Section \ref{sec: app}. We obtain that $V_n = O(n^{-\frac{2\beta}{MN+2\beta}})$, therefore \eqref{eq: u1} holds with probability $1-\exp(-n \delta^2/2\sigma^2) \geq 1-\exp(-n V_n^2/2\sigma^2)$ converging to one. Then we take the expectation of both sides of the inequality 
    with respect to $P_X$
    
    \begin{align}
        \|&\hat{f}-f^*\|_{L_2(P_X)}^2 \nonumber \\
        & \leq 21\|f_n-f^*\|^2_{L_2(P_X)} + 17\left\{V_n(\log  \frac{(T(C_wB)^L LN^{MN+\beta}K}{V_n} +1) \right\}^2  \nonumber\\
        & \leq 21\epsilon^2 + 17\left\{V_n(\log  \frac{(T(C_wB)^L LN^{MN+\beta}K}{V_n} +1) \right\}^2 , \label{eq: final-step}
    \end{align}
    where the latter inequality follows since $\|f_n - f^*\|_{L_2(P_X)} \leq \|f_n - f^*\|_{\infty} \leq \epsilon^2$. Disregarding constants and lower-order terms, we obtain
    \begin{align*}
        \|\hat{f}-f^*\|_{L_2(P_X)}^2 &\leq 21\epsilon^2 + 17\left\{V_n(\log\frac{\epsilon^{-\frac{MN}{\beta}}(C_wB)^{\ln(\epsilon^{-\frac{MN^2+N\beta}{\beta}}+1)}}{V_n})\right\}^2 \\
        & = O(n^{-\frac{2\beta}{MN+2\beta}}),
    \end{align*}  
    with probability converging to one. So we conclude
    \begin{equation}
        \| \hat{f} - f^*\|^2 _{L^2(P_X)} = O_p(n^{-\frac{2\beta}{MN+2\beta}}).
        \label{eq: final-rate}
    \end{equation}
    
    \section{Proof of Theorem \ref{theo: minimax}}
    This section is devoted to deriving a lower bound for the $L_2$ minimax risk associated with the class of permutation invariant functions. Additionally, we aim to demonstrate that the PIE proposed in this study constitutes an optimal estimator in the minimax framework.
    
    \begin{definition}[$L_2$ minimax risk]
        Given a random variable $X$ following a probability measure $P_X$ on $\mathbb{R}^d$, the $L_2$ minimax risk of estimation associated with any function space $I \in L_2(P_X)$ is defined as
        \begin{equation*}
            r^2_n(I,P_X,\sigma) = \inf_{\hat{f}\in \Acal_n}\sup_{f\in I}\|\hat{f}-f\|^2_{L_2(P_X)},
        \end{equation*}
        where $\Acal_n$ is the space of all measurable functions of data in $L_2(P_X)$ and $\sigma$ is the variance of Gaussian noise in the data generating process.
    \end{definition}
    Given the definition of permutation invariance as specified in Assumption \ref{def: PI}, it follows that $P^{\beta, 1\times MN}$ is a subset of $P^{\beta, M\times N}$. Consequently, $r^2_n(P^{\beta, M\times N},P_X,\sigma) \geq r^2_n(P^{\beta, 1 \times MN},P_X,\sigma)$. This implies that any lower bound on the $L_2$ minimax risk of $P^{\beta, 1 \times MN}$ also serves as a valid lower bound for $P^{\beta, M\times N}$. For ease of notation, let us denote $d = MN$ and use $P^{\beta, d}$ to refer to $P^{\beta, 1\times MN}$. 
    
    We try to get the $L_2$ minimax lower bound using the relationship between minimax risk and packing number. So we first bound the packing number $\Tcal(\epsilon, P^{\beta, d}, \|\cdot \|_{L_2})$ using the lemma that follows
    
    \begin{lemma}
        The packing number of $P^{\beta, d}$ is lower bounded with 
        \begin{equation*}
            \Tcal(\epsilon, P^{\beta, d}, \|\cdot \|_{L_2}) \geq\exp\{M_0 (1/\epsilon)^{d/\beta}\}. 
        \end{equation*} 
    In addition,  we have 
        \begin{equation*}
            \Tcal(\epsilon, P^{\beta, d}, \|\cdot \|_{L_2(P_X)}) \geq\exp\{M_0 (\underline{P_X}/\epsilon)^{d/\beta}\}, 
        \end{equation*}
        where $M_0$ depends on only $\beta$ and $d$. 
    \end{lemma}
    
    \begin{proof}
        In alignment with the approach outlined in Lemma 6.1 of \citep{minimax}, we define
        \begin{equation*}
            \Kcal(X_1, \ldots, X_d) = \prod_{j=1}^d \Kcal_0(X_j), \quad \Kcal_0(t) = te^{-1/(1-t^2)}\I(|t|\leq 1),t\in \mathbb{R}. 
        \end{equation*}  
        For arbitrary radius $h \in (0,1/2)$, we evenly split $[0,1]$ into $m = \left\lceil \frac{1}{2h} \right\rceil $ intervals and obtain $D = m^d$ rectangular grid points $\left(\frac{j_1}{m}-h, \ldots, \frac{j_d}{m}-h\right)$  with $\left(j_1,\ldots, j_d\right) \in \left\{ 1,\ldots, m\right\}^d$. We use $\left\{x_k: k=1\ldots, D\right\}$ to represent these grids and assume $h$ is small enough to ensure $D\geq 8$. 
        
        For simplicity, let $\Wcal = \Wcal_J^{\beta,\infty}([0, 1]^{MN})$ and let $\|\cdot\|_{\Wcal}$ denote the norm of the Soblev space. Correspondingly, we construct functions $\phi_k$ for each $1 \leq k \leq D$
        \begin{equation*}
            \phi_k(x) = \frac{1}{\| \Kcal \|_{\Wcal} }h^{\beta}\Kcal\left(\frac{x-x_k}{h}\right), \quad x\in [0,1]^d,
        \end{equation*}
        with each $\phi_k$ having its support confined to $[x_k - h, x_k + h]^d$. It's noteworthy that $\Kcal_0(t)$ is an odd function, ensuring that $\int \psi_k(x) dx = 0$. Moreover, the supports of $\phi_k$ and $\phi_j$ for $k \neq j$ do not overlap, thus allowing for
    \begin{equation}
        \int \phi_k \phi_j dx =0,  \quad 
                \| \phi_1 + \dots + \phi_k \|_{L_2}^2 = \sum_{i=1}^k \| \phi_i \|_{L_2}^2 = kh^{2\beta+d}\frac{\| \Kcal \|_{L_2}}{\| \Kcal \|_{\Wcal} }. 
                \label{eq: sum-ind}
    \end{equation}      
        Furthermore, we define permutation invariant functions $\phi^*_k$ based on $\phi_k$ as 
        \begin{equation}
            \phi^*_k(x) = \frac{1}{|S_d|^{1/2}} \sum_{\varPi \in S_d} \frac{1}{\| \Kcal \|_{\Wcal} }h^{\beta}\Kcal\left(\frac{x-\varPi_{x_k}}{h}\right), \quad x\in [0,1]^d,
            \label{eq: phi*}
        \end{equation}
        so $\phi_k^*(x)$ is a permutation invariant function for any $k$ and
        \begin{align*}
            \| \phi^*_k \|^2_{L_2} &= \frac{1}{|S_d|} \left\| \sum_{\varPi \in S_d}  \frac{1}{\| \Kcal \|_{\Wcal} }h^{\beta}\Kcal\left(\frac{x-\varPi_{x_k}}{h}\right) \right\|_{L_2}^{2} \\
            &\geq \frac{1}{|S_d|}  \sum_{\varPi \in S_d}  \left\| \frac{1}{\| \Kcal \|_{\Wcal} }h^{\beta}\Kcal\left(\frac{x-\varPi_{x_k}}{h}\right) \right\|_{L_2}^{2} \\
            &=h^{2\beta+d}\frac{\| \Kcal \|_{L_2}}{\| \Kcal \|_{\Wcal} },
        \end{align*}
        
        where the second inequality becomes equal when for all permutations $\varPi$ in $S_d$, $\varPi_{x_k} \neq x_k$. Consider the total number of distinct functions $\phi^*_k$, denoted as $D^*$. In (\ref{eq: phi*}), we construct a set of $D$ functions  $\phi_k^*$ derived from $\phi_k$. Specifically, each $\phi^*_k$ is identical to $\phi^*_j$ for distinct indices $k$ and $j$ if and only if there exists a permutation $\varPi$ such that $\varPi_{x_k}$ equals $x_j$. Considering that there are at most $|S_d| \leq d^d$ distinct permutations for the $d$-dimensional vector $x_k$, we can infer that $D^*$ is bounded from below by $D/d^d$.
    
    
        Let $\Omega=\{0,1\}^{D^*}$ and for each $\omega \in \Omega$ define $f_{\omega} = \sum_{k=1}^{D^*}\omega_k \phi^*_k$. It is evident that each $f_{\omega}$ belongs to $P^{\beta, d}$
        \begin{equation*}
            \| f_{\omega} - f_{\omega'}\|_{L_2} = \left\{\sum_{k=1}^{D^*}(\omega_k -  \omega_k') ^2\int {\phi_k^*}^2 dx \right\}^{1/2} \geq  h^{\beta+d/2}\frac{\| \Kcal \|}{\| \Kcal \|_{\Wcal}} \psi^{1/2} (\omega, \omega'),
        \end{equation*}
        where $\psi(\omega, \omega') = \sum_{k=1}^{D^*}\I (\omega_k \neq  \omega_k')$ represents the hamming distance. Applying the Varshamov-Gilbert bound from coding theory, it can be shown that there exist $U \geq 2^{D^*/8}$ binary strings $\omega^{(1)}, \ldots, \omega^{(U)} \in \Omega$ such that $\psi(\omega^{(k)}, \omega^{(k')}) \geq D^*/8$ for $0\leq k \leq k' \leq A$. Then
    
        \begin{equation*}
            \|f_{\omega^{(k)}}-f_{\omega^{(k')}}\|_{L_2}\geq h^{\beta+d/2}\frac{\|\Kcal\|_{L_2}}{\|\Kcal\|_{\Wcal}}\sqrt{\frac{D^*}{8}}\geq M_1h^{\beta},
        \end{equation*}
        where $M_1=\|\Kcal\|_{L_2}/\{2^{(d+3)/2}d^{d/2}\|\Kcal\|_{\Wcal}\}$ is a constant that depends solely on $\beta$ and $d$. By setting $\epsilon = M_1h^{\beta}$, we are able to identify $U$ distinct functions within the function class $P_{\beta, d}$. These functions are separated from each other by at least $\epsilon$ with respect to the $L_2$ norm distance, with
        \begin{align*}
            U &\geq\exp\{D^*\log(2)/8\} \nonumber \\
            & \geq\exp\{m^d \log(2)/d^d8\} \nonumber \\
            & \geq\exp\{(1/2h)^d \log(2)/d^d8\} \nonumber \\
            & \geq\exp\{(M_1/\epsilon)^{d/\beta} \log(2)/d^d2^{d+3}\} \nonumber \\
            & =\exp\{M_0(1/\epsilon)^{d/\beta}\},
        \end{align*} 
        where $M_0 = (M_1^{d/\beta} \log(2))/d^d2^{d+3}$ depends on only $d$ and $\beta$. Consequently, in accordance with the definition of the packing number, we have
        \begin{equation*}
            \Tcal(\epsilon, P_{\beta, d}, \|\cdot \|_{L_2}) \geq\exp\{M_0 (1/\epsilon)^{d/\beta}\}.
        \end{equation*}
    For any two functions $f, f' \in P^{\beta,d}$ that satisfy $\|f - f'\|_{L_2} \geq \epsilon$, it follows that $\|f - f'\|_{L_2(P_X)} \geq \epsilon \underline{P_X}$. Therefore
        \begin{equation*}
            \Tcal(\epsilon \underline{P_X}, P_{\beta, d}, \|\cdot \|_{L_2(P_X)}) \geq\exp\{M_0 (1/\epsilon)^{d/\beta}\},
        \end{equation*}
        and upon setting $\epsilon' = \underline{P_X}\epsilon$, we deduce
        \begin{equation*}
            \Tcal(\epsilon', P_{\beta, d}, \|\cdot \|_{L_2(P_X)}) \geq\exp\{M_0 (\underline{P_X}/\epsilon)^{d/\beta}\}.
        \end{equation*}
    \end{proof}
    
    By Theorem 6 of \citep{yuhongy}, the minimax risk $r_n$ is the solution to $\log(\Tcal(r_n,P^{\beta, d}, \|\cdot\|_{L_2(P_X)}))=\frac{nr_n^2}{\sigma}$. From the definition of packing number, $\Tcal(r_n,P^{\beta, d}, \|\cdot\|_{L_2(P_X)})$ decreases as $r_n$ increases, while $r^2_n$ increases with $r_n$. So any $r$ satisfying $\Tcal(r,P^{\beta, d}, \|\cdot\|_{L_2(P_X)})\geq \frac{nr^2}{\sigma}$ is a lower bound of $r_n$. Choosing $r = (M_0\sigma)^{\frac{\beta}{2\beta+d}}(P_X)^{\frac{d}{2\beta+d}}n^{-\frac{\beta}{2\beta+d}}$, it can be verified that $\Tcal(r, P^{\beta, d}, \|\cdot\|_{L_2(P_X)}) \geq \frac{nr^2}{\sigma}$. Consequently, $r^2$ forms a lower bound for the minimax risk $r_n^2$.

    Returning to the notation where $d = MN$, we deduce that
    \begin{equation*}
        r^2_n(P^{\beta, M\times N},P_X,\sigma) \geq r^2_n(P^{\beta, 1\times MN},P_X,\sigma) \geq C(\beta, M,N)n^{-\frac{2\beta}{2\beta+MN}},
    \end{equation*}
    
    which aligns with the convergence rate of PIE, as established in (\ref{eq: final-rate}), up to a constant factor. Thus, the minimax optimality of PIE in approximating fully permutation invariant functions is established.

    \section{Proof of Corollary \ref{co: PPIE rate}}
    \label{appendix:frate} 
    In this section, our objective is to substantiate Corollary \ref{co: PPIE rate} through a broader perspective. To achieve this, we  expand the scope beyond the conventional permutation invariance assumptions and the associated Permutation Invariant Estimator PIE. We introduce and focus on the concept of partially permutation invariant functions and the corresponding Partially Permutation Invariant Estimator PPIE. Once we ascertain the convergence rate of PPIE, it will provide the necessary foundation to validate Corollary \ref{co: PPIE rate}.

    \begin{definition}[Partially Permutation Invariant Function]
    \label{def:ppie}
    A function $f$ is partially permutation invariant if its input can be separate into $P$ input matrix $\{X_1,\cdots, X_P\}$ , where each $X_p \in \mathbb{R}^{M \times N_p}$. The function remains unchanged under column permutations within each individual input matrix. Formally, this property is  expressed  as
    \begin{eqnarray}
        f(X_1,X_2,\cdots,X_P) = f(\varPi^{(1)}_{X_1},\varPi^{(2)}_{X_2},\cdots,\varPi^{(P)}_{X_P}),
    \end{eqnarray}
        for any permutation $\varPi^{(k)}\in \Scal_{N_p}$.
    \end{definition}
    
    The corresponding estimator, known as the Partially Permutation Invariant Estimator (PPIE), is defined as follows:

    \begin{definition}[Partially Permutation Invariant Estimator(PPIE) Structure]
        \label{def:ppie network}
        Given a partially permutation invariant function $f$ with $P$ input matrix $X = \{X_1,\cdots,X_P\}$,where $X_p \in \mathbb{R}^{M \times N_p}$ as described in Definition \ref{def:ppie}, the PPIE neural network is structured as
        \begin{equation*}
            \psi\left(\frac{1}{N_1}\sum_{n=1}^{N_1}\varphi^{(1)}(X_{1,i}),\frac{1}{N_2}\sum_{n=1}^{N_2}\varphi^{(2)}(X_{2,i}),\cdots,\frac{1}{N_P}\sum_{n=1}^{N_P}\varphi^{(K)}(X_{P,i})\right),   
        \end{equation*}
        where $X_{p,i}$ denotes the $i$-th column of $X_p$, and $\varphi$ and $\psi_p$ are neural networks.
    \end{definition}

    Our goal is to establish the bound for
    \begin{align*}
        \|f^*-\hat{f}\|_n^2 \leq \|f^*-f\|_{\infty}^2 +\frac{2}{n}\xi_i(\hat{f}(x_i)-f(x_i)) .
    \end{align*}
    It can be shown that the fully permutation invariant target function, as stipulated in Assumption \ref{ass:PI}, is a specific case of the partially permutation invariant function. This is achieved by setting $P = 2$, with $X_1 \in \mathbb{R}^{(M+1) \times |\Ncal(i)|}$ and $X_2 \in \mathbb{R}^{(M+1) \times 1}$. Therefore, our focus shifts to examining the convergence rate of the PPIE function class.
    
    \subsection{Approximation Error}
    For ease of discussion, let us denote $N = \sum_{p=1}^P N_p$. Following a similar approach as in Section \ref{sec: app}, we define an approximation of the function $f^*$ by
    \begin{equation*}
        f^*_1(X) = \sum_{\m\in\{0,1,\cdots,N^{\prime}\}^{K\times M \times N}} \phi_{\m}(X)  \sum_{\p: |\p|\leq\beta}\frac{D^{\p} f}{\p!}\bigg|_{X = \frac{\m}{N'}}(X-\frac{\m}{N'})^{\p},
    \end{equation*}
    where $\phi_{\m}$ is defined identically to its counterpart in Section \ref{sec: app}.
    
    Considering $X = \{X_1, \cdots, X_P\}$ and defining $|\widehat{\Scal}| = \prod_{p=1}^P |\Scal_{N_p}|$, we construct the partially permutation invariant (PI) approximation using $f_1^*$
    \begin{equation*}
        f_1(X) = f_1(X_1,\cdots,X_P) = \frac{1}{|\widehat{\Scal}|}\sum_{\substack{\varPi_1 \in \Scal_{N_1} \cdots \\ \varPi_K \in \Scal_{N_P}}} f_1^{*}\left((\varPi_1)_{X_1}, \cdots, (\varPi_K)_{X_P}\right).
    \end{equation*}
    
    Using similar arguments, it can be shown that
    \begin{equation*}
        \|f^*(X)-f_1(X)\|_{\infty} \leq \frac{2^d d^{\beta}J}{\beta!}\left(\frac{1}{N'}\right)^{\beta}.
    \end{equation*}
    
    Let $K'_p = (N' + 1) \times N_p$ and define $K' = \sum_{p=1}^P K'_p$. We then construct $Y(X, N') = (Y_1(X_1, N'), \ldots, Y_P(X_P, N'))$ where $Y_p(X_p, N') \in \mathbb{R}^{K'_p \times M}$ is defined analogously to (\ref{eq: Y}). Consequently, $f_1(X)$ can be transformed into $g(Y)$ by 
    \begin{align*}
        f_1^*(X)  &= g^*(Y(X,N')), \\
        f_1(X) &= \frac{1}{|\widehat{\Scal}|} \sum_{\substack{\varPi_1 \in \Scal_{N_1} \cdots \\ \varPi_K \in \Scal_{N_P}}} g^*\left(Y_1((\varPi_1)_{ X_1},N'), \cdots, Y_p((\varPi_K)_{X_P},N')\right)= g(Y). 
    \end{align*}
    
    From the arguments in Section \ref{sec: app}, we know $g(Y)$ is a partially permutation invariant polynomial. To further decompose $g(Y)$, we introduce the following lemma which is similar to Lemma \ref{lem: weyl}.
    \begin{lemma}[Partially PI polynomial decomposition]
    \label{lem: PPIE decompo}
        Given $Y = \{Y_1,\cdots,Y_P\}$ where  $Y_p\in\mathbb{R}^{M\times N_p}$, any partially permutation invariant polynomial $g(Y)$ can be  expressed  in the following form
        \begin{equation*}
    g(Y)=\sum_{q=1}^Q h_{1, q}\left(Y_1\right) h_{2, q}\left(Y_2\right) \cdots h_{P, q}\left(Y_P\right),
    \end{equation*}
    where $Q$ is an integer, and $h_{p,q}(Y_p)$ are permutation invariant polynomials.
    \end{lemma}
    With Lemma \ref{lem: PPIE decompo}, we can show $g(Y) = \sum_{q=1}^Q h_{1, q}\left(Y_1\right) h_{2, q}\left(Y_2\right) \cdots h_{P, q}\left(Y_P\right)$, where each function $h_{p, q}$ is a fully permutation invariant polynomial. Therefore, we can further decompose each $h_{p,q}$ into a power sum basis as described in Lemma \ref{lem: Hilbert}
    \begin{equation*}
        h_{p,q}(Y_p) = \sum_{t=0}^{T_{p,q}}s_{p,q,t}(\sum_{n=1}^{N_p}(a_{p,q,t}^{\top}Y_{p,n})^1,\cdots,\sum_{n=1}^{N_p}(a_{p,q,t}^{\top}Y_{p,n})^{N_p}).
    \end{equation*}
    
    With this decomposition of $g(Y)$, we can construct a PPIE, $\widetilde{g}(Y)$, as follows
    \begin{equation*}
        \widetilde{g}(Y) = \tau(\widetilde{h}_{1,q}(Y_1), \ldots, \widetilde{h}_{p,q}(Y_p), \ldots, \widetilde{h}_{P,Q}(Y_P)),
    \end{equation*}
    where $\widetilde{h}_{p,q}(Y_p)$ approximates $h_{p,q}(Y_p)$, and $\tau$ approximates the product of the $h_{p,q}$ functions. Then we propose the structure of $\tau$ and $\widetilde{h}_{p,q}$ in detail. $\tau$ is iterativly defined with $\tilde{x}$ such that
    \begin{equation}
        \tau(\widetilde{h}_{1,1}, \ldots, \widetilde{h}_{1,Q}, \widetilde{h}_{2,1}, \ldots, \widetilde{h}_{2,Q},\ldots, \widetilde{h}_{P,Q}) = \sum_{q=1}^{Q}\tilde{x}(\widetilde{h}_{1,q},\tilde{x}(\widetilde{h}_{2,q}\ldots,\widetilde{h}_{P,q})),
    \end{equation}
    where $\tilde{x}$ is a ReLU network from Lemma \ref{lem: relu} with an error of $\epsilon = \delta_3$. Following the methodology in Section \ref{sec: app}, each $h_{p,q}$ can approximated by a PIE, denoted as $\widetilde{h}_{p,q}$.
    \begin{equation*}
        \widetilde{h}_{p,q}(Y_p) = \sum_{t=0}^{T_{p,q}}\psi_{p,q,t}(\varphi_{p,q,t,1}(Y_p),\cdots,\varphi_{p,q,t,N_p}(Y_p)),
    \end{equation*}
    where $\varphi_{p,q,t,n}$ are iteratively defined using $\tilde{x}_1$, akin to $\varphi_{t,n}$ in \eqref{eq: varphi_t}. Similarly, $\psi_{p,q,t}$ is defined iteratively with $\tilde{x}_2$, as is $\psi_t$ in \eqref{eq: psi_t}. The approximation errors for $\tilde{x}_1$ and $\tilde{x}_2$ are denoted as $\delta_1$ and $\delta_2$, respectively. We establish that $T_{p,q} = O((N')^{MN_p})$ and accordingly define $T = O((N')^{MN})$.
    
    With some calculation, we can show function $\tilde{g}$ is a PPIE as in Definition \ref{def:ppie network}.  The function class of PPIE, denoted as $\Fcal = \Fcal(S_{\tau}, L_{\tau}, S_{\psi}, L_{\psi}, S_{\varphi}, L_{\varphi}, B)$, encapsulates the structure of $\tilde{g}$. Here, $S_{\tau}, S_{\psi}, S_{\varphi}$ and $L_{\tau}, L_{\psi}, L_{\varphi}$ represent the total number of parameters and the depth of the networks for $\tau, \psi, \varphi$, respectively. All the paramters in these networks are bounded by $B$.
    
    Considering the definition of $f_n$, we have
    \begin{equation*}
        \|f_1-f_n\|_{\infty} \leq \|f_1-\tilde{f}\|_{\infty} \leq |g(Y) - \tilde{g}(Y)|, \quad \forall Y .
    \end{equation*}
    
    To establish the bound for $\|f_1 - f_n\|_{\infty}$, it suffices to bound $|g(Y) - \tilde{g}(Y)|$. This decomposition and bounding process will be carried out following the approach described in Section \ref{sec: app}.
    
    \begin{align*}
        |\tilde{g}(Y)-g(Y)| \nonumber 
        &\leq\sum_{q=1}^{Q}|\tilde{x}(\widetilde{h}_{1,q}(Y_1),\tilde{x}(\widetilde{h}_{2,q}(Y_2),\tilde{x}(\widetilde{h}_{3,q}(Y_3),\cdots)))-h_{1, q}\left(Y_1\right) h_{2, q}\left(Y_2\right) \cdots h_{P, q}\left(Y_P\right)|\\
        &\leq\sum_{q=1}^{Q}\left[\underbrace{|\tilde{x}(\widetilde{h}_{1,q}(Y_1),\tilde{x}(\widetilde{h}_{2,q}(Y_2),\tilde{x}(\widetilde{h}_{3,q}(Y_3),\cdots)))-\widetilde{h}_{1, q}\left(Y_1\right) \widetilde{h}_{2, q}\left(Y_2\right) \cdots \widetilde{h}_{P, q}\left(Y_P\right)|}_{\eta_1}\right . \\
        &\left . +\underbrace{|\widetilde{h}_{1, q}\left(Y_1\right) \widetilde{h}_{2, q}\left(Y_2\right) \cdots \widetilde{h}_{P, q}\left(Y_P\right)-h_{1, q}\left(Y_1\right) h_{2, q}\left(Y_2\right) \cdots h_{P, q}\left(Y_P\right)|}_{\eta_2}\right].
        \label{PPIE decomp}
    \end{align*}

    From results in Section \ref{sec: app}, it is established that for each pair of indices $p$ and $q$, the bound $\|h_{p,q}\|_{\infty} \leq T_{p,q}(MN_p+\beta)(aK_k^{\prime})^{MN_p+\beta}$ holds true. If we define $G$ as $T(MN+\beta)(aK^{\prime})^{MN+\beta}$, then it follows that $\|h_{p,q}\|_{\infty} \leq G$ for all $p, q$. According to \eqref{eq: in-out diff}, we note that each $\|\widetilde{h}_{p,q} - h_{p,q}\|_{\infty}$ can be upper bounded by $T_{p,q}(N(aK'_k)^N)^{MN_p+\beta}(\delta_1+ \delta_2)$, provided we disregard lower order terms. Setting $\delta$ to be $T(N(aK')^N)^{MN+\beta}(\delta_1+ \delta_2)$, we find that $\|\widetilde{h}_{p,q} - h_{p,q}\|_{\infty} \leq \delta$ for every $p, q$, upon neglecting lower-order components. Consequently, $\|\widetilde{h}_{p,q}\|_{\infty}$ is bounded by $G+\delta$.
    
    Accordingly, the bound for $\eta_1$ can be similarly derived, utilizing the results in \eqref{eq: reduction}
    \begin{equation*}
        \eta_1\leq (G+\delta)^{P+1}\delta_3.
    \end{equation*}
    
    To establish an upper bound for $\eta_2$, we consider the following inequalities
    \begin{align*}
        \eta_2&\leq |\widetilde{h}_{1} \widetilde{h}_{2} \cdots \widetilde{h}_{P} - h_{1}h_{2}\cdots h_P|\\
        &\leq|\widetilde{h}_{1} \widetilde{h}_{2} \cdots \widetilde{h}_{P}-\widetilde{h}_{1} \widetilde{h}_{2} \cdots \widetilde{h}_{P-1}h_P|\\
        &+|\widetilde{h}_{1} \widetilde{h}_{2} \cdots \widetilde{h}_{P-1}h_P- \widetilde{h}_{1} \widetilde{h}_{2} \cdots h_{P-1}h_P|\\
        &+\cdots\\
        &+|\widetilde{h}_{1} h_{2}\cdots h_P- h_{1}h_{2}\cdots h_P|\\
        &\leq P (G+\delta)^{P-1}\delta.
    \end{align*}
    
    Given that $\delta$ is a relatively small error term compared to $G$, and considering that $P$ is independent of $\delta$, we can neglect the lower order terms. Consequently, we arrive at the bounds $\eta_1 \leq G^P \delta_3$ and $\eta_2 \leq PG^P\delta$. To summarize
    \begin{equation*}
        \|f_n-f_1\|_{\infty} \leq |\tilde{g}(Y)-g(Y)| \leq Q(\eta_1+\eta_2) \leq QPG^P\delta + QG^P\delta_3.
    \end{equation*}
    
    Remembering that $\|f_n - f^*\|_{\infty}$ can be bounded by $\|f_1 - f^*\|_{\infty} + \|f_n - f_1\|_{\infty}$, we aim to ensure $\|f_n - f^*\|_{\infty} \leq \epsilon$. To achieve this, we set $\|f_1 - f^*\|_{\infty} = \frac{\epsilon}{2}$ and constrain $\|f_n - f_1\|_{\infty}$ to be at most $\frac{\epsilon}{2}$.
    
    Let $\|f_1-f^*\| =\frac{\epsilon}{2}$, we can get $N' =O(\epsilon^{-1/\beta})$.
    
    To ensure $\|f_n-f_1\| \leq \frac{\epsilon}{2}$ , we set $QPG^P\delta = QG^P\delta_3 = \frac{\epsilon}{4}$. By setting $QG^P\delta_3 = \frac{\epsilon}{4}$
    \begin{eqnarray*}
        \delta_3 = O(\frac{\epsilon}{QT^P(MN+\beta)^P(aK^{\prime})^{P(MN+\beta)}})=O(\epsilon^{\frac{(MN+\beta)P}{\beta}}).
    \end{eqnarray*}
    
    Given that $\delta = (N(aK')^N)^{MN+\beta}(\delta_1 + \delta_2)$, and aiming for $QPG^P\delta = \frac{\epsilon}{8}$, we set
    \begin{align*}
        &\delta_1 = \delta_2 = O(\frac{\epsilon}{QN^{MN+\beta}T^P(MN+\beta)^K(aK^{\prime})^{(N+K)(MN+\beta)}})=O(\epsilon^{\frac{(MN^2+\beta N)P}{\beta}}).\\
    \end{align*}
    With $\delta_3$ established, we can determine the network parameters for $\tau$. Recalling that $\tau$ sums up $Q$ distinct sub-networks, each constructed iteratively with $P$ instances of $\tilde{x}$, the depth and total number of parameters of $\tau$ can be approximated as
    \begin{align*}
        L_{\tau} = P O (\ln(1/\delta_3))=O(\ln(\epsilon^{-\frac{(MN+\beta)P}{\beta}})),\\
        S_{\tau} = QK O(\ln(1/\delta_3)) = O(\ln(\epsilon^{-\frac{(MN+\beta)P}{\beta}})).
    \end{align*}

    The network parameters for $\psi$ and $\varphi$ are derived in a similar fashion to those in Section \ref{sec: app}, using $\delta_1$ and $\delta_2$. The details are omitted for brevity.
    \begin{align}
        &L_{\psi} = (MN+\beta)\times O(\ln(1/\delta_2)) = O(\ln(\epsilon^{-\frac{(MN^2+N\beta)P}{\beta}})),\nonumber\\
        &S_{\psi} = TN^{MN+\beta}O(\ln(1/\delta_2)) = O(\epsilon^{-\frac{MN}{\beta}}),\nonumber\\
        &L_{\varphi}=N O(\ln(1/\delta_1)) = O(\ln(\epsilon^{-\frac{(MN^2+N\beta)P}{\beta}})),\nonumber\\
        &S_{\varphi} = PT+TNO(ln(1/\delta_1)) =O(\epsilon^{-\frac{MN}{\beta}}) \nonumber.
    \end{align}

    \subsection{Estimation Error}
    \label{ppie est}
    In this section, our focus is on bounding the term 
    \begin{eqnarray*}
        \frac{2}{n}\sum_{i=1}^{n}\xi_i(\hat{f}(x_{i})-f(x_i)).
    \end{eqnarray*}
    The primary objective here is to establish an upper limit for the covering number associated with the PPIE function class.
    
    Adopting the approach used in Section \ref{sec: est}, we consider two functions from the class $\Fcal$, denoted as $f$ and $f^*$, with the all corresponding parameters of these functions differ by at most $\zeta$. With this setup, we proceed to the following decomposition
    \begin{align*}
        \|f-f^*\|_{\infty} &= \| \sum_{q=1}^{Q}\tilde{x}(\widetilde{h}_{1,q},\tilde{x}(\widetilde{h}_{2,q}\ldots,\widetilde{h}_{P,q}))
        -\sum_{q=1}^{Q}\tilde{x}^*(\widetilde{h}^*_{1,q},\tilde{x}^*(\widetilde{h}^*_{2,q}\ldots,\widetilde{h}^*_{P,q}))\|_{\infty}\nonumber\\
        &\leq \underbrace{\sum_{q=1}^{Q} \| \tilde{x}(\widetilde{h}_{1,q},\tilde{x}(\widetilde{h}_{2,q}\ldots,\widetilde{h}_{P,q}))
        -\tilde{x}(\widetilde{h}^*_{1,q},\tilde{x}(\widetilde{h}^*_{2,q}\ldots,\widetilde{h}^*_{P,q}))\|_{\infty}}_{\gamma_1}\nonumber\\
        &+\underbrace{\sum_{q=1}^{Q}\| \tilde{x}(\widetilde{h}^*_{1,q},\tilde{x}(\widetilde{h}^*_{2,q}\ldots,\widetilde{h}^*_{P,q}))
        -\tilde{x}^*(\widetilde{h}^*_{1,q},\tilde{x}^*(\widetilde{h}^*_{2,q}\ldots,\widetilde{h}^*_{P,q}))\|_{\infty}}_{\gamma_2}.\nonumber\\
    \end{align*}
    
    Building upon the insights from Section \ref{sec: est}, we have established that
    \begin{align*}
        \|\widetilde{h}_{p,q} - \widetilde{h}^*_{p,q}\|_{\infty} &\leq \zeta T_{p,q}(C_wB)^{L_{\psi}+L_{\varphi}}(L_{\psi}+L_{\varphi})N_p^{MN_p+\beta}K'_p \\
        &\leq \zeta T(C_wB)^{L_{\psi}+L_{\varphi}}(L_{\psi}+L_{\varphi})N^{MN+\beta}K'.
    \end{align*}

    Applying a similar recursive technique as used for bounding the first term in \eqref{eq: cover-decomp}, it can be shown that
    \begin{equation*}
        \gamma_1 \leq \sum_{q=1}^Q (C_wB)^{L_{\tau}}L_{\tau} \max_{1\leq  p\leq P}\|\widetilde{h}_{p,q} - \widetilde{h}^*_{p,q}\|_{\infty} \leq \zeta Q  T(C_wB)^{L_{\tau}+L_{\psi}+L_{\varphi}}(L_{\psi}+L_{\varphi})N^{MN+\beta}K'.
    \end{equation*}
    
    Similarly, for $\gamma_2$, following the approach for bounding the second term in \eqref{eq: cover-decomp}, we have
    \begin{equation*}
        \gamma_2 \leq \sum_{q=1}^Q (C_wB)^{L_{\tau}} L_{\tau} \max_{1\leq  p\leq P}\|\widetilde{h}^*_{p,q}\|_{\infty} \leq  \zeta Q T(C_wB)^{L_{\tau}+L_{\psi}+L_{\varphi}}L_{\tau}(L_{\psi}+L_{\varphi})N^{MN+\beta}K',
    \end{equation*}
    where the final inequality is derived using results from Section \ref{sec: est}.
    
    In summary, a perturbation of the parameters by $\zeta$ results in a bounded change in the output
    \begin{equation*}
        \|f-f^*\|_{\infty}\leq \sum_{q=1}^Q (C_wB)^{L_{\tau}} L_{\tau} \max_{1\leq  p\leq P}\|\widetilde{h}^*_{p,q}\|_{\infty} \leq  \zeta Q T(C_wB)^{L_{\tau}+L_{\psi}+L_{\varphi}}(L_{\tau}+1)(L_{\psi}+L_{\varphi})N^{MN+\beta}K'.
    \end{equation*}
    
    Following this, we can replicate the arguments from Section \ref{sec: est} to determine the upper bound for both the covering number and the empirical Rademacher Complexity. Let $L = L_{\psi} + L_{\varphi} + L_{\tau}$ and $S = S_{\psi} + S_{\varphi} + S_{\tau}$.
    \begin{align}
        \log\Ncal(\epsilon, \Fcal_{\delta}, \| \cdot \|_{\infty}) &\leq S\log(\frac{QBT(C_wB)^L(L_{\psi}+L_{\varphi})(L_{\tau}+1)N^{MN+\beta}K' }{\epsilon}), \nonumber\\
        E_{\xi} \left[ \sup_{f'\in \Fcal_{\delta}} | \frac{1}{n} \sum_{i=1}^n\xi_if'(Y_{(i)})| \right] &\leq 2\sqrt{2} \frac{\sigma \sqrt{S}\delta}{n^{1/2} } \log  (\frac{QT(C_wB)^L(L_{\psi}+L_{\varphi})(L_{\tau}+1)N^{MN+\beta} K'}{\delta} +1 ).    
    \end{align}
    
    Building upon the arguments from \eqref{eq: est-concentration} to \eqref{eq: final-decomp-results}, let's denote $V_n = 4\sqrt{2} \frac{\sigma \sqrt{S}}{n^{1/2}}$. Using this notation, we can  express the inequality as follows
    \begin{eqnarray*}
            \|\hat{f}-f^*\|_n^2 \leq 10\|f^*-f\|_n^2+ 8\left\{V_n\log (\frac{QT(C_wB)^L(L_{\psi}+L_{\varphi})(L_{\tau}+1)N^{MN+\beta} K'}{V_n} +1) \right\}^2 +4u,
    \end{eqnarray*}
    with probability at least $1-\text{exp}(-nu^2/2\sigma^2\delta^2)$. 
    
    Upon taking the expectation  on both sides, we derive a bound for $\|\hat{f} - f^*\|_{L_2(P_X)}$ similar to \eqref{eq: final-step}
    \begin{align}
            \|&\hat{f}-f^*\|_{L_2(P_X)}^2 \nonumber \\
        & \leq 21\epsilon^2 +17\left\{V_n\log (\frac{QT(C_wB)^L(L_{\psi}+L_{\varphi})(L_{\tau}+1)N^{MN+\beta} K'}{V_n} +1) \right\}^2 ,\label{eq:ppie error}
    \end{align}
    with probability converging to one.
    Let $\epsilon=n^{-\frac{\beta}{MN+2\beta}}$. By substituting the expressions of $V_n$, $S_{\psi}$, $S_{\varphi}$, $S_{\tau}$, $L_{\psi}$, $L_{\varphi}$, and $L_{\tau}$ in terms of $\epsilon$ into equation (\ref{eq:ppie error}) and disregarding smaller terms, we arrive at
    \begin{equation*}
        \| \hat{f} - f^*\|^2 _{L^2(P_X)} = O_p(n^{-\frac{2\beta}{MN+2\beta}}).
    \end{equation*}
      
    In conclusion, it's noteworthy that the Permutation Invariant Estimator (PIE) is a specific instance of the partially permutation invariant function, achievable by setting $P = 2$, $N_1 = 1$, and $N_2 = N$, while adjusting the column dimension to $M + 1$. This modification seamlessly aligns with the desired result.

    \section{Proof of Value-based Estimator}
    \label{sec: EMP}
    \subsection{The cross-fitting scheme}
     Consider our dataset $\{(X_{i,j}, A_{i,j}, Y_{i,j}): 1 \le i \le R, 1 \le j \le S\}$. Specifically, the $S$ samples are partitioned into $m$ equally sized batches, denoted as $[S] \in \bigcup_{z \in [m]} B_z$. Here, $m$ is chosen such that $m \geq 2$ and typically $m \asymp 1$. Each batch, $B_z$, contains approximately $|B_z| = S_z \asymp S/m \asymp S$ samples. For each sample indexed by $j \in [S]$, let $z_j \in [m]$ be the index of the batch containing that sample, such that $j \in B_{z_j}$. For the $j$-th sample, the optimization process described in (\ref{eq: MSE-non-d}) of the main text is applied to the out-of-batch samples $B_{-z_j} = [S] \backslash B_{z_j}$. This procedure yields estimators $\widehat{f}_j$ and $\widehat{m}_j$ which for offline policy evaluation.

    \subsection{Proof of Corollary \ref{coro: nondynamic}}
    According to the definitions of $\widehat{J}_{\textrm{VB}}$ and $J(\pi)$, the following holds true
    \begin{equation*}
        |\widehat{J}_{\textrm{VB}}(\pi) -J(\pi)| = |\frac{1}{S}\sum_{j=1}^S\sum_{i=1}^R \widehat{f}_{i,j}(X_{i,j},\pi(X_{i,j}), \widehat{m}_i(X_{\mathcal{N}(i),t}, \pi(X_{\mathcal{N}(i),t}))) - \sum_{i=1}^R \mathbb{E}[\mathbb{E}(Y_i| \{A_j=\pi(X_j)\}_j, \{X_j\}_j)]|.
    \end{equation*}
    To simplify, we introduce the notation $x_{i,j}$ to denote $\left(X_{i,j}, \pi(X_{i,j}), \widehat{m}_i(X_{\mathcal{N}(i),t}, \pi(X_{\mathcal{N}(i),t}))\right)$ and $x_i$ for $(X_i, A_i, m_i(X_{\mathcal{N}(i)}, A_{\mathcal{N}(i)}))$. With this notation, the conditional expectation  $\mathbb{E}(Y_i | \{A_j = \pi(X_j)\}_j, \{X_j\}_j)$ can be rewritten as $f_i(X_i, A_i, m_i(X_{\mathcal{N}(i)}, A_{\mathcal{N}(i)}) = f_i(x_i)$. Thus, we have
    \begin{align}
        |\widehat{J}_{\textrm{VB}}(\pi) -J(\pi)| &= |\frac{1}{S}\sum_{j=1}^S\sum_{i=1}^R \widehat{f}_{i,j}(x_{i,j}) - \sum_{i=1}^R \mathbb{E}[f_i(x_{i})]|\nonumber\\
        & \leq \sum_{i=1}^R\left\{|\frac{1}{S}\sum_{j=1}^S \widehat{f}_{i,j}(x_{i,j}) - \frac{1}{S}\sum_{j=1}^S\mathbb{E}[\widehat{f}_{i,j}(x_{i,j})]| + |\frac{1}{S}\sum_{j=1}^S\mathbb{E}[\widehat{f}_{i,j}(x_{i,j})]- \mathbb{E}[f_i(x_{i})]|\right\}\nonumber\\
        & \leq \sum_{i=1}^R\left\{|\frac{1}{S}\sum_{z=1}^m S_z\frac{1}{S_z}\sum_{j\in B_z} \widehat{f}_{i,j}(x_{i,j}) - \frac{1}{S}\sum_{z=1}^m S_z\frac{1}{S_z}\sum_{j\in B_z}\mathbb{E}[\widehat{f}_{i,j}(x_{i,j})]| \right . \nonumber\\
        & \left . + |\frac{1}{S}\sum_{z=1}^m S_z\frac{1}{S_z}\sum_{j\in B_z}\mathbb{E}[\widehat{f}_{i,j}(x_{i,j})]- \mathbb{E}[f_i(x_{i})]|\right\}\nonumber\\
         & \leq \sum_{i=1}^R\left\{\frac{1}{S}\sum_{z=1}^m S_z|\frac{1}{S_z}\sum_{j\in B_z} \widehat{f}_{i,j}(x_{i,j}) - \frac{1}{S_z}\sum_{j\in B_z}\mathbb{E}[\widehat{f}_{i,j}(x_{i,j})]| \right .\nonumber\\
         & \left . + \frac{1}{S}\sum_{z=1}^m S_z|\frac{1}{S_z}\sum_{j\in B_z}\mathbb{E}[\widehat{f}_{i,j}(x_{i,j})]- \mathbb{E}[f_i(x_{i})]|\right\}\nonumber\\
         &\leq \sum_{i=1}^R\left\{\frac{1}{S}\sum_{z=1}^m S_z|\frac{1}{S_z}\sum_{j\in B_z} \widehat{f}_{i,j}(x_{i,j}) - \mathbb{E}[\widehat{f}_{i,B^{(1)}_z}(x_{i,B^{(1)}_z})]| \right .\nonumber\\
         & \left . + \frac{1}{S}\sum_{z=1}^m S_z|\mathbb{E}[\widehat{f}_{i,B^{(1)}_z}(x_{i,B^{(1)}_z})]- \mathbb{E}[f_i(x_{i})]|\right\}\label{notation}.
    \end{align}
    The derivation of (\ref{notation}) is based on the observation that $\widehat{f}_{i,j}$ are equal for all $j\in B_z, \forall z\in[m]$. Additionally, the expected value $\mathbb{E}[\widehat{f}_{i,j}(x_{i,j})]$ remains constant for $t\in B_j$. Therefore, we designate $B^{(1)}_z$ as the initial element in $B_j$, leading to the conclusion that $\mathbb{E}[\widehat{f}_{i,B^{(1)}_z}(x_{i,B^{(1)}_z})] = \mathbb{E}[\widehat{f}_{i,j}(x_{i,j})]$ for all $t\in B_j$. Concerning the term $|\frac{1}{S_z}\sum_{j\in B_z} \widehat{f}_{i,j}(x_{i,j}) - \mathbb{E}[\widehat{f}_{i,B^{(1)}_z}(x_{i,B^{(1)}_z})]|$, Theorem 4.10 in \citep{wainwright2019high} is applied, yielding
    \begin{equation*}
        \bigg\vert \frac{1}{S_z}\sum_{j\in B_z} \widehat{f}_{i,j}(x_{i,j}) - \mathbb{E}\widehat{f}_{i,B^{(1)}_z}(x_{i,B^{(1)}_z})\bigg\vert \leq \sup_{f\in \Gcal} \bigg\vert \frac{1}{S_z}\sum_{j\in B_z} f(x_{i,j}) - \mathbb{E}f(x_{i,B^{(1)}_z})\bigg\vert \leq \E_{X}\Rcal_{S_z} \Gcal + e,
    \end{equation*}
    with probability at least $1-\exp(-\frac{S_ze^2}{8J^2})$. Subsequently, Lemma \ref{lem:F-Chaining} is utilized to establish an upper bound for $\Rcal_{S_z}(\Gcal)$
    \begin{equation*}
        \Rcal_{S_z}\Gcal \leq \frac{4\sqrt{2}}{S_z}\int_0^{J}\sqrt{\log2\Ncal(\epsilon,\Gcal, \|\cdot\|_{\infty})}d\epsilon.
    \end{equation*}
    
    Building upon the arguments presented in \ref{ppie est}, we establish a bound for the covering number of $\Gcal$. For simplicity, we omit these details. Considering $x_{i,j} \in \R^{(M+1)\times (N+1)}$ and recognizing that $S-S_z$ samples are utilized to construct $\hat{f}_{i,j}$, the Rademacher complexity $\Rcal_{S_z}(\Gcal)$ can be upper bounded as $O((S-S_z)^{\frac{(M+1)(N+1)}{2((M+1)(N+1)+2\beta)}}S_z^{-1/2})$.

    Next, combining this with the bound for $\|\widehat{f}_i-f^*_i\|_{L^2(P_X)}$, we derive
    \begin{align*}
        |\widehat{J}_{\textrm{VB}}(\pi) -J(\pi)| 
        &\leq \sum_{i=1}^R\left\{\frac{1}{S}\sum_{z=1}^m S_zC_j(S-S_z)^{\frac{(M+1)(N+1)}{2((M+1)(N+1)+2\beta)}}S_z^{-1/2}  + \frac{1}{S}\sum_{z=1}^m S_zC^{\prime}_j(S-S_z)^{-\frac{\beta}{(M+1)(N+1)+2\beta}}\right\}+ e \nonumber\\
        &\leq \tilde{C}RS^{-\frac{\beta}{(M+1)(N+1)+2\beta}} +e = O(RS^{-\frac{\beta}{(M+1)(N+1)+2\beta}})+e,
    \end{align*}
    with probability at least $1-\exp(-\frac{Se^2}{8J^2})$. The second inequality is predicated on the assumption that $S_z \asymp S/m \asymp S$.

    \subsection{Proof of Corollary \ref{coro: dynamic}}
    \label{appendix:dynamic}
    
    In this section, our objective is to establish the convergence rate for the value-based estimator in dynamic setting. We commence by broadening the scope of the transition kernel $\Pcal^{\pi}$, as defined in Section \ref{subsection:dy}.
    \begin{equation*}
        \Pcal^{\pi}r(X,A) = \E\left[r(X',A')|X'\sim P(\cdot|X,A), A'\sim \pi(X')\right].
    \end{equation*}
    
    As deduced from \eqref{eq: VB}, the value-based estimator $\widehat{J}_{\gamma}^{VB}(\pi)$ is intrinsically linked to $\widehat{Q}^{(i,1)}$. Our initial endeavor is to establish an upper bound for $\|Q^{(i,1)}_{i}(X_{\Ncal^*(i), 1}, A_{\Ncal^*(i), 1}) - \widehat{Q}_{\pi}^{(i,1)}(X_{\Ncal^*(i), 1}, A_{\Ncal^*(i), 1})\|_{L^2(p_i)}$, with $p_i$ denoting the stationary distribution of the confounder-action pair $(X_{\Ncal^*(i), t}, A_{\Ncal^*(i), t})$ 
    under the given behavioural policy. For the sake of brevity, we will hereafter refer to these as $Q^{(i,1)}_{i}$ and $\widehat{Q}_{\pi}^{(i,1)}$, omitting the  explicit  mention of $X$ and $A$.

    Based on the established definition of $Q_{\pi_i}^{(1)}$ in (\ref{eq: def-Q}), it follows that 
    \begin{align*}
        \widehat{Q}_{\pi}^{(i,1)} - Q^{(i,1)}_{i} &= \widehat{Q}_{\pi}^{(i,1)} - \left[r_i + \gamma \Pcal^{\pi} Q_{\pi_i}^{(1)}\right] + \gamma\Pcal^{\pi} \widehat{Q}_{\pi}^{(i,1)} - \gamma \Pcal^{\pi} \widehat{Q}_{\pi}^{(i,1)}\\
        & = \widehat{Q}_{\pi}^{(i,1)} - r_i - \gamma \Pcal^{\pi} \widehat{Q}_{\pi}^{(i,1)} + \gamma\Pcal^{\pi}[\widehat{Q}_{\pi}^{(i,1)} - Q^{(i,1)}_{i}]. 
    \end{align*}
    
    Defining the square error of one-step approximation as $\zeta^{(t)}_i= \left(\widehat{Q}_{\pi}^{(i,T)} - r_i - \gamma \Pcal^{\pi} \widehat{Q}_{\pi}^{(i,T)}\right)^2$, we obtain 
    \begin{align*}
        \left(\widehat{Q}_{\pi}^{(i,1)} - Q^{(i,1)}_{i}\right)^2 &= \zeta^{(1)}_i + \gamma^2 \left[\Pcal^{\pi}(\widehat{Q}_{\pi}^{(i,1)} - Q^{(i,1)}_{i})\right]^2 \\
        &\leq \zeta^{(1)}_i + \gamma^2 \Pcal^{\pi}(\widehat{Q}_{\pi}^{(i,1)} - Q^{(i,1)}_{i})^2. 
    \end{align*}
    where the second inequality holds since the expectation  of a squared random variable exceeds the square of its expectation . In a similar vein, we deduce that  
    \begin{equation*}
        \Pcal^{\pi}\left(\widehat{Q}_{\pi}^{(i,1)} - Q^{(i,1)}_{i}\right)^2 \leq \Pcal^{\pi}\zeta^{(2)}_i + \gamma^2 \Pcal^{\pi} \Pcal^{\pi} (\widehat{Q}_{\pi}^{(i,1)} - Q^{(i,1)}_{i})^2,  
    \end{equation*}
    leading to the conclusion that 
    \begin{equation*}
        \left(\widehat{Q}_{\pi}^{(i,1)} - Q^{(i,1)}_{i}\right)^2 \leq \zeta^{(1)}_i + \gamma^2 \Pcal^{\pi}\zeta^{(2)}_i + \gamma^4 \Pcal^{\pi} \Pcal^{\pi} (\widehat{Q}_{\pi}^{(i,1)} - Q^{(i,1)}_{i})^2. 
    \end{equation*}
    Utilizing the established recurrence relation, we derive that
    \begin{equation*}
        \left(\widehat{Q}_{\pi}^{(i,1)} - Q^{(i,1)}_{i}\right)^2 \leq \sum_{t=1}^T \gamma^{2t-2} (\Pcal^{\pi})^{t-1} \zeta^{(t)}_i + \gamma^{2T} (\Pcal^{\pi})^{\top} (\widehat{Q}_{\pi}^{(i,1)} - Q^{(i,1)}_{i})^2. 
    \end{equation*}
    Taking expectation s on both sides with respect to $p_i$, we obtain
    \begin{equation}
        \|Q^{(i,1)}_{i} - \widehat{Q}_{\pi}^{(i,1)}\|_{L^2(p_i)} \leq \sum_{t=1}^T \E_{p_i}\left[\gamma^{2t-2} (\Pcal^{\pi})^{t-1} \zeta^{(t)}_i \right] + \gamma^{2T}\E_{p_i}\left[ (\Pcal^{\pi})^{\top} (\widehat{Q}_{\pi}^{(i,1)} - Q^{(i,1)}_{i})^2 \right]. 
        \label{eq: propagation}
    \end{equation}
    Considering any measurable function $f$ over time step $t$, we have
    \begin{equation*}
        \E_{p_{i}}\left[(\Pcal^{\pi})^tf\right] = \int (\Pcal^{\pi})^tf dp_{i,1} = \int f d p_{\pi, t}= \E_{p_{\pi,t}}\left[f\right]. 
    \end{equation*} 
    Applying the Cauchy-Schwarz inequality leads to
    \begin{equation*}
        \E_{p_{\pi, t}}(f) \leq \sqrt{\int\Big\vert\frac{d p_{\pi, t}}{d p_{i}} \Big\vert d p_i} \cdot \sqrt{\int f d p_{i}}. 
    \end{equation*}
    From Assumption \ref{ass: concentration}, it follows that
    \begin{equation*}
        \E_{p_{\pi, t}}(f) \leq \kappa(t) \cdot \E_{p_{i}}(f) \leq 
     \kappa\E_{p_{i}}(f).
    \end{equation*}
    Incorporating this into \eqref{eq: propagation}, we deduce
    \begin{equation}
    \label{eq: Q-inequality}
        \|Q^{(1)}_{\pi} - \widehat{Q}_{\pi}^{(i,1)}\|_{L^2(p_{i})} \leq \kappa \sum_{t=1}^T \gamma^{2t-2} \E_{p_{i}}\left[ \zeta^{(t)}_i \right] + \kappa \gamma^{2T}\E_{p_{i}}(\widehat{Q}_{\pi}^{(i,T)} - Y_i)^2. 
    \end{equation}
    
    From Assumption \ref{ass: Tass}, for all $t$ both $r_i + \gamma \Pcal^{\pi}\widehat{Q}_{\pi}^{(i,t)}$ and $r_i$ are permutation invariant and belongs to $\Wcal^{\beta, \infty}([0,J]^{(N+1)(M+1)})$. Using the results in Corollary \eqref{co: PPIE rate}, it is evident that for each $t$, $\E_{p_i}\left[ \zeta^{(t)}_i\right]$ and $\E_{p_i}(\widehat{Q}_{\pi}^{(i,T)} - Y_i)^2$ are bounded above by $O(S^{-\frac{2\beta}{(N+1)(M+1)+2\beta}})$ with probability converging to one. Bring back to \eqref{eq: Q-inequality}, we have
    \begin{equation*}
        \|Q^{(1)}_{p_{i,1}} - \widehat{Q}_{\pi}^{(i,1)}\|_{L^2(p_i)} \leq \kappa T\gamma^{2T} O(S^{-\frac{\beta}{(N+1)(M+1)+2\beta}}) \leq O(\kappa T S^{-\frac{\beta}{(N+1)(M+1)+2\beta}}). 
    \end{equation*}
    Employing empirical process techniques akin to those in Section \ref{sec: EMP}, we can establish that
    \begin{align*}
        |J_{\gamma}(\pi) - \widehat{J}_{\gamma}^{\text{VB}}(\pi)| \leq O(RS^{-\frac{\beta}{(N+1)(M+1)+2\beta}}) + O(\kappa TRS^{-\frac{\beta}{(N+1)(M+1)+2\beta}}) + e = O(\kappa TRS^{-\frac{\beta}{((N+1)(M+1)+2\beta)}})+e,
    \end{align*} 
    with probability at least $1-\exp(-\frac{Se^2}{8J^2})$. 
\end{appendices}
\newpage
\bibliography{reference}
\end{document}


\section{Definition and Notations}
\label{sec:def}

Consider a random variable governed by a probability measure $Q$ that is compactly supported on $\mathbb{R}^d$. The space $L_2(Q)$ represents the set of real-valued functions on $\mathbb{R}^d$, equipped with an inner product defined as $\langle f, g \rangle_{L_2(Q)} = \int f(x)g(x) dQ(x)$ and a corresponding norm $\|f\|_{L_2(Q)} = \left( \int f(x)^2  dQ(x) \right)^{1/2}$. 
When considering the Lebesgue measure $\lambda$, the space $L_2$ denotes the set of real-valued functions on $\mathbb{R}^d$, with the norm $\|f\|_{L_2} = \left( \int f(x)^2  dx \right)^{1/2}$.
The notation $\|\cdot\|_n$ represents the empirical norm, defined for a function $f$ as $\|f\|_n^2 = \frac{1}{n} \sum_{i=1}^n f(x_i)^2$. The empirical norm of a random variable $X$ is defined analogously as $\|X\|_n = \left( \frac{1}{n} \sum_{i=1}^n x_i^2 \right)^{1/2}$. 
Finally, the infinity norm for a real-valued vector $Y=[Y_1, \ldots, Y_d]^T \in \mathbb{R}^d$ is defined as $\|Y\|_{\infty} = \max_k |Y_k|$. The infinity norm for a real-valued function $f$ is given by $\|f\|_{\infty} = \max_{X \in \mathbb{R}^d} |f(X)|$ and the infinity norm for a vector function $f(X)=[f_1(X), \ldots, f_K(X)]$ is $\max_{k} \|f_k(X)\|_{\infty}$.

To lay the groundwork, we expand upon the definition of the permutation operator (Definition \ref{def: PO}). Specifically, consider a matrix $X = [X_1, X_2, \ldots, X_N]\in \mathbb{R}^{M\times N}$. Under the permutation operator $\varPi$, we define its permutation as another $M\times N$ matrix $\varPi_X = [X_{\varPi(1)}, X_{\varPi(2)}, \ldots, X_{\varPi(N)}]$.

\section{Proof of Theorem \ref{theory: consistency}}
\label{sec: consistency}

To establish the consistency of \(\hat{f}\), it suffices to show that \(\|\hat{f}-f^*\|^2_{L_2} = o_p(1)\). First, let's formalize the definition of the PIE function class. The PIE, as defined in (\ref{eq: naive-PIE}), comprises two components: \(\psi\) and \(\varphi\). Denote \(L_{\psi}\) as the number of hidden layers between the input and output of \(\varphi\). The total number of non-zero parameters in \(\psi\) is capped by \(S_{\psi}\). Similarly, define \(L_{\varphi}\) and \(S_{\varphi}\) as the number of hidden layers and the upper bound for the number of non-zero parameters in \(\varphi\), respectively. Additionally, we assume that the magnitude of all parameters in both \(\psi\) and \(\varphi\) is bounded by \(B\), and all the PIE functions under consideration are bounded such that \(|f| \leq J\). Therefore, we can represent the PIE function class using \(\mathcal{F} = \mathcal{F}(S_{\psi}, S_{\varphi}, L_{\psi}, L_{\varphi}, B)\).

To  proceed with our proof, let us first introduce the concept of the 'uniform best predictor', denoted by \( f_n \). This predictor is determined by the following criterion
\begin{equation*}
    f_{n} = \arg \min_{f\in \mathcal{F}} \|f-f^*\|_{\infty} ,\quad \epsilon = \| f_{n}-f^*\|_{\infty}.
\end{equation*}
Subsequently, we apply a standard error decomposition, akin to that described in \citep{Tengyuan}
\begin{equation*}
    \begin{aligned}
        c_1\|\hat{f}-f^*\|^2_{L_2} &\leq \mathbb{E}[l(\hat{f})] - \mathbb{E}[l(f^*)]  \\
        &\leq \mathbb{E}[l(\hat{f})] - \mathbb{E}[l(f^*)] + \mathbb{E}_n[l(f_n,Z) - l(\hat{f},Z)] \\
        &= \mathbb{E}[l(\hat{f})] - \mathbb{E}[l(f^*)] + \mathbb{E}_n[l(f_n,Z) - l(\hat{f},Z)] + \mathbb{E}_n[l(f^*,Z) - l(f^*,Z)] \\
        &= (\mathbb{E}-\mathbb{E}_{n})[l(\hat{f}) - l(f^*)] + \mathbb{E}_n[l(f_n)-l(f^*)].
    \end{aligned}
\end{equation*}
In this formulation, the second inequality stems from the definition of \( f_n \). These inequalities effectively disaggregate the error into two principal components: the empirical process term (the first term) and the approximation error (the second term). We intend to establish bounds for both components.

To bound the approximation error, we utilize the principle of Lipschitz continuity on the loss function. Considering that the mean squared error (MSE) loss \( l(f) \) exhibits Lipschitz continuity for a bounded \( f \), and in light of Assumption \ref{ass: bounded} which confines both \( f^* \) and functions within \( \Fcal \) to a maximum of \( J \), it follows that
\begin{equation*}
    \forall f \in \Fcal, \quad \|l(f)-l(f^*)\|_{\infty} \leq C_l \|f-f^*\|_{\infty},
\end{equation*}
where \( C_l \) denotes the Lipschitz constant. Given that \( f_n \) belongs to \( \Fcal \), we can infer
\begin{equation}
    \mathbb{E}_n\left[l(f_n)-l(f^*)\right] \leq \mathbb{E}_n\left[C_l | f_n - f^* |\right] \leq C_l \epsilon.
    \label{eq: app}
\end{equation}

 To bound the estimation error, we need the following lemmas

\begin{lemma}[Symmetrization, Lemma 5 in \citep{Tengyuan}]
  \label{lem:Symmetrization}
    Given function class \(\mathcal{G}\), for any \( g \in \mathcal{G} \) that \( |g| \leq G \) and \( \mathbb{V}[g] \leq V \), then with probability at least \( 1 - 2e^{-\gamma} \)
    \[
        \sup_{g \in \mathcal{G}} \left\{ \mathbb{E} g - \mathbb{E}_n g \right\} \leq 6 \mathbb{E}_\eta \mathcal{R}_n \mathcal{G} + \sqrt{\frac{2V \gamma}{n}} + \frac{23 G \gamma}{3n} \enspace,
    \]
    where \( \mathcal{R}_n \mathcal{G} = \mathbb{E}_{\xi}\left[\sup_{g\in \mathcal{G}}\left|\frac{1}{n}\sum_{i=1}^n\xi_i g(x_i)\right|\right] \) denote the empirical Rademacher complexity of function class \(\mathcal{G}\).
\end{lemma}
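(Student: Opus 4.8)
The plan is to combine a one-sided Talagrand-type concentration inequality with the classical symmetrization argument. Write $Z = \sup_{g\in\mathcal{G}}\{\mathbb{E}g - \mathbb{E}_n g\}$ for the centred supremum of interest. The two deterministic terms $\sqrt{2V\gamma/n}$ and $23G\gamma/(3n)$ in the statement are exactly the sub-Gaussian (variance) and sub-exponential (boundedness) fluctuations produced by Bousquet's version of Talagrand's inequality, whereas the Rademacher term $6\,\mathbb{E}_\eta\mathcal{R}_n\mathcal{G}$ is what symmetrization contributes to the mean $\mathbb{E}Z$. I would therefore split the argument into a concentration step and a symmetrization step, joined by a union bound.

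First I would control the fluctuation of $Z$ about its mean. Under the hypotheses $|g|\le G$ and $\mathbb{V}[g]\le V$ for every $g\in\mathcal{G}$, Bousquet's inequality applied to the empirical process $\{\frac{1}{n}\sum_i(\mathbb{E}g - g(x_i)) : g\in\mathcal{G}\}$ yields, with probability at least $1-e^{-\gamma}$,
\[
Z \;\le\; \mathbb{E}Z + \sqrt{\frac{2V\gamma}{n}} + \frac{23G\gamma}{3n}.
\]
Verifying the hypotheses of Bousquet's inequality is routine: the boundedness supplies the uniform envelope $G$, and $V$ serves as the uniform variance proxy. The only care needed is matching the leading constants so that the sub-exponential correction emerges as $23/3$ rather than some generic absolute constant.

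It then remains to bound the deterministic quantity $\mathbb{E}Z$ by a Rademacher complexity. Introducing an independent ghost sample together with a vector of Rademacher signs $\xi_i$, the standard symmetrization lemma gives $\mathbb{E}Z \le 2\,\mathbb{E}[\mathcal{R}_n\mathcal{G}]$, with the outer expectation taken over the data. To replace the \emph{expected} Rademacher complexity by the \emph{empirical} one appearing in the statement, I would apply a bounded-differences (McDiarmid) concentration to the map sending the sample to $\mathcal{R}_n\mathcal{G}$: altering a single observation perturbs $\mathcal{R}_n\mathcal{G}$ by at most $2G/n$, so $\mathbb{E}[\mathcal{R}_n\mathcal{G}]$ and $\mathcal{R}_n\mathcal{G}$ differ only by a term of the same order as the fluctuations already present. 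Absorbing this second deviation into the constant is what promotes the symmetrization factor $2$ to the stated $6$, and the additional concentration event is what degrades the probability from $1-e^{-\gamma}$ to $1-2e^{-\gamma}$ through a union bound.

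The main obstacle is bookkeeping rather than conceptual: one must run two separate concentration inequalities (Bousquet for $Z$, McDiarmid for $\mathcal{R}_n\mathcal{G}$), keep the constants $2$, $23/3$, and $6$ consistent along the chain $Z \to \mathbb{E}Z \to 2\,\mathbb{E}[\mathcal{R}_n\mathcal{G}] \to 6\,\mathcal{R}_n\mathcal{G}$, and combine the two failure probabilities so that the final guarantee holds at level $1-2e^{-\gamma}$. Since the statement is quoted as Lemma 5 of \citep{Tengyuan}, I expect the cleanest route is to invoke Bousquet's inequality and the symmetrization lemma verbatim and simply check that our $G$ and $V$ meet their hypotheses.
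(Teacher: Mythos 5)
The paper itself contains no proof of this statement: it is imported verbatim as Lemma 5 of \citep{Tengyuan} and simply invoked, so the only question is whether your sketch would actually establish the quoted bound. Your overall architecture --- a Bousquet--Talagrand concentration step for $Z=\sup_{g}\{\mathbb{E}g-\mathbb{E}_n g\}$, symmetrization in expectation, a conversion from expected to empirical Rademacher complexity, and a union bound yielding the level $1-2e^{-\gamma}$ --- is indeed the standard route (it is essentially how Theorem 2.1 of Bartlett, Bousquet and Mendelson, 2005, is proved). However, two of your steps do not go through as written.

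First, Bousquet's inequality does not give $Z \le \mathbb{E}Z + \sqrt{2V\gamma/n} + O(G\gamma/n)$ with coefficient exactly $1$ on $\mathbb{E}Z$: its variance proxy is $V + 2G\,\mathbb{E}Z$, not $V$, and extracting $\mathbb{E}Z$ from under the square root via $2\sqrt{uv}\le \alpha u + v/\alpha$ necessarily inflates the coefficient of $\mathbb{E}Z$ to $1+\alpha$ at the price of an extra $G\gamma/(\alpha n)$; this inflation is one of the two genuine sources of the final factor $6$, not mere bookkeeping. Second, and more seriously, McDiarmid is the wrong tool for the expected-versus-empirical Rademacher step: bounded differences of size $2G/n$ give a deviation of order $G\sqrt{2\gamma/n}$, i.e.\ a $\sqrt{\gamma/n}$-order term scaled by $G$ rather than by $\sqrt{V}$. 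When $V \ll G^2$ this term dominates $\sqrt{2V\gamma/n}$, and it can be absorbed neither into the $G\gamma/n$ term nor multiplicatively into $\mathcal{R}_n\mathcal{G}$ (the empirical complexity of a small class can be arbitrarily small compared with $G\sqrt{\gamma/n}$), so your chain proves a strictly weaker statement carrying an extra additive $G\sqrt{\gamma/n}$ term. The correct tool is the self-bounding (entropy-method) concentration for conditional Rademacher averages, whose downward deviations are of order $\sqrt{G\gamma\,\mathbb{E}[\mathcal{R}_n\mathcal{G}]/n}$; by AM--GM these are absorbed \emph{multiplicatively}, turning $2(1+\alpha)\,\mathbb{E}[\mathcal{R}_n\mathcal{G}]$ into $\tfrac{2(1+\alpha)}{1-\alpha'}\,\mathcal{R}_n\mathcal{G}$ plus an $O(G\gamma/n)$ remainder --- this substitution, not McDiarmid, is what produces constants of the form $6$ and $23/3$ in the statement.
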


\begin{lemma}[Chaining Theorem 2.3.7 in \citep{foundation}] 
    \label{lem:F-Chaining}
    Let \((\mathcal{S}, d)\) be a metric space, where \(d\) is a pseudo-distance. Let \(X(t)\) be a sub-Gaussian process relative to \(d\). Assume that \(\int_0^{\infty}\sqrt{\log \mathcal{N}(\epsilon, \mathcal{S}, d)}d\epsilon < \infty\), then
    \begin{equation*}
        \mathbb{E}\sup_{t\in T}\|X(t)\| \leq \mathbb{E}\|X(t_0)\|+4\sqrt{2}\int_0^{\delta/2}\sqrt{\log 2\mathcal{N}(\epsilon, \mathcal{S}, d)}d\epsilon, 
    \end{equation*}
    where \( t_0 \in T \), \(\delta\) is the diameter of \((\mathcal{S}, d)\).
\end{lemma}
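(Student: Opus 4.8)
The statement is the classical Dudley entropy-integral bound, and the plan is to establish it through the generic chaining argument. First I would reduce to the centered process: since $\sup_{t}\|X(t)\|\le \|X(t_0)\|+\sup_{t}\|X(t)-X(t_0)\|$, taking expectations shows that it suffices to bound $\mathbb{E}\sup_{t}\|X(t)-X(t_0)\|$ by the integral term, the $\mathbb{E}\|X(t_0)\|$ contribution being exactly the first term on the right-hand side. Writing $|\cdot|$ for the scalar increments, I would use only the sub-Gaussian hypothesis that $X(s)-X(t)$ has variance proxy $d(s,t)^2$, i.e. $\mathbb{E}\exp(\lambda(X(s)-X(t)))\le \exp(\lambda^2 d(s,t)^2/2)$ for every $\lambda$.

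Next I would build a nested sequence of successively finer nets. Letting $\delta$ denote the diameter of $(\mathcal{S},d)$, I fix the geometric scales $\epsilon_j=2^{-j}\delta$ for $j\ge 0$ and let $\mathcal{S}_j$ be a minimal $\epsilon_j$-net, so that $|\mathcal{S}_j|=\mathcal{N}(\epsilon_j,\mathcal{S},d)$; at $j=0$ the single point $t_0$ suffices because $\epsilon_0=\delta$ is the diameter. I then let $\pi_j:\mathcal{S}\to\mathcal{S}_j$ send each $t$ to a nearest net point, so that $d(t,\pi_j(t))\le\epsilon_j$ and $\pi_0(t)=t_0$. The central device is the telescoping identity $X(t)-X(t_0)=\sum_{j\ge 0}\big(X(\pi_{j+1}(t))-X(\pi_j(t))\big)$, whose partial sums converge to $X(t)-X(t_0)$ since $\pi_J(t)\to t$ as $J\to\infty$.

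I would then bound each ``link'' uniformly in $t$. At a fixed level $j$ the pair $(\pi_j(t),\pi_{j+1}(t))$ takes at most $|\mathcal{S}_j|\,|\mathcal{S}_{j+1}|\le|\mathcal{S}_{j+1}|^2$ distinct values, and the triangle inequality gives $d(\pi_j(t),\pi_{j+1}(t))\le\epsilon_j+\epsilon_{j+1}\le 3\epsilon_{j+1}$, so each link is sub-Gaussian with variance proxy at most $(3\epsilon_{j+1})^2$. Applying the sub-Gaussian maximal inequality $\mathbb{E}\max_{k\le N}|Y_k|\le\sigma\sqrt{2\log(2N)}$ over the finite collection of links yields $\mathbb{E}\sup_{t}|X(\pi_{j+1}(t))-X(\pi_j(t))|\le 3\epsilon_{j+1}\sqrt{2\log\!\big(2|\mathcal{S}_{j+1}|^2\big)}$. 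Summing over $j$ and comparing the resulting series to an integral, using that $\epsilon_{j+1}-\epsilon_{j+2}=\tfrac12\epsilon_{j+1}$ and that $\mathcal{N}(\cdot,\mathcal{S},d)$ is nonincreasing, bounds each term by a multiple of $\int_{\epsilon_{j+2}}^{\epsilon_{j+1}}\sqrt{\log 2\mathcal{N}(\epsilon,\mathcal{S},d)}\,d\epsilon$, so the telescoping sum collapses to a constant multiple of $\int_0^{\delta/2}\sqrt{\log 2\mathcal{N}(\epsilon,\mathcal{S},d)}\,d\epsilon$.

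The main obstacle, and essentially the only delicate point, is the constant bookkeeping together with the rigorous justification that the chaining remainder vanishes: one must verify that $X(\pi_J(t))\to X(t)$ in $L_1$ uniformly enough that the tail contribution is absorbed into the integral, which is precisely where the hypothesis $\int_0^\infty\sqrt{\log\mathcal{N}(\epsilon,\mathcal{S},d)}\,d\epsilon<\infty$ enters. Matching the explicit factor $4\sqrt2$ and the truncation of the integral at $\delta/2$ then follows from the geometric step-size comparison above together with the sharp form of the maximal inequality; the remainder of the argument is routine.
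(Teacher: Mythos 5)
First, a point of reference: the paper never proves this lemma at all --- it is imported verbatim as Theorem 2.3.7 of the cited reference \citep{foundation}, so the only thing to compare your attempt against is the textbook's own chaining proof. Your sketch follows the same general strategy (dyadic nets at scales $\epsilon_j = 2^{-j}\delta$, a telescoping chain, the sub-Gaussian maximal inequality $\mathbb{E}\max_{k\le N}|Y_k|\le \sigma\sqrt{2\log (2N)}$, and a series-to-integral comparison), and those individual ingredients are all correct as you state them.

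The genuine gap is the constant $4\sqrt{2}$, which is precisely the point you defer as ``routine bookkeeping.'' It is not routine for your construction; it is unattainable. Run your own numbers: with links $d(\pi_j(t),\pi_{j+1}(t))\le 3\epsilon_{j+1}$ indexed by at most $|\mathcal{S}_j||\mathcal{S}_{j+1}|\le |\mathcal{S}_{j+1}|^2$ pairs, level $j$ contributes $3\epsilon_{j+1}\sqrt{2\log(2|\mathcal{S}_{j+1}|^2)}\le 6\epsilon_{j+1}\sqrt{\log(2\mathcal{N}(\epsilon_{j+1},\mathcal{S},d))}$, and since $\epsilon_{j+1}=2(\epsilon_{j+1}-\epsilon_{j+2})$ the integral comparison costs another factor $2$, giving $12\int_0^{\delta/2}\sqrt{\log 2\mathcal{N}(\epsilon,\mathcal{S},d)}\,d\epsilon$. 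The two bottlenecks --- link length $3\epsilon_{j+1}$ instead of $\epsilon_j=2\epsilon_{j+1}$, and $N^2$ links instead of $N$ (which costs a factor $\sqrt{2}$ inside the square root) --- cannot be removed by sharpening inequalities; they must be removed by changing the chain. The constant $4\sqrt{2}$ comes from a structurally different construction: define the chain recursively on net points, i.e.\ let $\tau_J(t)$ be the projection of $t$ onto $\mathcal{S}_J$ and, for $j<J$, let $\tau_j(t)$ be the projection of $\tau_{j+1}(t)$ (not of $t$) onto $\mathcal{S}_j$. Then each level-$j$ link has length $\le\epsilon_j$ exactly, by the net property applied to the point $\tau_{j+1}(t)$, and the level-$j$ links are indexed by $\tau_{j+1}(t)\in\mathcal{S}_{j+1}$ alone (at most $\mathcal{N}(\epsilon_{j+1},\mathcal{S},d)$ of them, since $\tau_j(t)$ is a function of $\tau_{j+1}(t)$). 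The level-$j$ contribution becomes $\epsilon_j\sqrt{2\log(2\mathcal{N}(\epsilon_{j+1},\mathcal{S},d))}=2\sqrt{2}\,\epsilon_{j+1}\sqrt{\log(2\mathcal{N}(\epsilon_{j+1},\mathcal{S},d))}\le 4\sqrt{2}\int_{\epsilon_{j+2}}^{\epsilon_{j+1}}\sqrt{\log 2\mathcal{N}(\epsilon,\mathcal{S},d)}\,d\epsilon$, and summing over $j$ yields the stated bound. A secondary inaccuracy: the vanishing of the chaining remainder is a consequence of separability of the process (one proves the bound for finite $T$ and passes to the limit), not of the finiteness of the entropy integral, which only guarantees the right-hand side is finite. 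For the paper's downstream uses any universal constant would do, so your weaker bound would serve equally well there; but as a proof of the lemma as stated, with its explicit $4\sqrt{2}$, your construction falls short at exactly the step you declared routine.
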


 By setting \(X(t) = \frac{1}{\sqrt{n}}\sum_{i=1}^n\xi_i f(x_i)\) and \(X(t_0) = 0\), the empirical Rademacher complexity \(\Rcal_n\Fcal\) can be upper bounded by \(\frac{4\sqrt{2}}{\sqrt{n}}\int_0^{\delta/2}\sqrt{\log\mathcal{N}(\epsilon,\Fcal, \|\cdot\|_{k})}\), where \(\delta/2\) is the diameter of function class \(\mathcal{F}\) with \(k\)-norm.

From the Lipschitz continuity of the MSE loss \(l\), it follows that 
\begin{equation*}
    \forall f \in \mathcal{F}, \qquad \|l(f)-l(f^*)\|_{\infty} \leq C_l\|f-f^*\|_{\infty} \leq 2JC_l. 
\end{equation*}
Then we have
\begin{align*}
    \mathbb{V}\left[l(f)-l(f^*)\right] \leq \mathbb{E}\left[(l(f)-l(f^*))^2\right] \leq 4J^2C_l^2.
\end{align*}

We apply Symmetrization in Lemma \ref{lem:Symmetrization} with \(\Gcal=\{g=l(f)-l(f^*): f\in \mathcal{F}\}\), \(G=2JC_l\) and \(V = 4J^2C_l^2\), so with probability at least \(1-2e^{-\gamma}\)
\begin{equation}
    (\mathbb{E}-\mathbb{E}_n)\left[l(\hat{f}) - l(f^*) \right] \leq 6\mathbb{E}_{\eta}\mathcal{R}_n \mathcal{G} + \sqrt{\frac{8J^2C_l^2\gamma}{n}}+\frac{46JC_l\gamma}{3n} .
    \label{eq: est}
\end{equation}

Due to Lemma \ref{lem:F-Chaining}, we have 
\begin{align*}
    \mathbb{E}_{\eta}\mathcal{R}_n \mathcal{G} &= \mathbb{E}_{\eta}\left[ \sup_{f\in \mathcal{G}}\left| \frac{1}{n} \sum_{i=1}^n \eta_i f(X^{(i)})\right|\right] \\
    & \leq \frac{4\sqrt{2}}{\sqrt{n}}\int^{JC_l}_0 \sqrt{\log(2\mathcal{N}(\epsilon, \mathcal{G}, \|\cdot\|_{\infty}))}d\epsilon,
\end{align*}
where the diameter of \(\mathcal{G}\) comes from
\begin{equation*}
    \forall g, g' \in \mathcal{G}, \quad \|g-g'\|_{\infty} = \|l(f)-l(f'); f, f' \in \mathcal{F} \|_{\infty}  \leq 2JC_l.
\end{equation*}

Putting \eqref{eq: app} and \eqref{eq: est} together, by setting \(\gamma= \log n\),  we get
\begin{align*}
    &c_1\|\hat{f}-f^*\|^2_{L_2} \\ 
    &\leq c_2\epsilon +\frac{1}{\sqrt{n}}\int^{JC_l}_0 \sqrt{\log(\mathcal{N}(\eta, \mathcal{F}, \|\cdot\|_{\infty}))}d\eta  +\sqrt{\frac{8J^2C_l^2 \log n}{n}}+\frac{67JC_l\log n}{3n}.
\end{align*}

Following Theorem 3.2 of \citep{pine}, there exists such PIE function class \(\mathcal{F}_0\). By setting \(\Fcal = \Fcal_0\), \(\epsilon = \min_{f\in \mathcal{F}_0}\|f-f^*\|_{\infty}\) can be arbitrarily small. So that \(\|\hat{f}-f^*\|_{L_2}^2 \rightarrow_{n} 0\), with probability \(1-\frac{2}{n}\) converging to one.

\section{Proof of Theorem \ref{theory: convergence}}
\label{appendix:convergence-rate}
\subsection{Error decomposition}

In this section, we  explore the convergence rate of PIE networks, employing an error decomposition approach akin to the one described in \citep{imaizumi2019deep}. Given that \(\hat{f}\) is the Empirical Risk Minimization (ERM) predictor, it holds for all \(f \in \mathcal{F}\) that
\begin{equation*}
    \|Y-\hat{f}\|_n^2 \leq \|Y-f\|_n^2.
\end{equation*}
This is followed by the representation \(Y = f^*(X)+\xi\), leading to
\begin{equation*}
    \|f^*+\xi-\hat{f}\|_n^2 \leq \|f^*+\xi-f\|_n^2.
\end{equation*}
A straightforward calculation results in
\begin{align}
    \|f^*-\hat{f}\|_n^2 &\leq \|f^*-f\|_n^2 +\frac{2}{n}\sum_{i=1}^n\xi_i(\hat{f}(x_i)-f(x_i)) \label{eq: basic_inequality} \\
    &\leq \|f^*-f\|_{\infty}^2 +\frac{2}{n}\sum_{i=1}^n\xi_i(\hat{f}(x_i)-f(x_i)) \nonumber.
\end{align}
In the above  expression, the first term represents the approximation error, which quantifies the capacity of $\Fcal$ to uniformly approximate \(f^*\). The second term is the estimation error, gauging the variance of \(\hat{f}\).

\subsection{Approximation Error}
\label{sec: app}
We first define uniform best predictor
\begin{equation*}
    f_{n} = \arg \min_{f\in \Fcal} \|f-f^*\|_{\infty}.  
\end{equation*}

In this section, we follow a two-step approach to upper bound $\|f^*-f_n\|_{\infty}^2$. We first construct a specially modified Taylor  expansion of the target function $f^*$, hereinafter referred to as $f_1$. Subsequently, this modified  expansion is uniformly approximated using functions from $\Fcal$.

We first perform a Taylor  expansion in line with Theorem 1 from \citep{approx}, we revisit the construction details. Let $N'$ be a positive integer and $d=M\times N$. We partition unity by a grid of $(N'+1)^d$ functions $\phi_m$ on the domain $[0,1]^d$, satisfying
\begin{equation*}
    \Sigma_{\m} \phi_{\m}(x) \equiv 1, ~~~ x \in [0,1]^{M\times N}.
\end{equation*} 
In this context, $\m$ represents a matrix given by $\m =(m_{ij})\in \{0,1,\ldots,N'\}^{M\times N}$, and the function $\phi_{\m}$ is defined as
\begin{equation*}
    \phi_{\m}(x) = \prod_{i=1}^M\prod_{j=1}^N\omega (3N'(x_{i,j}-\frac{m_{i,j}}{N'})),
\end{equation*}
where
$$\omega(x)=\begin{cases}
    1, & |x|< 1,\\
    0, & 2 < |x|, \\
    2-|x|, & 1\le |x|\le 2 .
\end{cases}$$
For any $\m \in \{0,\ldots, N'\}^d$, we consider the Taylor polynomial of degree $-(\beta-1)$ for the function $f^*$, centered at the point $x=\frac{\m}{N'}$. This polynomial is expressed as
\begin{equation*}
    P_{\m}(x) = \sum_{\p: |\p|\leq\beta}\frac{D^{\p} f^*}{\p!}\bigg|_{x = \frac{\m}{N'}}(x-\frac{\m}{N'})^{\p},
\end{equation*} 
where $|\p| = \sum_{i=1}^M\sum_{j=1}^N p_{i,j}$, $\p! = \prod_{i=1}^M\prod_{j=1}^N p_{i,j}!$ and $D^{\p}$ is the respective weak derivative. $(x-\frac{\m}{N'})^{\p} = \prod_{i=1}^M\prod_{j=1}^N(x_{i,j} - \frac{m_{i,j}}{N'})^{p_{i,j}}$.

We now establish an approximation of $f^*$, defined as
\begin{equation*}
    f^*_1(X) = \sum_{\m \in [0,\ldots,N']^{d}} \phi_{\m} P_{\m}(X).
\end{equation*}
Referring to insights from \citep{approx}, the approximation error can be upper bounded as
\begin{equation*}
    \|f^*-f^*_1\|_{\infty} \leq \frac{2^d d^{\beta} J}{\beta!}\left(\frac{1}{N'}\right)^{\beta}.
\end{equation*}
Then we construct a permutation invariant variant approximation of $f^*$, denoted as $f_1$, based on $f^*_1$
\begin{equation*}
    f_1(X) = \frac{1}{|\Scal_N|}\sum_{\varPi \in \Scal_N}f_1^*(\varPi_X).
\end{equation*}
Considering that $f^*$ is inherently permutation invariant, we proceed to assess the approximation error associated with $f_1$
\begin{align}
    |f^*(X)-f_1(X)| &= \frac{1}{|\Scal_N|}\big| \sum_{\varPi \in \Scal_N} \big(f_1^*(\varPi_X)- f^*(\varPi_X)\big)\big| \nonumber \\
    &\leq \frac{1}{|\Scal_N|}\sum_{\varPi \in \Scal_N} \big|  f_1^*(\varPi_X)- f^*(\varPi_X)\big| \nonumber \\
    &\leq \frac{2^d d^{\beta} J}{\beta!}\left(\frac{1}{N'}\right)^{\beta}, 
    \label{eq: Taylor_error}
\end{align}
which leads to
\begin{equation*}
    \|f^*-f_1\|_{\infty} \leq \frac{2^d d^{\beta} J}{\beta!}\left(\frac{1}{N'}\right)^{\beta}.
\end{equation*}

Then, we construct the PIE to approximate $f_1$. Let $\omega_{i,j}^k = \omega(3N'(X_{i,j}-\frac{k}{N'}))$ and construct $Y(X,N') \in \R^{K \times N}$ given $X$ and $N'$ as follows
\begin{align}
    \label{eq: Y}
    Y(X,N')_{\cdot , i} = (\omega_{1,i}^{0},\ldots \omega_{1,i}^{N'},x_{1,i},\ldots ,\omega_{M,i}^{0},\ldots \omega_{M,i}^{N'}
    , X_{M,i}),
\end{align}
where $K = (N'+1)\times M$. Note $Y(X,N')_{\cdot, i}$ contains all the terms in $f_1(X)$ that involve $X_i$. For ease of notation, $Y$ and $Y_i$ will be used to denote $Y(X,N')$ and $Y(X,N')_{\cdot,i}$, respectively.

It is noteworthy that $Y$ represents a piece-wise linear transformation of $X$, which can be precisely modeled using a ReLU network, as delineated in Proposition 1 of \citep{approx}. Consequently, the conversion from $X$ to $Y$ does not incur any approximation error. Furthermore, $Y$ is a function contingent on both $X$ and $N'$. By selecting an appropriate $N'$ to minimize the approximation error, the function class and, consequently, $Y$ become fixed entities. Hence, $Y(X,N')$ also does not add to the estimation error. Therefore, in our subsequent analysis, we will focus solely on $Y$, having transformed $X$ into this new representation.

\begin{align}
    \label{eq: g*}
    f_1^*(X) &= \sum_{\m} \prod_{i=1}^M\prod_{j=1}^N\omega (3N'(X_{i,j}-\frac{m_{i,j}}{N'})) \sum_{\p: |\p|\leq \beta}\frac{D^{\p} f}{\p!}\bigg|_{X = \frac{\m}{N'}}(X-\frac{m}{N'})^{\p} \nonumber\\
    & = \sum_{\m} \prod_{i=1}^M\prod_{j=1}^N\omega_{i,j}^{m_{i,j}} \sum_{\p: |\p|\leq \beta}\frac{D^{\p} f}{\p!}\bigg|_{X = \frac{m}{N'}}(X-\frac{\m}{N'})^{\p} \nonumber \\
    & = \sum_{\m} \prod_{i=1}^M\prod_{j=1}^N\omega_{i,j}^{m_{i,j}} \sum_{\p: |\p|\leq \beta}\frac{D^{\p} f}{\p!}\bigg|_{X = \frac{\m}{N'}}\prod_{s=1}^M\prod_{t=1}^N(X_{s,t} - \frac{m_{s,t}}{N'})^{p_{s,t}}\nonumber \\
    & = g^*(Y(X,N')) ,
\end{align}
so that we have 
\begin{equation*}
    f_1(X) = \frac{1}{|\Scal_N|}\sum_{\varPi \in \Scal_N}f_1^*(\varPi_X) = \frac{1}{|\Scal_N|}\sum_{\varPi \in \Scal_N}g^*(Y(\varPi_X,N')).
\end{equation*}

Consider the function $g(Y)$, defined as 
\begin{equation*}
    g(Y) = \frac{1}{|\Scal_N|}\sum_{\varPi \in \Scal_N}g^*(Y(\varPi_X, N')),
\end{equation*}
which ensures that $g(Y)$ possesses permutation invariance. As indicated by \eqref{eq: g*}, $g(Y)$ can be expressed as a polynomial in terms of $Y$, characterized by a degree of $MN + \beta$. To further dissect the structure of $g(Y)$, additional lemmas will be introduced and utilized in the analysis.

\begin{lemma}[Weyl's Polarization \citep{pine}]
    \label{lem: weyl}
    For any polynomial permutation invariant function $f:\R^{K\times N} \mapsto \R$, there exist a series of $1\times K$ vectors $\{ \af_t\}_{t=1}^T$ and a series of permutation invariant functions $f_t:\R^{1\times N} \mapsto \R$, such that $f$ can be represented by
    \begin{equation*}
        f(X) = \sum_{t=1}^Tf_t(\af_t^{\top}X). 
    \end{equation*}
\end{lemma}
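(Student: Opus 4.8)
The plan is to prove the representation by reducing $f$ to the generators of the ring of column--permutation--invariant polynomials and then ``polarizing'' those generators into symmetric functions of one--dimensional projections. First I would split $f$ into its homogeneous components; since the target class of sums $\sum_t f_t(\af_t^{\top}X)$ respects grading by total degree, it suffices to treat a homogeneous invariant of a fixed degree $D$. By the fundamental theorem for multisymmetric (vector--symmetric) polynomials, every polynomial that is invariant under simultaneous permutation of the $N$ columns is a polynomial in the multisymmetric power sums $P_{\alpha}(X)=\sum_{j=1}^{N}\prod_{k=1}^{K}X_{k,j}^{\alpha_{k}}$, indexed by multi--indices $\alpha\in\N^{K}$ with $|\alpha|\le D$. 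Hence the whole problem reduces to two tasks: (i) write a single power sum $P_{\alpha}$ in the target form, and (ii) write an arbitrary product of such power sums in the target form.

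For task (i) I would use the classical polarization identity. Expanding a pure projected power of one column gives, after summing over columns,
\[
\sum_{j=1}^{N}(\af^{\top}X_{j})^{m}\;=\;\sum_{|\alpha|=m}\binom{m}{\alpha}\,\af^{\alpha}\,P_{\alpha}(X),
\]
where $\af^{\alpha}=\prod_{k}a_{k}^{\alpha_{k}}$ and $\binom{m}{\alpha}$ is the multinomial coefficient. The left-hand side is precisely the $m$-th power sum of the single row $\af^{\top}X\in\R^{1\times N}$, i.e.\ a permutation--invariant function $f_t$ of one projection, so it is of the required shape. Reading the display as a linear system, the coefficient matrix has entries $\af^{\alpha}$; choosing finitely many directions $\af^{(1)},\dots,\af^{(r)}$ in general position makes this (generalized Vandermonde) matrix of full rank over the monomials of degree $m$, so the system can be inverted and each $P_{\alpha}$ with $|\alpha|=m$ is recovered as a real linear combination of terms $\sum_{j}(\af^{(\ell)\top}X_{j})^{m}$. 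This step is routine once the genericity of the projection directions is established.

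Task (ii) is where the real work lies and is the step I expect to be the main obstacle. A generic invariant is a product of several power sums, whereas the target form is a plain sum of single--projection symmetric functions; I would handle this by an induction on the number of factors (equivalently on $D$). For a product of two factors I would separate the ``diagonal'' contribution, in which the column indices coincide, from the ``off--diagonal'' contribution. The diagonal part collapses to $\sum_{j}\Phi(X_{j})$ for a new single--column polynomial $\Phi$, i.e.\ again a multisymmetric power sum, which is already handled by task (i). The off--diagonal cross terms, which couple distinct columns, are exactly the monomial/elementary symmetric functions of the columns; these I would recover from elementary symmetric functions $e_{m}(\af^{\top}X)$ of projections through a second, multiplicative polarization together with an inclusion--exclusion over the chosen projection directions, as in the worked low--degree instances. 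The delicate points are verifying that the off--diagonal terms are genuinely reachable by single--projection symmetric functions for every admissible degree profile, and controlling the number $T$ of projections produced by the recursion; the power-sum inversion of task (i) is comparatively mechanical, so I would concentrate the detailed estimates there.
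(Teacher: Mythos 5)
The paper never proves this lemma---it is imported by citation from the reference \citep{pine}---so your proposal has to stand on its own merits, and it contains a genuine gap at precisely the step you flag as the main obstacle: task (ii) cannot be completed, because the off-diagonal (cross-product) terms are provably \emph{outside} the linear span of single-projection symmetric functions. Your task (i) is fine: the polarization identity and the generalized Vandermonde inversion do recover every individual multisymmetric power sum $P_\alpha$ as a linear combination of functions $\sum_j(\af^{\top}X_j)^m$. But now consider the permutation invariant polynomial $f(X)=\sum_{1\le j<j'\le N}X_{1,j}X_{2,j}X_{1,j'}X_{2,j'}$ (for $K,N\ge 2$; in your notation $\tfrac12[P_{(1,1)}^2-P_{(2,2)}]$, a product of power sums minus its diagonal). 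For any smooth $g$ and any direction $\af$, the function $G(X)=g(\af^{\top}X)$ satisfies
\begin{equation*}
\Bigl(\frac{\partial^2}{\partial X_{1,1}\,\partial X_{2,2}}-\frac{\partial^2}{\partial X_{2,1}\,\partial X_{1,2}}\Bigr)G(X)
= a_1a_2\,g_{12}(\af^{\top}X)-a_2a_1\,g_{12}(\af^{\top}X)=0,
\end{equation*}
so this operator annihilates every finite sum $\sum_t f_t(\af_t^{\top}X)$, with polynomial or even merely smooth $f_t$. Applied to $f$ above it yields $X_{1,2}X_{2,1}-X_{1,1}X_{2,2}\not\equiv 0$. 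Hence $f$ admits no representation of the stated form at all. A dimension count shows the same thing: for $K=N=2$, the invariants of multidegree $(2,2)$ form a $5$-dimensional space, while the multidegree-$(2,2)$ coefficients of single-projection symmetric functions span only a $3$-dimensional subspace (spanned by $\sum_j X_{1,j}^2X_{2,j}^2$, $M_{\{(2,1),(0,1)\}}+M_{\{(1,2),(1,0)\}}$, and $M_{\{(2,0),(0,2)\}}+4M_{\{(1,1),(1,1)\}}$).

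The reason no ``multiplicative polarization with inclusion--exclusion'' can rescue task (ii) is that the statement you were asked to prove is strictly stronger than Weyl's actual polarization theorem, and is false as written. Weyl's theorem asserts that the multisymmetric polynomials are generated \emph{as an algebra} by the polarized power sums $p_m(\af^{\top}X)=\sum_j(\af^{\top}X_j)^m$: every invariant has the form $q\bigl(p_{m_1}(\af_1^{\top}X),\ldots,p_{m_T}(\af_T^{\top}X)\bigr)$ for a polynomial $q$, which in general multiplies together symmetric functions of \emph{different} projections. The lemma collapses this to a plain sum $\sum_t f_t(\af_t^{\top}X)$, i.e.\ a linear-span statement, and the counterexample above separates the two. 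The honest endpoint of your program is therefore: task (i) plus the fundamental theorem of multisymmetric polynomials gives the sum-of-products (algebra-generation) representation, and nothing stronger is available. This is also a caveat for the paper itself, whose subsequent decomposition $g(Y)=\sum_t s_t(a_t^{\top}Y)$ inherits the same problem; repairing it would require carrying the outer polynomial $q$ (products across projections) through the network construction.
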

\begin{lemma}[Hilbert finiteness Theorem \citep{pine}]
    \label{lem: Hilbert}
    There exists finitely many permutation invariant polynomial basis $f_1,\ldots,f_N: \R^N \rightarrow \R$ such that any permutation invariant polynomial $f:\R^N \rightarrow \R$ can be expressed as
    \begin{equation*}
        f(X) = \tilde{f}(f_1(X), \ldots, f_N(X)),
    \end{equation*}
    with some polynomial $\tilde{f}$ of $N$ variables. Especially, the following power sum basis is one possible permutation invariant polynomial basis
    \begin{equation*}
        f_j(X) = \sum_{n=1}^NX^j_n ,\quad j=1,\ldots, N, 
    \end{equation*}
    where $X_n$ is the $n-$th entry of $X$.
\end{lemma}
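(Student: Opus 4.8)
The plan is to observe that a permutation-invariant polynomial on $\R^N$ is exactly a symmetric polynomial in the entries $X_1,\ldots,X_N$, and then to assemble the claim from two classical results: the fundamental theorem of symmetric polynomials and Newton's identities. The target is to exhibit a finite generating set and, in particular, to verify that the power sums $f_j(X)=\sum_{n=1}^N X_n^j$ serve as such a set.

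First I would establish the fundamental theorem of symmetric polynomials: every symmetric polynomial $f$ is a polynomial in the elementary symmetric polynomials $e_1,\ldots,e_N$, where $e_k(X)=\sum_{n_1<\cdots<n_k} X_{n_1}\cdots X_{n_k}$. I would order monomials lexicographically; for a symmetric $f$, its leading monomial $c\,X_1^{a_1}\cdots X_N^{a_N}$ must have weakly decreasing exponents $a_1\geq a_2\geq\cdots\geq a_N$, since otherwise an appropriate transposition would produce a lexicographically larger monomial inside $f$. The product $e_1^{a_1-a_2}e_2^{a_2-a_3}\cdots e_N^{a_N}$ is symmetric and shares exactly this leading monomial, so subtracting $c$ times it yields a symmetric polynomial with strictly smaller leading monomial. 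Iterating this reduction terminates, because only finitely many monomials of degree at most $\deg f$ exist and the leading monomial decreases at each step, thereby writing $f$ as a polynomial in $e_1,\ldots,e_N$.

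Next I would pass from the elementary symmetric polynomials to the power sums via Newton's identities, $k\,e_k=\sum_{i=1}^k (-1)^{i-1} e_{k-i}\,p_i$ with $e_0=1$ and $p_i(X)=\sum_{n=1}^N X_n^i$. Because $\R$ has characteristic zero, each leading coefficient $k$ is invertible, so these identities can be solved recursively to express every $e_k$, $k=1,\ldots,N$, as a polynomial in $p_1,\ldots,p_N$. Substituting into the representation obtained in the previous step shows that any symmetric $f$ equals $\tilde f(p_1(X),\ldots,p_N(X))$ for some polynomial $\tilde f$ of $N$ variables, which is precisely the stated conclusion with $f_j=p_j$; the finiteness assertion then follows since there are exactly $N$ generators.

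The main obstacle is the bookkeeping in the fundamental-theorem step, namely confirming that the leading-monomial subtraction strictly decreases the lexicographic order and that this guarantees termination, together with the characteristic-zero caveat that makes Newton's identities solvable in the direction we need (in positive characteristic the power sums would fail to generate the ring). Since the result is quoted from \citep{pine}, the paper may simply cite it, but the sketch above is the self-contained route I would follow.
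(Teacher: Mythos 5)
Your proof is correct. Note, however, that the paper itself never proves this lemma: it is imported wholesale from \citep{pine} as a known result (it is the classical Hilbert finiteness theorem, specialized to the symmetric group acting on $\R^N$ by coordinate permutation), and the paper's only use of it is to rewrite the permutation-invariant polynomials $s_t(a_t^{\top}Y)$ in terms of the power-sum basis $g_t(Y)$. What you have supplied is the standard self-contained argument that the citation points to: first the fundamental theorem of symmetric polynomials via the lexicographic leading-monomial reduction against $e_1^{a_1-a_2}e_2^{a_2-a_3}\cdots e_N^{a_N}$ (the termination argument you give — finitely many monomials of degree at most $\deg f$, strictly decreasing leading term — is exactly right), and then Newton's identities $k\,e_k=\sum_{i=1}^k(-1)^{i-1}e_{k-i}\,p_i$, invertible over $\R$ because the characteristic is zero, to convert the elementary symmetric generators into the power sums $p_1,\ldots,p_N$. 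Your caveat about positive characteristic is also well placed: the power sums genuinely fail to generate there, so the characteristic-zero hypothesis is doing real work in the second step. In short, where the paper buys the result with a citation, your route buys transparency: it makes explicit that the ``finitely many'' in the lemma can be taken to be exactly $N$ generators and that the coefficients of $\tilde f$ may be taken rational in the coefficients of $f$, neither of which is visible from the citation alone.
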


Given that $g(Y)$ is a permutation invariant polynomial function, we can apply Lemma \ref{lem: weyl}
\begin{equation*}
    g(Y) = \sum_{t=1}^T s_t(a_t^{\top}Y) = \sum_{t=1}^T s_t(a_t^{\top}Y_1, \ldots, a_t^{\top}Y_N), 
\end{equation*}
where $s_t:\R^{N} \mapsto \R$ is a permutation invariant polynomial and $a_t=[a_{t,1}, \ldots, a_{t,K}] \in \R^K$. We postulate that $T = O(N'^{MN})$, assuming the components of $a_t$ are uniformly bounded by a constant $a$, with $|a_{i,j}| < a$. While Lemma \ref{lem: weyl} is broad and doesn't impose an upper limit on $T$, our specific application of $g$ as a polynomial with unique structure allows us to estimate the order of $T$ using linear equations, making our claim about $T$'s order plausible. Subsequently, Lemma \ref{lem: Hilbert} is employed to express $s_t(a_t^{\top}Y)$ in terms of $h_t(g_t(Y))$. Here, $g_t(Y) = [\sum_{n=1}^N(a_t^{\top} Y_n)^1,\ldots, \sum_{n=1}^N(a_t^{\top} Y_n)^N]$ forms the power sum basis, and $h_t : \R^N \to \R$ is a polynomial
\begin{equation*}
    h_t(X) = \sum_{i = 0}^{MN+\beta} \sum_{\s \in \{ 1,..,N \}^i} c_{t,\s} \prod_{j \in \s}X_j, \quad  X \in \R^N. 
\end{equation*}
Recall that $g(Y)$ is a polynomial of degree $MN+\beta$, where each term in $g_t(Y)$ is of at least first order. Consequently, we assert that $h_t$ is a polynomial whose degree does not exceed $MN + \beta$, and the coefficients, denoted by $c_{t,\s}$, are constrained by a bound $C$. Integrating these elements yields the following expression for $g(Y)$
\begin{equation*}
    g(Y) = \sum_{t=1}^T h_t(g_t(Y)) = \sum_{t=1}^T h_t(\sum_{n=1}^N(a_t^{\top}Y_n), \ldots, \sum_{n=1}^N(a_t^{\top}Y_n)^N).  
\end{equation*}

To approximate $f_1$ above, we construct a PIE $\tilde{f} \in \Fcal(S_{\psi}, S_{\varphi}, L_{\psi}, L_{\varphi}, B)$  and is characterized by the following structure
\begin{gather*}
    \tilde{f}(X) = \psi(\varphi(Y)) = \tilde{g}(Y),  \\
    \tilde{g}(Y) = 
    \sum_{t=1}^T\psi_t(\varphi_t(Y)) 
    = \sum_{t=1}^T\psi_t(\sum_{n=1}^N \varphi_{t,1,n}(Y_n),\ldots, \sum_{n=1}^N \varphi_{t,N, n}(Y_n)),
\end{gather*}
where $\psi_t : \R^N \mapsto \R$ and $\varphi_{t,1,n} :\R^K \mapsto \R $ are fully connected ReLU neural networks. The detailed structure and attributes of these networks  will be provided later.

In light of the definition of $f_n$, the following inequality holds
\begin{equation*}
    \|f_1-f_n\|_{\infty} \leq \|f_1-\tilde{f}\|_{\infty} \leq |g(Y) - \tilde{g}(Y)|, ~~ \forall Y .
\end{equation*}

To establish a bound for $\|f_1-f_n\|_{\infty}$, it suffices to bound the term $|g(Y) - \tilde{g}(Y)|$. We approach this through the following decomposition
\begin{align}
    |g(Y) - \tilde{g}(Y)| \nonumber &\leq |\sum_{t=1}^T h_t(\varphi_t(Y)) - \sum_{t=1}^T h_t(g_t(Y))| + |\sum_{t=1}^T\psi_t(\varphi_t(Y)) - \sum_{t=1}^T h_t(\varphi_t(Y))| \nonumber \\
    & \leq \sum_{t=1}^T \Big[ |h_t(\varphi_t(Y)) - h_t(g_t(Y))| + |\psi_t(\varphi_t(Y)) - h_t(\varphi_t(Y))| \Big].
    \label{decomp}
\end{align}

\begin{lemma}[ReLU Approximation \citep{approx}]
    \label{lem: relu}
    Given $M'>0$ and $\epsilon \in (0,1)$, there's a ReLU network $\eta$ with two input units that implements a function $\tilde{x}:\R^2 \rightarrow \R$ so that
    \begin{itemize}
        \item for any inputs $x,y$, if $|x|\leq M' $ and $|y|\leq M'$, then $|\tilde{x}(x,y)-xy|\leq \epsilon$;
        \item the depth and the number of computation units in $\eta$ are $O(ln(1/\epsilon)+ln(M'))$;
        \item the width of $\eta$ is a constant $C_w$ independent of $x,y$.
    \end{itemize}
\end{lemma}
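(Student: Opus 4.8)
The plan is to reduce the approximation of the bilinear map $(x,y) \mapsto xy$ to the approximation of the univariate square function, and then to build an efficient ReLU approximation of the square via a telescoping ``sawtooth'' construction. The reduction is the polarization identity $xy = \tfrac{1}{2}\big[(x+y)^2 - x^2 - y^2\big]$: if $\mathrm{sq}(\cdot)$ denotes a ReLU approximation of $z \mapsto z^2$ on a suitable interval, then setting $\tilde{x}(x,y) = \tfrac{1}{2}\big[\mathrm{sq}(x+y) - \mathrm{sq}(x) - \mathrm{sq}(y)\big]$ produces the desired network, with the three copies of $\mathrm{sq}$ running in parallel so that the width only triples and the final affine combination is absorbed into the output layer. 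Thus everything reduces to controlling $\mathrm{sq}$.

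For the square function I would first work on $[0,1]$. Let $g(z) = 2\,\mathrm{ReLU}(z) - 4\,\mathrm{ReLU}(z - \tfrac{1}{2})$ be the hat function, which equals $2z$ on $[0,\tfrac12]$ and $2(1-z)$ on $[\tfrac12,1]$, and let $g_s = g \circ \cdots \circ g$ ($s$-fold composition) be the resulting sawtooth with $2^{s-1}$ teeth. The key identity is that the piecewise-linear interpolant of $z^2$ at the dyadic nodes $k/2^m$ is $f_m(z) = z - \sum_{s=1}^m 2^{-2s} g_s(z)$, and, since $z^2$ is convex with second derivative $2$, linear interpolation on mesh $2^{-m}$ gives the sharp bound $\sup_{z \in [0,1]} |f_m(z) - z^2| = 2^{-2m-2}$. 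Crucially, the sawtooths $g_s$ are generated by a single composition chain of depth $O(m)$, and the weighted partial sums $\sum_{s} 2^{-2s} g_s$ can be accumulated along one additional channel, so the network computing $f_m$ has depth and total number of units $O(m)$ while its width remains a constant. Taking $m = O(\log(1/\epsilon))$ then yields error $\le \epsilon$ on $[0,1]$ with logarithmic depth.

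To pass from $[0,1]$ to the required domain I would use symmetry and rescaling. Since the arguments fed to $\mathrm{sq}$ satisfy $|x|, |y| \le M'$ and $|x+y| \le 2M'$, set $R = 2M'$; writing $z^2 = R^2 (z/R)^2$ and $(z/R)^2 = (|z|/R)^2$ with $|z| = \mathrm{ReLU}(z) + \mathrm{ReLU}(-z)$, I reduce to the unit-interval case after an affine rescaling, both operations costing only constant depth and width. The rescaling inflates the target error by $R^2$, so to reach final accuracy $\epsilon$ I need unit-interval accuracy of order $\epsilon / R^2$, i.e. $m = O(\log(1/\epsilon) + \log R) = O(\log(1/\epsilon) + \log M')$; this is exactly where the $\ln M'$ term in the depth and unit count appears. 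Summing the three square errors and the polarization error, then rescaling $\epsilon$ by an absolute constant, gives $|\tilde{x}(x,y) - xy| \le \epsilon$, while the width stays bounded by a constant $C_w$ because each square subnetwork has constant width and at most three run in parallel.

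The main obstacle is the square-function step: establishing the exact interpolation identity $f_m(z) = z - \sum_{s=1}^m 2^{-2s} g_s(z)$ together with the sharp bound $\|f_m - z^2\|_\infty = 2^{-2m-2}$, and --- more importantly for the resource count --- arranging the computation so that the $m$ sawtooths and their weighted accumulation are produced by one depth-$O(m)$ chain of constant width rather than by $m$ independent subnetworks. It is this structural observation that converts a naively exponential cost into the logarithmic depth and constant width claimed in the lemma; the symmetry-and-rescaling bookkeeping that produces the $\ln M'$ dependence is then routine once the $[0,1]$ construction is in place.
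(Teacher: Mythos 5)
Your proposal is correct and matches the source: the paper does not prove this lemma at all but imports it directly from \citep{approx}, and the argument there is exactly the one you reconstruct --- the polarization identity $xy=\tfrac12\left[(x+y)^2-x^2-y^2\right]$, the sawtooth telescoping network for $z\mapsto z^2$ on $[0,1]$ with interpolation error $2^{-2m-2}$ at depth $O(m)$ and constant width, and the affine rescaling of the domain $[-2M',2M']$ to the unit interval, which is precisely what introduces the $\ln M'$ term in the depth and unit count.
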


\textbf{Bound of the first term in (\ref{decomp})}

We commence by outlining the detailed structure of $\varphi_t$

\begin{equation}
    \varphi_t(Y) = \left[\varphi_{t,1}, \ldots, \varphi_{t,N}\right] 
    = \left[\sum_{n=1}^N \varphi_{t,1,n}, \ldots, \sum_{n=1}^N \varphi_{t,N,n}\right],
    \label{eq: varphi_t}
\end{equation}
where
\begin{equation}
    \varphi_{t,1,n}(Y_n) = w_t^{\top} Y_i,\quad \varphi_{t,k+1,n}(Y_n) = \tilde{x}_1(\varphi_{t,1,n}(Y_n), \varphi_{t,k,n}(Y_n)),
    \label{eq: varphi_construction}
\end{equation}
with $w_t \in [-B,B]^{K}$ representing learnable weights approximating $a_t^{\top}$, and $\tilde{x}_1$ being a ReLU network as per Lemma \ref{lem: relu}. Given that $\|Y\|_{\infty} \leq 1$, it follows that $|(a_t^{\top} Y_n)^j| \leq |a^{\top}_{t}Y_n|^j \leq (aK)^j$. Consequently, the deviation $|\varphi_t(Y) - g_t(Y)|$ can be bounded by iteratively applying Lemma \ref{lem: relu} with parameters $M' = (aK)^N$ and $\epsilon = \delta_1$, leading to 
\begin{align*}
    &|\varphi_{t,1,n}(Y_n) - (a_t^{\top} Y_n)^1| = 0,  \\
    &|\varphi_{t,2,n}(Y_n) - (a_t^{\top} Y_n)^2|= |\tilde{x}_1(\varphi_{t,1,n}(Y_n),\varphi_{t,1,n}(Y_n))- (a_t^{\top}Y_n)^2| \leq \delta_1, \\
    &|\varphi_{t,k+1,n}(Y_n) - (a_t^{\top}Y_n)^{k+1}| \\
    &= |\tilde{x}_1(\varphi_{t,1,n}(Y_n), \varphi_{t,k,n}(Y_n))-\varphi_{t,1,n}(Y_n)\varphi_{t,k,n}(Y_n)+a_t^{\top}Y_n\varphi_{t,k,n}(Y_n)-(a_t^{\top}Y_n)^{k+1}| \\
    &\leq \delta_1 + aK|\varphi_{t,k,n}(Y_n) - (a_t^{\top}Y_n)^k|. 
\end{align*}
Through recursion, we obtain
\begin{equation}
    \forall ~ t,n, \quad |\varphi_{t,k,n}(Y_n) - (a_t^{\top}Y_n)^k| \leq \delta_1 (\sum_{n=0}^{k-1}(aK)^n) \leq \delta_1(aK)^k.
    \label{eq: reduction}
\end{equation}
So we can bound the error between $\varphi_t(Y)$ and $g_t(Y)$ as follows
\begin{align*}
    &\|\varphi_t(Y)-g_t(Y)\|_{\infty} \nonumber\\
    &= \left\|\left[\sum_{n=1}^N\varphi_{t,1,n}(Y_n),\ldots, \sum_{n=1}^N\varphi_{t,N,n}(Y_n)\right]-\left[\sum_{n=1}^N(a_t^{\top}Y_n), \ldots, \sum_{n=1}^N(a_t^{\top}Y_n)^N\right]\right\|_{\infty} \nonumber \\
    & =\|\sum_{n=1}^N\varphi_{t,N,n}(Y_n) - \sum_{n=1}^N(a_t^{\top}Y_n)^N\|_{\infty} \nonumber \\
    & \leq \sum_{n=1}^N\|\varphi_{t,N,n}(Y_n)-(a_t^{\top}Y_n)^N\|_{\infty} \nonumber \\
    &\leq \delta_1 N(aK)^N.
\end{align*}

\begin{lemma}[Lipschitz continulty of polynomial functions]
    \label{lem: Lipschitz}
Suppose $f(x): [-k,k]^n \rightarrow \R$ is a polynomial with degree $\beta$ and coefficients bounded by $C$. $f$ is Lipschitz continues with Lipschitz constant $C\beta k^{\beta-1}$
\begin{equation*}
    |f(x)-f(y)| = |f(x) -(f(x) - \nabla f(\xi)^{\top} (y-x))| = |\nabla f(\xi)^{\top}(x-y)| \leq C\beta k^{\beta-1}\|x-y\|_{\infty}.
\end{equation*}
\end{lemma}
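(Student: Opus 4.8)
The plan is to reduce the Lipschitz estimate to a uniform bound on the gradient of $f$ over the cube, via the mean value theorem. First I would observe that $[-k,k]^n$ is convex, so for any $x,y$ in the domain the whole segment $\{x+\theta(y-x):\theta\in[0,1]\}$ lies in $[-k,k]^n$; this is precisely what guarantees that the intermediate point $\xi$ appearing in the displayed identity lives in the region where the coefficient and magnitude bounds apply. Since $f$ is a polynomial it is smooth, so the multivariate mean value theorem (equivalently, a first-order Taylor expansion with Lagrange remainder) produces a point $\xi$ on that segment with $f(x)-f(y)=\nabla f(\xi)^{\top}(x-y)$, which is exactly the chain of equalities in the statement.

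Next I would separate the gradient from the displacement using $\ell_1$--$\ell_\infty$ duality (H\"older's inequality):
\begin{equation*}
    |\nabla f(\xi)^{\top}(x-y)| \le \|\nabla f(\xi)\|_1 \, \|x-y\|_{\infty}.
\end{equation*}
It then remains to show $\sup_{\xi\in[-k,k]^n}\|\nabla f(\xi)\|_1 \le C\beta k^{\beta-1}$. Writing $f(x)=\sum_{\alpha:|\alpha|\le\beta}c_{\alpha}x^{\alpha}$, each partial derivative $\partial_i f$ is a polynomial of degree at most $\beta-1$ whose monomials, after differentiation, take the form $\alpha_i c_{\alpha}x^{\alpha-e_i}$. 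On the cube each such monomial satisfies $|x^{\alpha-e_i}|\le k^{|\alpha|-1}\le k^{\beta-1}$ (here I use $k\ge 1$), and summing over the coordinates the derivative weights telescope through $\sum_i \alpha_i = |\alpha| \le \beta$, assembling the degree factor $\beta$; combining with the coefficient bound yields the claimed constant.

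Since the statement already records the result as an essentially one-line consequence of the mean value theorem, I do not expect a deep obstacle: convexity, the mean value theorem, and H\"older are all routine. The only delicate point is the precise accounting that produces exactly $C\beta k^{\beta-1}$ rather than a looser constant. A naive bound that estimates each of the $n$ partials independently and each monomial separately contributes an extra combinatorial factor (roughly the number of multi-indices with $|\alpha|\le\beta$), so one must instead exploit that $f$ is a \emph{single} polynomial of total degree $\beta$ and interpret ``coefficients bounded by $C$'' in the aggregate sense (so that $\sum_{\alpha}|\alpha|\,|c_{\alpha}|\le\beta C$), together with the standing assumption $k\ge 1$ that makes $k^{\beta-1}$ the largest monomial magnitude. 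With that reading the constant comes out sharp, and I would flag the $k\ge 1$ convention explicitly to avoid ambiguity.
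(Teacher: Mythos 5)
Your proof follows essentially the same route as the paper's: the paper's entire argument is the displayed chain itself (mean value theorem on the convex cube, then a bound on $|\nabla f(\xi)^{\top}(x-y)|$), which you have simply fleshed out by making the Hölder step $|\nabla f(\xi)^{\top}(x-y)| \le \|\nabla f(\xi)\|_1\|x-y\|_{\infty}$ explicit and carrying out the monomial accounting for the gradient. Your added observation that the constant $C\beta k^{\beta-1}$ only comes out exactly under the convention $k\ge 1$ and an aggregate reading of ``coefficients bounded by $C$'' (a literal per-coefficient bound incurs an extra factor of order $\binom{n+\beta}{\beta}$) is a legitimate refinement of a point the paper silently glosses over, but it does not change the method.
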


Recall that $h_t$ is a polynomial over N variables, having a maximum order of $MN + \beta$. Its coefficients are bounded by $C$ and the infinity norm of the input $\|g_t\|_{\infty} = \sum_{n=1}^N(a^{\top}_tY_n)^N$ can be upper bounded by $ N(aK)^N$. By applying Lemma \ref{lem: Lipschitz} with a parameter $k = N(aK)^N$ to $h_t$, we obtain
\begin{align}
    &|h_t(\varphi_t(Y)) - h_t(g_t(Y))| \nonumber \\
    & \leq C(MN+\beta)(N(aK)^N)^{MN+\beta}\|\varphi_t(Y)-g_t(Y)\|_{\infty} \nonumber \\
    & \leq C(MN+\beta)(N(aK)^N)^{MN+\beta}N^2(aK)^N \delta_1.
    \label{eq: inner_diff}
\end{align}

\textbf{Bound of the second term in (\ref{decomp})}

Then we provide the detailed structure of $\psi_t$
\begin{equation}
    \psi_t(x) = \sum_{i = 0}^{MN+\beta} \sum_{\s \in \{ 1,\ldots ,N \}^i} \gamma_{t,\s} \tilde{\psi}_{\s}(x), \quad  x \in \R^N, 
    \label{eq: psi_t}
\end{equation} 
where $\gamma_{t,\s}\in [-B,B]$ are learnable weights and $\tilde{\psi}_{\s}(x)$ is the following ReLU network approximating $\prod_{j\in \s}x_j$ by 
$$
\left\{ 
\begin{array}{l}
    \tilde{\psi}_{\s,1}(x) = x_{\s_1}, \\
    \tilde{\psi}_{\s,k}(x) = \tilde{x}_2(\tilde{\psi}_{\s,k-1}(x), x_{\s_k}), \quad k \leq |\s|,  \\
    \tilde{\psi}_{\s} = \tilde{\psi}^{\s}_{|\s|}, 
    \end{array} 
    \right. 
$$
where $\tilde{x}_2$ is the ReLU network from Lemma \ref{lem: relu}, with parameter settings of $\epsilon = \delta_2$ and $M' = (\|\varphi_{t}(X)\|_{\infty})^{MN+\beta} \leq N^{MN+\beta}(aK)^{MN^2+\beta N}$. Adhering to the procedure in (\ref{eq: reduction}), we derive
\begin{equation*}
    |\tilde{\psi}_{\s}(\varphi_t(Y))-\prod_{j\in \s}\varphi_{t,j}(Y)| \leq \delta_2 (MN+\beta)\|\varphi_t(Y)\|_{\infty}^{MN+\beta} \leq \delta_2(MN+\beta)(N(aK)^N)^{MN+\beta}.
\end{equation*}
By setting $\gamma_{t,\s}= c_{t,\s}$, we establish a bound for the error as follows
\begin{align}
    & |\psi_t(\varphi_t(Y)) - h_t(\varphi_t(Y))| \nonumber \\
    & = \sum_{i = 0}^{MN+\beta} \sum_{\s \in \{ 1,\ldots,N \}^i} c_{t,\s} |\tilde{\psi}_{\s}(\varphi_t(Y))-\prod_{j\in \s}\varphi_{t,j}(Y)| \nonumber \\
    & \leq N^{MN+\beta} C (MN+\beta)(N(aK)^N)^{MN+\beta}\delta_2 .
    \label{eq: outer_diff}
\end{align}

In summary, we have demonstrated the following
\begin{align}
\label{eq: in-out diff}
    \|f_1-f_n\|_{\infty} &\leq |g(Y) - \tilde{g}(Y)| \nonumber \\
    &\leq \sum_{t=1}^T \Big[ |h_t(\varphi_t(Y)) - h_t(g_t(Y))| + |\psi_t(\varphi_t(Y)) - h_t(\varphi_t(Y))| \Big] \nonumber \\
    &\leq TC(MN+\beta)\left[(N(aK)^N)^{MN+\beta+2} \delta_1 +(N^2(aK)^N)^{MN+\beta}\delta_2\right],
\end{align}
where the final inequality is deduced from \eqref{eq: inner_diff} and \eqref{eq: outer_diff}, along with certain calculations.

To put things together, recall that
\begin{equation}
    \|f_n-f^*\|_{\infty} \leq \|f_1-f^*\|_{\infty} + \|f_n-f_1\|_{\infty}.
    \label{eq: approx_decomp}
\end{equation}
To ensure $\|f_n-f^*\|_{\infty} \leq \epsilon$, we set $\|f_1-f^*\|_{\infty}$ and $\|f_1-f_n\|_{\infty}$ to be less than or equal to $\frac{\epsilon}{2}$. Based on the upper bound for $\|f_1-f^*\|_{\infty}$ as given in (\ref{eq: Taylor_error}), we establish
\begin{equation*}
    \frac{2^d d^{\beta} J}{\beta!}\left(\frac{1}{N'}\right)^{\beta}= \frac{\epsilon}{2}  .
\end{equation*}
This setting confirms that $\|f_1-f^*\|_{\infty} \leq \frac{\epsilon}{2}$, leading to $N' = \left( \frac{{\beta}! \epsilon}{2^{d+1} d^{\beta} J} \right)^{-1/{\beta}} = O(\epsilon^{-1/\beta})$. Consequently, we have $T = O((N')^{MN}) = O(\epsilon^{-\frac{MN}{\beta}})$ and $K = (N'+1)M = O(M\epsilon^{-1/\beta})$.

Finally, we set both terms in \eqref{eq: in-out diff} equal to $\frac{\epsilon}{4}$, leading to 
\begin{align*}
    TC(MN+\beta)(N(aK)^N)^{MN+\beta+2}\delta_1 = \frac{\epsilon}{4}, \\
    TC(MN+\beta)(N^2(aK)^N)^{MN+\beta}\delta_2 = \frac{\epsilon}{4}, \\
\end{align*}
disregarding lower-order terms and constants, we have
\begin{eqnarray}
    \delta_1 = O(\frac{\epsilon}{T N^{MN+\beta}(aK)^{MN^2+\beta N}}) = O(\epsilon^{\frac{MN^2+N\beta}{\beta}}),
    \label{delta1} \\
    \label{delta2}
    \delta_2 = O(\frac{\epsilon}{T N^{2MN+2\beta}(aK)^{MN^2+\beta N}}) = O(\epsilon^{\frac{MN^2+N\beta}{\beta}}).
\end{eqnarray}
 
Utilizing $\delta_1$ and $\delta_2$, we determine the bounds for the structural parameters of the PIE $\psi(\varphi(Y))$, referencing Lemma \ref{lem: relu}. Notably, $\psi(\varphi(Y)) = \sum_{t=1}^T \psi_t(\varphi_t(Y))$ signifies that the outlier network $\psi$ is a composition of $T$ sub-networks $\psi_t$, each being a weighted average of $N^{MN+\beta}$ distinct $\tilde{\psi}_{\s}$ functions. Each $\tilde{\psi}_{\s}$ is recursively constructed using at most $MN+\beta$ instances of $\tilde{x}_2$, as indicated in (\ref{eq: approx_decomp}), where $\tilde{x}_2(\cdot)$ possesses a depth and weight count of $O(\ln(1/\delta_2))$. Thus, each $\tilde{\psi}_{\s}$ exhibits a depth and parameter count of at most $(MN+\beta)O(\ln(1/\delta_2))$. Consequently, $\psi_t$ represents a weighted aggregation of up to $N^{MN+\beta}$ distinct $\tilde{\psi}_{\s}$ functions. Therefore, we establish that 
\begin{align*}
    &L_{\psi} = (MN+\beta)O(\ln(1/\delta_2)) = O(\ln(\epsilon^{-\frac{MN^2+N\beta}{\beta}})),  \\
    &S_{\psi} = TN^{MN+\beta}(MN+\beta)O(\ln(1/\delta_2)) = O(\epsilon^{-\frac{MN}{\beta}}).
\end{align*}
Likewise, the inner network $\varphi$ comprises $T$ sub-networks $\varphi_1,\ldots, \varphi_T$, each defined iteratively using $N$ instances of $\tilde{x}_1$ and $T$ distinct learnable weights $w_t$, as outlined in \eqref{eq: varphi_construction}. Hence, we ascertain
\begin{align*}
    &L_{\varphi} = NO(\ln(1/\delta_1)) = O(\ln(\epsilon^{-\frac{MN^2+N\beta}{\beta}})),\\
    &S_{\varphi} = KT + TN O(\ln(1/\delta_1)) = O(\epsilon^{-\frac{MN}{\beta}}).
\end{align*}

\subsection{Estimation error}
\label{sec: est}
Here, we evaluate the term
\begin{equation*}
    \frac{2}{n} \sum_{i=1}^n \xi_i(\hat{f}(Y_{(i)}) - f(Y_{(i)})).
\end{equation*}
As defined in (\ref{eq: Y}), here $Y_{(i)} \in \R^{K\times N}$ is the transformation of the $i$-th sample $X_{(i)} \in \R^{M\times N}$ in the dataset.
To employ concentration inequalities, we examine the expectation  of the upper bound of this term

\begin{equation*}
    \E_{\xi}\left[\sup_{f\in \Fcal}\left|\frac{2}{n}\sum_{i=1}^n \xi_i(\hat{f}(Y_{(i)}) - f(Y_{(i)}))\right|\right].
\end{equation*} 

Initially, we define a subset $\Fcal_{\delta} \subset \Fcal$ as
\begin{equation*}
    \Fcal_{\delta} = \{f-\hat{f} : \|f-\hat{f}\|_n \leq \delta, f\in \Fcal\}.
\end{equation*}
where $\delta$ is finite, given that functions in $\Fcal_{\delta}$ are bounded as
\begin{equation*}
    \|f-\hat{f}\|_n \leq \|f-\hat{f}\|_{\infty} \leq \|f\|_{\infty} + \|\hat{f}\|_{\infty} \leq 2J, \quad f\in \Fcal.
\end{equation*}

Thus, by applying the chaining technique from Lemma \ref{lem:F-Chaining}, we deduce
\begin{equation}
    E_{\xi} \left[ \sup_{f'\in \Fcal_{\delta}} \left| \frac{1}{n} \sum_{i=1}^n\xi_if'(Y_{(i)})\right| \right] \leq \frac{4\sqrt{2}\sigma}{n^{1/2}}\int_0^{\delta/2}\sqrt{2\log\Ncal(\epsilon', \Fcal_{\delta}, \| \cdot \|_{\infty})} d\epsilon',
\label{expectation}
\end{equation}
and subsequently, we establish a bound for $\log\Ncal(\epsilon', \Fcal_{\delta}, \|\cdot\|_{\infty})$, analogous to Theorem 2 in \citep{cover}.
\begin{lemma}
    \label{lem: cover}
    Let $f:[-M,M]^d \mapsto \R$ be fully connected ReLU neural network. Suppose $f$ has $L$ layers $W_1,\ldots, W_L$ with weights bounded by $B$ and let $\sigma$ denotes the ReLU activation function, we can represent $f$ as
    \begin{equation*}
        f(X) = W_L\sigma(W_{L-1} \cdots\sigma(W_1 X)\cdots).
    \end{equation*}
    Let $p_l$ denotes the width of layer $W_l$, then $f(X)$ is upper bounded by $\prod_{l=1}^L(p_lB)M = MB^{L}\prod_{l=1}^Lp_l$ and is Lipschitz continues with Lipschitz constant $\prod_{l=1}^L(p_lB)=B^{L}\prod_{l=1}^L p_l$. Suppose $f^*$ is another neural network with same structure as $f$ but with all the parameters $\epsilon$ away from $f$, then
    \begin{equation*}
        \|f-f^*\|_{\infty}\leq \epsilon \prod_{l=1}^L(p_lB)L M = \epsilon LM B^{L}\prod_{l=1}^Lp_l.
    \end{equation*}
\end{lemma}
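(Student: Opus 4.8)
The plan is to prove all three assertions by propagating bounds through the network one layer at a time, exploiting two elementary facts: the ReLU activation $\sigma$ is coordinatewise $1$-Lipschitz with $\sigma(0)=0$, so it never increases the $\infty$-norm of its argument, and a linear map with entries bounded by $B$ and input dimension $p$ acts as an $\infty\to\infty$ operator of norm at most $pB$, since each output coordinate is a sum of at most $p$ terms, each bounded by $B$ times the input. I would write $a_0 = X$ and, for $l=1,\dots,L-1$, $a_l = \sigma(W_l a_{l-1})$, with $f(X) = W_L a_{L-1}$, so that $p_l$ is the width feeding $W_l$ and the whole network is an alternating composition of linear layers and ReLUs.

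For the magnitude bound (first assertion), I would argue by forward induction: $\|a_0\|_\infty \le M$, and if $\|a_{l-1}\|_\infty \le \prod_{k=1}^{l-1}(p_k B)M$ then $\|a_l\|_\infty \le \|W_l a_{l-1}\|_\infty \le p_l B\,\|a_{l-1}\|_\infty$, where the first step uses that ReLU does not increase the $\infty$-norm. Iterating to $l=L$ gives $|f(X)| \le \prod_{l=1}^L(p_l B)M = MB^L\prod_l p_l$. The Lipschitz bound (second assertion) is the same telescoping read as a statement about differences: since $\sigma$ is $1$-Lipschitz and each $W_l$ is $p_lB$-Lipschitz in the $\infty$-norm, the composition $f$ is Lipschitz with constant at most $\prod_{l=1}^L(p_l B)$ by the chain rule for Lipschitz constants.

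The substantive step is the perturbation bound (third assertion), which I would handle by a hybrid/telescoping argument across layers. Writing $a_l,a_l^*$ for the activations of $f,f^*$ (with $a_0=a_0^*=X$) and $\Delta_l = \|a_l-a_l^*\|_\infty$, I would split the one-layer discrepancy as $W_l a_{l-1} - W_l^* a_{l-1}^* = W_l(a_{l-1}-a_{l-1}^*) + (W_l-W_l^*)a_{l-1}^*$. The first piece is controlled by the operator norm, contributing $p_l B\,\Delta_{l-1}$, while the second piece uses $\|W_l-W_l^*\|_{\max}\le\epsilon$ together with the already-established magnitude bound on $a_{l-1}^*$, contributing at most $p_l\epsilon\,\|a_{l-1}^*\|_\infty$; applying the $1$-Lipschitz $\sigma$ does not enlarge either piece. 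This yields the recursion $\Delta_l \le p_l B\,\Delta_{l-1} + p_l\epsilon\,\|a_{l-1}^*\|_\infty$ with $\Delta_0=0$. Unrolling it produces a sum of $L$ terms, one per layer at which the perturbation is injected, each amplified by the remaining layers to at most $\epsilon M \prod_{k=1}^L(p_kB)$, so that $\|f-f^*\|_\infty = \Delta_L \le \epsilon L M B^L\prod_l p_l$, as claimed.

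The main obstacle, though conceptually mild, is the bookkeeping in this last step: one must feed the magnitude bound from the first assertion into the weight-perturbation term at each layer and then verify that the telescoped sum collapses to the stated product times the factor $L$ (the factor $L$ being exactly the count of layers at which error is introduced). I would bound each summand crudely by the full product $\prod_{k=1}^L(p_kB)$ rather than tracking the partial products, which loses at most a harmless factor (absorbed since $B\ge 1$) but makes the final expression match $\epsilon L M B^L\prod_l p_l$ cleanly.
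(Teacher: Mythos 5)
Your proof is correct, but it is worth noting that the paper does not actually prove this lemma at all: it states only that the result ``is an application of Lemma 8 in \citep{cover}'' and skips the details. So your layer-by-layer propagation argument effectively supplies the proof that the paper outsources to a citation, and it is the standard machinery behind such bounds: forward induction using the two elementary facts (ReLU is entrywise $1$-Lipschitz with $\sigma(0)=0$, and a weight matrix with entries bounded by $B$ and fan-in $p_l$ has $\infty\to\infty$ operator norm at most $p_lB$), plus the hybrid splitting $W_l a_{l-1}-W_l^*a_{l-1}^* = W_l(a_{l-1}-a_{l-1}^*)+(W_l-W_l^*)a_{l-1}^*$ for the perturbation recursion $\Delta_l \le p_lB\,\Delta_{l-1}+p_l\epsilon\,\|a_{l-1}^*\|_\infty$. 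Two small points deserve explicit mention in a write-up. First, when you feed the magnitude bound into $\|a_{l-1}^*\|_\infty$ you implicitly assume the perturbed network's weights are also bounded by $B$; strictly they are only guaranteed to lie within $\epsilon$ of $f$'s weights, hence bounded by $B+\epsilon$. In the covering-number application (discretizing the parameter box $[-B,B]^{S_\psi+S_\varphi}$) both networks do lie in the same box, so this is harmless, but it should be stated. Second, each summand in your unrolled sum carries exactly $L-1$ factors of $B$, i.e.\ equals $\epsilon M B^{L-1}\prod_{l}p_l$, so collapsing to the stated $\epsilon L M B^{L}\prod_l p_l$ uses $B\ge 1$, as you yourself flag; the lemma statement is silent on this convention, and the same assumption is needed for the paper's constants to be consistent.
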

Lemma \ref{lem: cover} is an application of Lemma 8 in \citep{cover}, we skip the detailed proof.

Consider $\psi(\varphi), \psi^*(\varphi^*) \in \Fcal$, which are two PIEs with identical structures. However, each corresponding parameter in these PIEs differs by no more than $\zeta$. We aim to establish a bound for $\|\psi(\varphi) - \psi^*(\varphi^*)\|_{\infty}$

\begin{equation*}
    |\psi(\varphi(Y))-\psi^*(\varphi^*(Y))| \leq \sum_{t=1}^T|\psi_t(\varphi_t(Y))-\psi^*_t(\varphi^*_t(Y)) |.
\end{equation*}

To further analyze this, we decompose the last term as follows
\begin{equation}
    |\psi_t(\varphi_t(Y))-\psi^*_t(\varphi^*_t(Y)) |\leq 
    |\psi_t(\varphi_t(Y))-\psi_t(\varphi^*_t(Y))|+|\psi_t(\varphi^*_t(Y))-\psi^*_t(\varphi^*_t(Y)) |.
    \label{eq: cover-decomp}
\end{equation}

\textbf{Bound of the first term in \eqref{eq: cover-decomp}}

We first bound $\|\varphi_t(Y)-\varphi_t^*(Y)\|_{\infty}$. Recall that $\varphi_t$ is iteratively defined with ReLU network $\tilde{x}_1$,which is a special case of $f$ of Lemma \ref{lem: cover}. Here we use $L_{\tilde{x}_1}$ to denote the number of layers in $\tilde{x}_1$. From the construction of $\varphi$, we know $L_{\varphi} = N L_{\tilde{x}_1}$. From Lemma\ref{lem: relu}, the width of $\tilde{x}_1$ is bounded by $C_w$. We apply Lemma \ref{lem: cover} with $L$ being $L_{\tilde{x}_1}$ and $p_l = C_w$, so for any $t,n$
\begin{align*}
    \varphi_{t,1,n}(Y_n) &= w_t^{\top}Y_n \leq BK, \\
    \varphi_{t,2,n}(Y_n) &= \tilde{x}_1(\varphi_{t,1,n}(Y_n), \varphi_{t,1,n}(Y_n)) \leq (C_wB)^{L_{\tilde{x}_1}}BK, \\
    \varphi_{t,k+1,n}(Y_n) &= \tilde{x}_1(\varphi_{t,1,n}(Y_n), \varphi_{t,k,n}(Y_n)) \leq (C_wB)^{L_{\tilde{x}_1}} |\varphi_{t,k,n}(Y_n)| \leq (C_wB)^{i\times L_{\tilde{x}_1}}BK.
\end{align*}
Thus, $\varphi_{t,N,n}(Y_n) \leq (C_wB)^{L_{\varphi}} BK$ and $\varphi_t(Y) \leq N (C_wB)^{L_{\varphi}} BK$. By defining $\Delta_{t,k,n} = |\varphi_{t,k,n}(Y_n) - \varphi_{t,k,n}^*(Y_n)|$, we iteratively bound the differences

\begin{align*}
    \Delta_{t,1,n} &= |w_t^{\top}Y_n-w_t^{*\top}Y_n| \leq K \zeta ,\\
    \Delta_{t,k,n} &\leq |\tilde{x}_1(\varphi_{t,1,n}(Y_n),\varphi_{t,k-1,n}(Y_n)) - \tilde{x}_1(\varphi_{t,1,n}(Y_n) \varphi_{t,k-1,n}^*(Y_n))| \\
    &+ |\tilde{x}_1(\varphi_{t,1,n}(Y_n),\varphi_{t,k-1,n}^*(Y_n)) - \tilde{x}_1^*(\varphi_{t,1,n}(Y_n), \varphi_{t,k-1,n}^*(Y_n))|\\
    &\leq (C_wB)^{L_{\tilde{x}_1}}\Delta_{t,k-1,n} + \zeta (C_wB)^{L_{\tilde{x}_1}}L_{\tilde{x}_1}\max(\varphi_{t,1,n}(Y_n),\varphi_{t,k-1,n}^*(Y_n)) \\
    &\leq (C_wB)^{L_{\tilde{x}_1}}\Delta_{t,k,n} + \zeta (C_wB)^{L_{\tilde{x}_1}}L_{\tilde{x}_1}(C_wB)^{i-1\times L_{\tilde{x}_1}}BK.
\end{align*}

Through recursive application, $\Delta_{t,N,n}$ is bounded by $\zeta (C_wB)^{N \times L_{\tilde{x}_1}} L_{\varphi} BK = \zeta (C_wB)^{L_{\varphi}} L_{\varphi} BK$, allowing us to establish a bound for $\|\varphi_t(Y) - \varphi_t^*(Y)\|_{\infty}$
\begin{align}
    &\|\varphi_t(Y)-\varphi_t^*(Y)\|_{\infty} \nonumber\\
    &= \left\|\left[\sum_{n=1}^N\Delta_{t,1,n}(Y_n),\ldots, \sum_{n=1}^N\Delta_{t,N,n}(Y_n)\right]\right\|_{\infty} \nonumber\\
    &\leq \sum_{n=1}^N\Delta_{t,N,n}(Y_n) \nonumber\\
    &\leq \zeta (C_wB)^{L_{\varphi}}L_{\varphi}BKN.
    \label{eq: inner_epsilon}
\end{align}

To finalize the bound on the first term in \eqref{eq: cover-decomp}, it is necessary to establish the Lipschitz continuity of $\psi_t$. Assume $x, y \in \R^N$, then
\begin{equation*}
    |\psi_t(x)-\psi_t(y)| = | \sum_{i = 0}^{MN+\beta} \sum_{\s \in \{ 1,..,N \}^i} \gamma_{t,\s} (\tilde{\psi}_{\s}(x)-\tilde{\psi}_{\s}(y))|.
\end{equation*}
According to Lemma \ref{lem: relu}, $\tilde{x}_2$ is Lipschitz continuous with a Lipschitz constant of $(C_wB)^{L_{\tilde{x}_2}}$. Let $\Gamma_i = |\tilde{\psi}_{\s,i}(x) - \tilde{\psi}_{\s,i}(y)|$, we have 
\begin{align*}
    \Gamma_1 &= |x_{\s_1}-y_{\s_1}| \leq \|x-y\|_{\infty}, \\
    \Gamma_{i+1} &=\tilde{x}_2(\tilde{\psi}_{\s,i}(x), x_{\m_{i+1}}) - \tilde{x}_2(\tilde{\psi}_{\s,i}(y), y_{\m_{i+1}}) \\
    &\leq (C_wB)^{L_{\tilde{x}_2}} \max\{\Gamma_i, |x_{\s_{i+1}}-y_{\s_{i+1}}|\} \\
    &\leq (C_wB)^{L_{\tilde{x}_2}} \Gamma_i.
\end{align*}
After some computations, we find $|\tilde{\psi}_{\s}(x) - \tilde{\psi}_{\s}(y)| \leq (C_wB)^{L_{\psi}}\|x - y\|_{\infty}$. Consequently, $\psi_t$ exhibits Lipschitz continuity with
\begin{equation*}
    |\psi_t(x)-\psi_t(y)| \leq N^{MN+\beta} (C_wB)^{L_{\psi}}\|x-y\|_{\infty}.
\end{equation*}
Therefore, the first term in \eqref{eq: cover-decomp} can be bounded as follows
\begin{align}
    \|\psi_t(\varphi_t(Y))-\psi_t(\varphi^*_t(Y))\|_{\infty} &\leq N^{MN+\beta} (C_wB)^{L_{\psi}}\|\varphi_t(Y)-\varphi_t^*(Y)\|_{\infty} \nonumber \\
    &\leq \zeta N^{MN+ \beta}(C_wB)^{L_{\psi}+L_{\varphi}}L_{\varphi}K,
    \label{eq: est-first}
\end{align}
where the last inequality is derived by incorporating \eqref{eq: inner_epsilon} and disregarding the term $BN$.

\textbf{Bound of the second term in \eqref{eq: cover-decomp}}

With the similar argument as in \eqref{eq: inner_epsilon}, we derive the following bound
\begin{align*}
    \|\psi_t(\varphi^*_t(Y))-\psi^*_t(\varphi^*_t(Y)) \|_{\infty} \leq |\sum_{i = 0}^{MN+\beta} \sum_{\s \in \{ 1,..,N \}^i} (\gamma_{t, \s}\tilde{\psi}_{\s}-\gamma_{t, \s}\tilde{\psi}^*_{\s})(\varphi^*_t(Y))|.
\end{align*}
Additionally, we establish that
\begin{align*}
    |(\gamma_{t, \s}\tilde{\psi}_{\s}-\gamma_{t, \s}\tilde{\psi}^*_{\s})(\varphi^*_t(Y))| &\leq |(\gamma_{t, \s}\tilde{\psi}_{\s}-\gamma_{t, \s}\tilde{\psi}^*_{\s}+\gamma_{t, \s}\tilde{\psi}^*_{\s}-\gamma_{t, \s}\tilde{\psi}^*_{\s})(\varphi^*_t(Y))| \\
    & \leq |\gamma_{t,\s} \zeta(C_wB)^{L_{\psi}}L_{\psi}\|\varphi^*_t(Y)\|_{\infty}| + |(\gamma_{t,\s}-\gamma_{t,\s}^*)(C_wB)^{L_{\psi}}\|\varphi^*_t(Y)\|_{\infty}| \\
    & \leq [\zeta B(C_wB)^{L_{\psi}}L_{\psi} + \zeta (C_wB)^{L_{\psi}}]\|\varphi^*_t(Y)\|_{\infty} \\
    & \leq \zeta (C_wB)^{L_{\psi}}L_{\psi}(C_wB)^{L_{\varphi}}K \\
    & = \zeta (C_wB)^{L_{\psi}+L_{\varphi}}L_{\psi}K.
\end{align*}
In a similar fashion to our earlier derivation for $\varphi$, we apply bounds to $\tilde{\psi}_{\s}$ and evaluate the discrepancy between $\tilde{\psi}_{\s}$ and $\tilde{\psi}^*_{\s}$. For simplicity, we omit the lower order term $B$ in the fourth inequality. Consequently, we obtain
\begin{equation}
    \|\psi_t(\varphi^*_t(Y))-\psi^*_t(\varphi^*_t(Y)) \|_{\infty} \leq \zeta (C_wB)^{L_{\psi}+L_{\varphi}}L_{\psi}N^{MN+\beta}K .
    \label{eq: est-second}
\end{equation}
To summarize, we bring (\ref{eq: est-first}) and (\ref{eq: est-second}) together
\begin{equation}
\label{est_error}
    \|\psi(\varphi(Y))-\psi^*(\varphi^*(Y))\|_{\infty}\leq \zeta T(C_wB)^{L_{\psi}+L_{\varphi}}(L_{\psi}+L_{\varphi})N^{MN+\beta}K .
\end{equation}

Given that the total number of parameters is constrained by $S_{\psi} + S_{\varphi}$, with each parameter bounded by $B$, we discretize the range of each parameter using a grid of size
\begin{equation*}
    \Delta = \epsilon'/T(C_wB)^{L_{\psi}+L_{\varphi}}(L_{\psi}+L_{\varphi})N^{MN+\beta}K ,
\end{equation*}
which leads to an upper bound on the covering number as follows
\begin{equation*}
    \log\Ncal(\epsilon', \Fcal_{\delta}, \| \cdot \|_{\infty}) \leq \log((\frac{2B}{\Delta})^{S_{\psi}+S_{\varphi}})= (S_{\psi}+S_{\varphi})\log(\frac{BT(C_wB)^{L_{\psi}+L_{\varphi}}(L_{\psi}+L_{\varphi})N^{MN+\beta}K }{\epsilon'}).
\end{equation*}

Returning to \eqref{expectation} and integrating, we derive
\begin{equation*}  
    E_{\xi} \left[ \sup_{f'\in \Fcal_{\delta}} \left| \frac{1}{n} \sum_{i \in [n]}\xi_if'(Y_{(i)})\right| \right] \leq 2\sqrt{2} \frac{\sigma \sqrt{S_{\psi}+S_{\varphi}}\delta }{n^{1/2}} \left( \log  \frac{(T(C_wB)^{L_{\psi}+L_{\varphi}}(L_{\psi}+L_{\varphi})N^{MN+\beta}K}{ \delta} +1 \right).
\end{equation*}
Utilizing this bound for the expectation , we apply the Gaussian concentration inequality, as presented in \citep{foundation}, by considering $\frac{1}{n}\sum_{i=1}^n\xi_i f'(Y_{(i)})$ as the Gaussian process $X(t)$. The variance of $\frac{1}{n}\sum_{i=1}^n\xi_i f'(Y_{(i)})$ is upper bounded by $\frac{\sigma^2 \delta^2}{n}$, thus
\begin{align}
    1 &-exp(-nu^2/2\sigma^2 \delta^2) \nonumber \\
    &\leq P_{\xi}\left(2 \sup_{f'\in \Fcal_{\delta}}|\frac{1}{n}\sum_{i=1}^n\xi_i f'(Y_{(i)})|\leq 2E_{\xi} [ \sup_{f'\in \Fcal_{\delta}} | \frac{1}{n} \sum_{i \in [n]}\xi_if'(Y_{(i)})| ]+u\right) \nonumber \\
    & \leq P_{\xi}\left(2 \sup_{f'\in \Fcal_{\delta}}|\frac{1}{n}\sum_{i=1}^n\xi_i f'(Y_{(i)})|\leq V_n \delta  (\log  \frac{(T(C_wB)^L LN^{MN+\beta}K}{\delta} +1)+u\right), \label{eq: est-concentration}
\end{align}
where $L=L_{\psi}+L_{\varphi}$, $S=S_{\psi}+S_{\varphi}$ and $V_n = 4\sqrt{2} \frac{\sigma \sqrt{S}}{n^{1/2}}$.

Let $\delta = \max\{\|\hat{f}-f\|_n, V_n\}$, then by the definition of $\Fcal_{\delta}$, we have
\begin{equation*}
    \sup_{f\in \Fcal}|\frac{1}{n}\sum_{i=1}^n\xi_i (\hat{f}(Y_{(i)}-f(Y_{(i)}))| \leq \sup_{f'\in \Fcal_{\delta}}|\frac{1}{n}\sum_{i=1}^n\xi_i f'(Y_{(i)})|.
\end{equation*}
Thus, for any $f \in \Fcal$, it follows that
\begin{align}
    &| \frac{2}{n}\sum_{i=1}^n\xi_i(\hat{f}(Y_{(i)})-f(Y_{(i)}))| \nonumber\\
    &\leq \max\{\|\hat{f}-f\|_n, V_n\}\left\{V_n (\log  \frac{(T(C_wB)^L LN^{MN+\beta}K}{V_n} +1)\right\} +u \nonumber\\
    &\leq \frac{1}{4}(\max\{\|\hat{f}-f\|_n, V_n\})^2+2\left\{V_n(\log  \frac{(T(C_wB)^L LN^{MN+\beta}K}{V_n} +1) \right\}^2 +u\label{eq: after_xy},
\end{align}
with probability at least $1 -exp(-nu^2/2\sigma^2 \delta^2$). The last inequality holds since $xy\leq \frac{1}{4}x^2+2y^2$. Utilizing the inequality given in \eqref{eq: basic_inequality}, we have
\begin{equation*}
    -\frac{2}{n}\sum_{i=1}^n\xi_i(\hat{f}(Y_{(i)})-f(Y_{(i)})) + \|f^*-\hat{f}\|_n^2 
    \leq \|f^*-f\|_n^2 .
\end{equation*}
Applying the inequality $\frac{1}{2}\|\hat{f} - f\|_n^2 \leq \|f - f^*\|_n^2 + \|f^* - \hat{f}\|_n^2$, we obtain
\begin{equation}
    -\frac{2}{n}\sum_{i=1}^n\xi_i(\hat{f}(Y_{(i)})-f(Y_{(i)})) + \frac{1}{2}\|\hat{f}-f\|^2_n \leq 2\|f^*-f\|_n^2. \label{eq: modify_basic}
\end{equation}
Combining \eqref{eq: modify_basic} and \eqref{eq: after_xy}, we derive
\begin{equation*}
    -\frac{1}{4}(\max\{\|\hat{f}-f\|_n, V_n\})^2-2\left\{V_n(\log  \frac{(T(C_wB)^L LN^{MN+\beta}K}{V_n} +1) \right\}^2 -u + \frac{1}{2}\|\hat{f}-f\|^2_n \leq 2\|f^*-f\|_n^2.
\end{equation*}
It can be verified that whether $\|\hat{f} - f\|_n \geq V_n$ or $\|\hat{f} - f\|_n \leq V_n$, the following holds
\begin{equation}
    \|\hat{f}-f\|_n \leq 4\left\{V_n(\log  \frac{(T(C_wB)^L LN^{MN+\beta}K}{V_n} +1) \right\}^2 +2u +4\|f^*-f\|_n^2.
    \label{eq: bound-hatf-f}
\end{equation}

Apply (\ref{eq: bound-hatf-f}) to the inequality $\frac{1}{2}\|\hat{f}-f^*\|_n^2\leq \|f^*-f\|^2_{n}+\|\hat{f}-f\|_n^2$, we obtain
\begin{equation}
    \|\hat{f}-f^*\|_n^2 \leq 10\|f^*-f\|_n^2+ 8\left\{V_n(\log  \frac{(T(C_wB)^L LN^{MN+\beta}K}{V_n} +1) \right\}^2 +4u,
    \label{eq: final-decomp-results}
\end{equation}
 with probability at least $1-\exp(-nu^2/2\sigma^2 \delta^2)$ for all $u>0$.

\subsection{Overall order}

Recall that \eqref{eq: final-decomp-results} is valid for any $f \in \Fcal$, allowing us to select $f$ as $f_n$ from Section \ref{sec: app}. So that $\delta^2 = \max\{\|\hat{f}-f\|^2_n, V_n^2\}$. Then we set $u = \frac{1}{16} \delta^2$
\begin{align}
    \|\hat{f}-f^*\|_n^2 &\leq 10\|f^*-f_n\|_n^2+ 8\left\{V_n(\log  \frac{(T(C_wB)^L LN^{MN+\beta}K}{V_n} +1) \right\}^2 + \frac{1}{4}(\|\hat{f}-f_n\|_n^2 + V_n^2) \nonumber \\ 
    &\leq 21\|f^*-f_n\|_n^2+ 17\left\{V_n(\log  \frac{(T(C_wB)^L LN^{MN+\beta}K}{V_n} +1) \right\}^2 , 
    \label{eq: u1}
\end{align}
with probability at least $1-\exp(-n \delta^2/2\sigma^2)$. The latter inequality follows since $\frac{1}{2}\|\hat{f}-f_n\|_n^2\leq \|f^*-f_n\|^2_{n}+\|\hat{f}-f^*\|_n^2$. Let $\epsilon=n^{-\frac{\beta}{MN+2\beta}}$ and substitute the order of of $S_{\psi}, S_{\varphi}, L_{\psi}, L_{\varphi}$ from Section \ref{sec: app}. We obtain that $V_n = O(n^{-\frac{2\beta}{MN+2\beta}})$, therefore \eqref{eq: u1} holds with probability $1-\exp(-n \delta^2/2\sigma^2) \geq 1-\exp(-n V_n^2/2\sigma^2)$ converging to one. Then we take the expectation of both sides of the inequality 
with respect to $P_X$

\begin{align}
    \|&\hat{f}-f^*\|_{L_2(P_X)}^2 \nonumber \\
    & \leq 21\|f_n-f^*\|^2_{L_2(P_X)} + 17\left\{V_n(\log  \frac{(T(C_wB)^L LN^{MN+\beta}K}{V_n} +1) \right\}^2  \nonumber\\
    & \leq 21\epsilon^2 + 17\left\{V_n(\log  \frac{(T(C_wB)^L LN^{MN+\beta}K}{V_n} +1) \right\}^2 , \label{eq: final-step}
\end{align}
where the latter inequality follows since $\|f_n - f^*\|_{L_2(P_X)} \leq \|f_n - f^*\|_{\infty} \leq \epsilon^2$. Disregarding constants and lower-order terms, we obtain
\begin{align*}
    \|\hat{f}-f^*\|_{L_2(P_X)}^2 &\leq 21\epsilon^2 + 17\left\{V_n(\log\frac{\epsilon^{-\frac{MN}{\beta}}(C_wB)^{\ln(\epsilon^{-\frac{MN^2+N\beta}{\beta}}+1)}}{V_n})\right\}^2 \\
    & = O(n^{-\frac{2\beta}{MN+2\beta}}),
\end{align*}  
with probability converging to one. So we conclude
\begin{equation}
    \| \hat{f} - f^*\|^2 _{L^2(P_X)} = O_p(n^{-\frac{2\beta}{MN+2\beta}}).
    \label{eq: final-rate}
\end{equation}

\section{Proof of Theorem \ref{theo: minimax}}
This section is devoted to deriving a lower bound for the $L_2$ minimax risk associated with the class of permutation invariant functions. Additionally, we aim to demonstrate that the PIE proposed in this study constitutes an optimal estimator in the minimax framework.

\begin{definition}[$L_2$ minimax risk]
    Given a random variable $X$ following a probability measure $P_X$ on $\mathbb{R}^d$, the $L_2$ minimax risk of estimation associated with any function space $I \in L_2(P_X)$ is defined as
    \begin{equation*}
        r^2_n(I,P_X,\sigma) = \inf_{\hat{f}\in \Acal_n}\sup_{f\in I}\|\hat{f}-f\|^2_{L_2(P_X)},
    \end{equation*}
    where $\Acal_n$ is the space of all measurable functions of data in $L_2(P_X)$ and $\sigma$ is the variance of Gaussian noise in the data generating process.
\end{definition}
Given the definition of permutation invariance as specified in Assumption \ref{def: PI}, it follows that $P^{\beta, 1\times MN}$ is a subset of $P^{\beta, M\times N}$. Consequently, $r^2_n(P^{\beta, M\times N},P_X,\sigma) \geq r^2_n(P^{\beta, 1 \times MN},P_X,\sigma)$. This implies that any lower bound on the $L_2$ minimax risk of $P^{\beta, 1 \times MN}$ also serves as a valid lower bound for $P^{\beta, M\times N}$. For ease of notation, let us denote $d = MN$ and use $P^{\beta, d}$ to refer to $P^{\beta, 1\times MN}$. 

We try to get the $L_2$ minimax lower bound using the relationship between minimax risk and packing number. So we first bound the packing number $\Tcal(\epsilon, P^{\beta, d}, \|\cdot \|_{L_2})$ using the lemma that follows

\begin{lemma}
    The packing number of $P^{\beta, d}$ is lower bounded with 
    \begin{equation*}
        \Tcal(\epsilon, P^{\beta, d}, \|\cdot \|_{L_2}) \geq\exp\{M_0 (1/\epsilon)^{d/\beta}\}. 
    \end{equation*} 
In addition,  we have 
    \begin{equation*}
        \Tcal(\epsilon, P^{\beta, d}, \|\cdot \|_{L_2(P_X)}) \geq\exp\{M_0 (\underline{P_X}/\epsilon)^{d/\beta}\}, 
    \end{equation*}
    where $M_0$ depends on only $\beta$ and $d$. 
\end{lemma}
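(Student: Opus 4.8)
The plan is to prove this packing lower bound by explicitly constructing a large, well-separated finite family of \emph{permutation invariant} functions inside $P^{\beta,d}$, following the classical bump-function / hypercube argument for minimax lower bounds, adapted so that the construction respects the $\Scal_d$-symmetry. The guiding observation is that permutation invariance forces the relevant functions onto the quotient $[0,1]^d/\Scal_d$, which has the same dimension $d$ as the ambient cube and merely $1/d!$ of its volume; hence the invariance should cost only a dimension-free constant and leave the exponent $d/\beta$ — and with it the minimax rate — intact. This is exactly what makes the upper rate $n^{-2\beta/(MN+2\beta)}$ optimal.

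First I would fix a single fixed smooth bump $g:\R^d\to\R$ supported in the open unit cube, with its sup-norm, its weak derivatives up to order $\beta$, and its $L_2$-norm all controlled by constants depending only on $\beta$ and $d$. I would then work inside the open Weyl chamber $C=\{x\in(0,1)^d: x_1<x_2<\cdots<x_d\}$, a fundamental domain for the action of $\Scal_d$ of Lebesgue volume $1/d!$. Tiling $(0,1)^d$ by $m^d$ subcubes of side $1/m$ and retaining only those lying strictly in the interior of $C$ (away from the diagonal walls $x_i=x_j$) yields $\tilde m\geq c_d\,m^d$ admissible cubes, with $c_d>0$ depending only on $d$. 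In the $k$-th admissible cube I place the rescaled bump $(1/m)^{\beta}g(m(\cdot-x_k))$; the prefactor $(1/m)^{\beta}$ is precisely what keeps every derivative up to order $\beta$ uniformly bounded, so each function stays in the $\beta$-smoothness ball of the prescribed radius.

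Next, for each sign vector $\omega\in\{0,1\}^{\tilde m}$ I form $\tilde f_\omega$ as the corresponding signed sum of bumps over the admissible cubes and extend it to a permutation invariant $f_\omega$ on $[0,1]^d$ by $f_\omega(x)=\tilde f_\omega(\mathrm{sort}(x))$. Because the bumps sit strictly away from the chamber walls, this symmetric extension is smooth across the diagonals and lies in $P^{\beta,d}$; moreover the $d!$ chamber images have disjoint supports, so $\|f_\omega-f_{\omega'}\|_{L_2}^2=d!\,\|\tilde f_\omega-\tilde f_{\omega'}\|_{L_2(C)}^2$, which equals the Hamming distance between $\omega$ and $\omega'$ times the per-bump mass $(1/m)^{2\beta}(1/m)^d\|g\|_{L_2}^2$. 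Applying the Varshamov--Gilbert bound extracts a subfamily of cardinality at least $2^{\tilde m/8}=\exp(c'_d\,m^d)$ whose members are pairwise separated by $\|f_\omega-f_{\omega'}\|_{L_2}\geq c''(1/m)^{\beta}$, with all constants depending only on $\beta,d$. Choosing $m\asymp\epsilon^{-1/\beta}$ makes this an $\epsilon$-separated set, hence an $\epsilon$-packing of size $\exp(M_0\,\epsilon^{-d/\beta})$, giving the first claim.

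Finally, for the $L_2(P_X)$ version I would transfer the Lebesgue packing through the density lower bound: since $\|h\|_{L_2(P_X)}^2\geq\underline{P_X}\,\|h\|_{L_2}^2$, an $L_2$-separated family is automatically separated in $L_2(P_X)$, and rescaling the target separation by $\underline{P_X}$ absorbs it into $\epsilon$ to produce the stated $\exp\{M_0(\underline{P_X}/\epsilon)^{d/\beta}\}$ bound. I expect the main obstacle to be the permutation-invariance step, namely verifying that symmetrizing bumps confined to the interior of a single Weyl chamber simultaneously (i) preserves $\beta$-smoothness across the diagonal walls and (ii) costs only a dimension-free $d!$ factor that can be folded into $c_d$ and $M_0$, so that the exponent $d/\beta$ is genuinely unaffected by the invariance constraint. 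The remaining ingredients — bump rescaling, Varshamov--Gilbert, and the density comparison — are routine.
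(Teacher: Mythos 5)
Your proposal is correct and follows the same overall strategy as the paper's proof: a grid of localized bumps scaled by $h^{\beta}$ (so that all derivatives up to order $\beta$ stay uniformly bounded), symmetrization to enforce permutation invariance, the Varshamov--Gilbert bound to extract an exponentially large $\epsilon$-separated subfamily, and finally the density lower bound $\underline{P_X}$ to transfer the packing from $\|\cdot\|_{L_2}$ to $\|\cdot\|_{L_2(P_X)}$. The one step where you genuinely diverge is the treatment of invariance. The paper places bumps at all $D=m^d$ grid points, symmetrizes each bump over its full orbit $\{\varPi_{x_k}:\varPi\in\Scal_d\}$, and must then lower-bound the number of \emph{distinct} symmetrized bumps by $D^*\geq D/d^d$, since grid points lying in the same orbit produce identical functions; it also only gets an inequality (not an identity) for $\|\phi_k^*\|_{L_2}$ because of possible coincidences $\varPi_{x_k}=x_k$. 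You instead retain only the $\tilde m\geq c_d\,m^d$ subcubes lying strictly inside the Weyl chamber $\{x_1<\cdots<x_d\}$ and extend by sorting, which makes the $d!$ permuted copies of each bump automatically disjoint: no collision counting is needed, the $L_2$ distances are computed exactly rather than bounded, and the price of invariance is transparently the volume factor $1/d!$. The cost is that you must verify the sorted extension remains $\beta$-smooth across the diagonal walls, which your requirement that the bumps sit strictly away from the walls handles (the function vanishes in a neighborhood of every wall, so it is locally either zero or a smooth bump). Both devices lose only a $d$-dependent constant ($1/d!$ versus $1/d^d$) that is absorbed into $M_0$, so the exponent $d/\beta$, and hence the conclusion that the PIE rate $n^{-2\beta/(MN+2\beta)}$ is minimax optimal, is the same either way.
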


\begin{proof}
    In alignment with the approach outlined in Lemma 6.1 of \citep{minimax}, we define
    \begin{equation*}
        \Kcal(X_1, \ldots, X_d) = \prod_{j=1}^d \Kcal_0(X_j), \quad \Kcal_0(t) = te^{-1/(1-t^2)}\I(|t|\leq 1),t\in \mathbb{R}. 
    \end{equation*}  
    For arbitrary radius $h \in (0,1/2)$, we evenly split $[0,1]$ into $m = \left\lceil \frac{1}{2h} \right\rceil $ intervals and obtain $D = m^d$ rectangular grid points $\left(\frac{j_1}{m}-h, \ldots, \frac{j_d}{m}-h\right)$  with $\left(j_1,\ldots, j_d\right) \in \left\{ 1,\ldots, m\right\}^d$. We use $\left\{x_k: k=1\ldots, D\right\}$ to represent these grids and assume $h$ is small enough to ensure $D\geq 8$. 
    
    For simplicity, let $\Wcal = \Wcal_J^{\beta,\infty}([0, 1]^{MN})$ and let $\|\cdot\|_{\Wcal}$ denote the norm of the Soblev space. Correspondingly, we construct functions $\phi_k$ for each $1 \leq k \leq D$
    \begin{equation*}
        \phi_k(x) = \frac{1}{\| \Kcal \|_{\Wcal} }h^{\beta}\Kcal\left(\frac{x-x_k}{h}\right), \quad x\in [0,1]^d,
    \end{equation*}
    with each $\phi_k$ having its support confined to $[x_k - h, x_k + h]^d$. It's noteworthy that $\Kcal_0(t)$ is an odd function, ensuring that $\int \psi_k(x) dx = 0$. Moreover, the supports of $\phi_k$ and $\phi_j$ for $k \neq j$ do not overlap, thus allowing for
\begin{equation}
    \int \phi_k \phi_j dx =0,  \quad 
            \| \phi_1 + \dots + \phi_k \|_{L_2}^2 = \sum_{i=1}^k \| \phi_i \|_{L_2}^2 = kh^{2\beta+d}\frac{\| \Kcal \|_{L_2}}{\| \Kcal \|_{\Wcal} }. 
            \label{eq: sum-ind}
\end{equation}      
    Furthermore, we define permutation invariant functions $\phi^*_k$ based on $\phi_k$ as 
    \begin{equation}
        \phi^*_k(x) = \frac{1}{|S_d|^{1/2}} \sum_{\varPi \in S_d} \frac{1}{\| \Kcal \|_{\Wcal} }h^{\beta}\Kcal\left(\frac{x-\varPi_{x_k}}{h}\right), \quad x\in [0,1]^d,
        \label{eq: phi*}
    \end{equation}
    so $\phi_k^*(x)$ is a permutation invariant function for any $k$ and
    \begin{align*}
        \| \phi^*_k \|^2_{L_2} &= \frac{1}{|S_d|} \left\| \sum_{\varPi \in S_d}  \frac{1}{\| \Kcal \|_{\Wcal} }h^{\beta}\Kcal\left(\frac{x-\varPi_{x_k}}{h}\right) \right\|_{L_2}^{2} \\
        &\geq \frac{1}{|S_d|}  \sum_{\varPi \in S_d}  \left\| \frac{1}{\| \Kcal \|_{\Wcal} }h^{\beta}\Kcal\left(\frac{x-\varPi_{x_k}}{h}\right) \right\|_{L_2}^{2} \\
        &=h^{2\beta+d}\frac{\| \Kcal \|_{L_2}}{\| \Kcal \|_{\Wcal} },
    \end{align*}
    
    where the second inequality becomes equal when for all permutations $\varPi$ in $S_d$, $\varPi_{x_k} \neq x_k$. Consider the total number of distinct functions $\phi^*_k$, denoted as $D^*$. In (\ref{eq: phi*}), we construct a set of $D$ functions  $\phi_k^*$ derived from $\phi_k$. Specifically, each $\phi^*_k$ is identical to $\phi^*_j$ for distinct indices $k$ and $j$ if and only if there exists a permutation $\varPi$ such that $\varPi_{x_k}$ equals $x_j$. Considering that there are at most $|S_d| \leq d^d$ distinct permutations for the $d$-dimensional vector $x_k$, we can infer that $D^*$ is bounded from below by $D/d^d$.


    Let $\Omega=\{0,1\}^{D^*}$ and for each $\omega \in \Omega$ define $f_{\omega} = \sum_{k=1}^{D^*}\omega_k \phi^*_k$. It is evident that each $f_{\omega}$ belongs to $P^{\beta, d}$
    \begin{equation*}
        \| f_{\omega} - f_{\omega'}\|_{L_2} = \left\{\sum_{k=1}^{D^*}(\omega_k -  \omega_k') ^2\int {\phi_k^*}^2 dx \right\}^{1/2} \geq  h^{\beta+d/2}\frac{\| \Kcal \|}{\| \Kcal \|_{\Wcal}} \psi^{1/2} (\omega, \omega'),
    \end{equation*}
    where $\psi(\omega, \omega') = \sum_{k=1}^{D^*}\I (\omega_k \neq  \omega_k')$ represents the hamming distance. Applying the Varshamov-Gilbert bound from coding theory, it can be shown that there exist $U \geq 2^{D^*/8}$ binary strings $\omega^{(1)}, \ldots, \omega^{(U)} \in \Omega$ such that $\psi(\omega^{(k)}, \omega^{(k')}) \geq D^*/8$ for $0\leq k \leq k' \leq A$. Then

    \begin{equation*}
        \|f_{\omega^{(k)}}-f_{\omega^{(k')}}\|_{L_2}\geq h^{\beta+d/2}\frac{\|\Kcal\|_{L_2}}{\|\Kcal\|_{\Wcal}}\sqrt{\frac{D^*}{8}}\geq M_1h^{\beta},
    \end{equation*}
    where $M_1=\|\Kcal\|_{L_2}/\{2^{(d+3)/2}d^{d/2}\|\Kcal\|_{\Wcal}\}$ is a constant that depends solely on $\beta$ and $d$. By setting $\epsilon = M_1h^{\beta}$, we are able to identify $U$ distinct functions within the function class $P_{\beta, d}$. These functions are separated from each other by at least $\epsilon$ with respect to the $L_2$ norm distance, with
    \begin{align*}
        U &\geq\exp\{D^*\log(2)/8\} \nonumber \\
        & \geq\exp\{m^d \log(2)/d^d8\} \nonumber \\
        & \geq\exp\{(1/2h)^d \log(2)/d^d8\} \nonumber \\
        & \geq\exp\{(M_1/\epsilon)^{d/\beta} \log(2)/d^d2^{d+3}\} \nonumber \\
        & =\exp\{M_0(1/\epsilon)^{d/\beta}\},
    \end{align*} 
    where $M_0 = (M_1^{d/\beta} \log(2))/d^d2^{d+3}$ depends on only $d$ and $\beta$. Consequently, in accordance with the definition of the packing number, we have
    \begin{equation*}
        \Tcal(\epsilon, P_{\beta, d}, \|\cdot \|_{L_2}) \geq\exp\{M_0 (1/\epsilon)^{d/\beta}\}.
    \end{equation*}
For any two functions $f, f' \in P^{\beta,d}$ that satisfy $\|f - f'\|_{L_2} \geq \epsilon$, it follows that $\|f - f'\|_{L_2(P_X)} \geq \epsilon \underline{P_X}$. Therefore
    \begin{equation*}
        \Tcal(\epsilon \underline{P_X}, P_{\beta, d}, \|\cdot \|_{L_2(P_X)}) \geq\exp\{M_0 (1/\epsilon)^{d/\beta}\},
    \end{equation*}
    and upon setting $\epsilon' = \underline{P_X}\epsilon$, we deduce
    \begin{equation*}
        \Tcal(\epsilon', P_{\beta, d}, \|\cdot \|_{L_2(P_X)}) \geq\exp\{M_0 (\underline{P_X}/\epsilon)^{d/\beta}\}.
    \end{equation*}
\end{proof}

By Theorem 6 of \citep{yuhongy}, the minimax risk $r_n$ is the solution to $\log(\Tcal(r_n,P^{\beta, d}, \|\cdot\|_{L_2(P_X)}))=\frac{nr_n^2}{\sigma}$. From the definition of packing number, $\Tcal(r_n,P^{\beta, d}, \|\cdot\|_{L_2(P_X)})$ decreases as $r_n$ increases, while $r^2_n$ increases with $r_n$. So any $r$ satisfying $\Tcal(r,P^{\beta, d}, \|\cdot\|_{L_2(P_X)})\geq \frac{nr^2}{\sigma}$ is a lower bound of $r_n$. Choosing $r = (M_0\sigma)^{\frac{\beta}{2\beta+d}}(P_X)^{\frac{d}{2\beta+d}}n^{-\frac{\beta}{2\beta+d}}$, it can be verified that $\Tcal(r, P^{\beta, d}, \|\cdot\|_{L_2(P_X)}) \geq \frac{nr^2}{\sigma}$. Consequently, $r^2$ forms a lower bound for the minimax risk $r_n^2$.

Returning to the notation where $d = MN$, we deduce that
\begin{equation*}
    r^2_n(P^{\beta, M\times N},P_X,\sigma) \geq r^2_n(P^{\beta, 1\times MN},P_X,\sigma) \geq C(\beta, M,N)n^{-\frac{2\beta}{2\beta+MN}},
\end{equation*}

which aligns with the convergence rate of PIE, as established in (\ref{eq: final-rate}), up to a constant factor. Thus, the minimax optimality of PIE in approximating fully permutation invariant functions is established.

\section{Proof of Corollary \ref{co: PPIE rate}}
\label{appendix:frate} 
In this section, our objective is to substantiate Corollary \ref{co: PPIE rate} through a broader perspective. To achieve this, we  expand the scope beyond the conventional permutation invariance assumptions and the associated Permutation Invariant Estimator PIE. We introduce and focus on the concept of partially permutation invariant functions and the corresponding Partially Permutation Invariant Estimator PPIE. Once we ascertain the convergence rate of PPIE, it will provide the necessary foundation to validate Corollary \ref{co: PPIE rate}.

\begin{definition}[Partially Permutation Invariant Function]
\label{def:ppie}
A function $f$ is partially permutation invariant if its input can be separate into $P$ input matrix $\{X_1,\cdots, X_P\}$ , where each $X_p \in \mathbb{R}^{M \times N_p}$. The function remains unchanged under column permutations within each individual input matrix. Formally, this property is  expressed  as
\begin{eqnarray}
    f(X_1,X_2,\cdots,X_P) = f(\varPi^{(1)}_{X_1},\varPi^{(2)}_{X_2},\cdots,\varPi^{(P)}_{X_P}),
\end{eqnarray}
    for any permutation $\varPi^{(k)}\in \Scal_{N_p}$.
\end{definition}

The corresponding estimator, known as the Partially Permutation Invariant Estimator (PPIE), is defined as follows:

\begin{definition}[Partially Permutation Invariant Estimator(PPIE) Structure]
    \label{def:ppie network}
    Given a partially permutation invariant function $f$ with $P$ input matrix $X = \{X_1,\cdots,X_P\}$,where $X_p \in \mathbb{R}^{M \times N_p}$ as described in Definition \ref{def:ppie}, the PPIE neural network is structured as
    \begin{equation*}
        \psi\left(\frac{1}{N_1}\sum_{n=1}^{N_1}\varphi^{(1)}(X_{1,i}),\frac{1}{N_2}\sum_{n=1}^{N_2}\varphi^{(2)}(X_{2,i}),\cdots,\frac{1}{N_P}\sum_{n=1}^{N_P}\varphi^{(K)}(X_{P,i})\right),   
    \end{equation*}
    where $X_{p,i}$ denotes the $i$-th column of $X_p$, and $\varphi$ and $\psi_p$ are neural networks.
\end{definition}

Our goal is to establish the bound for
\begin{align*}
    \|f^*-\hat{f}\|_n^2 \leq \|f^*-f\|_{\infty}^2 +\frac{2}{n}\xi_i(\hat{f}(x_i)-f(x_i)) .
\end{align*}
It can be shown that the fully permutation invariant target function, as stipulated in Assumption \ref{ass:PI}, is a specific case of the partially permutation invariant function. This is achieved by setting $P = 2$, with $X_1 \in \mathbb{R}^{(M+1) \times |\Ncal(i)|}$ and $X_2 \in \mathbb{R}^{(M+1) \times 1}$. Therefore, our focus shifts to examining the convergence rate of the PPIE function class.

\subsection{Approximation Error}
For ease of discussion, let us denote $N = \sum_{p=1}^P N_p$. Following a similar approach as in Section \ref{sec: app}, we define an approximation of the function $f^*$ by
\begin{equation*}
    f^*_1(X) = \sum_{\m\in\{0,1,\cdots,N^{\prime}\}^{K\times M \times N}} \phi_{\m}(X)  \sum_{\p: |\p|\leq\beta}\frac{D^{\p} f}{\p!}\bigg|_{X = \frac{\m}{N'}}(X-\frac{\m}{N'})^{\p},
\end{equation*}
where $\phi_{\m}$ is defined identically to its counterpart in Section \ref{sec: app}.

Considering $X = \{X_1, \cdots, X_P\}$ and defining $|\widehat{\Scal}| = \prod_{p=1}^P |\Scal_{N_p}|$, we construct the partially permutation invariant (PI) approximation using $f_1^*$
\begin{equation*}
    f_1(X) = f_1(X_1,\cdots,X_P) = \frac{1}{|\widehat{\Scal}|}\sum_{\substack{\varPi_1 \in \Scal_{N_1} \cdots \\ \varPi_K \in \Scal_{N_P}}} f_1^{*}\left((\varPi_1)_{X_1}, \cdots, (\varPi_K)_{X_P}\right).
\end{equation*}

Using similar arguments, it can be shown that
\begin{equation*}
    \|f^*(X)-f_1(X)\|_{\infty} \leq \frac{2^d d^{\beta}J}{\beta!}\left(\frac{1}{N'}\right)^{\beta}.
\end{equation*}

Let $K'_p = (N' + 1) \times N_p$ and define $K' = \sum_{p=1}^P K'_p$. We then construct $Y(X, N') = (Y_1(X_1, N'), \ldots, Y_P(X_P, N'))$ where $Y_p(X_p, N') \in \mathbb{R}^{K'_p \times M}$ is defined analogously to (\ref{eq: Y}). Consequently, $f_1(X)$ can be transformed into $g(Y)$ by 
\begin{align*}
    f_1^*(X)  &= g^*(Y(X,N')), \\
    f_1(X) &= \frac{1}{|\widehat{\Scal}|} \sum_{\substack{\varPi_1 \in \Scal_{N_1} \cdots \\ \varPi_K \in \Scal_{N_P}}} g^*\left(Y_1((\varPi_1)_{ X_1},N'), \cdots, Y_p((\varPi_K)_{X_P},N')\right)= g(Y). 
\end{align*}

From the arguments in Section \ref{sec: app}, we know $g(Y)$ is a partially permutation invariant polynomial. To further decompose $g(Y)$, we introduce the following lemma which is similar to Lemma \ref{lem: weyl}.
\begin{lemma}[Partially PI polynomial decomposition]
\label{lem: PPIE decompo}
    Given $Y = \{Y_1,\cdots,Y_P\}$ where  $Y_p\in\mathbb{R}^{M\times N_p}$, any partially permutation invariant polynomial $g(Y)$ can be  expressed  in the following form
    \begin{equation*}
g(Y)=\sum_{q=1}^Q h_{1, q}\left(Y_1\right) h_{2, q}\left(Y_2\right) \cdots h_{P, q}\left(Y_P\right),
\end{equation*}
where $Q$ is an integer, and $h_{p,q}(Y_p)$ are permutation invariant polynomials.
\end{lemma}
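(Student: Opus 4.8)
The plan is to combine two elementary facts. First, a polynomial in the joint variables $(Y_1,\dots,Y_P)$ decomposes as a finite sum of products of monomials in the individual blocks. Second, the Reynolds (averaging) operator of the direct product group $G=\Scal_{N_1}\times\cdots\times\Scal_{N_P}$ factorizes into a composition of block-wise averaging operators, each of which can be pushed into the single tensor factor it touches. Combining these, I symmetrize the factored form block by block and use the hypothesis that $g$ is already invariant.

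First I would write $g$ as a finite linear combination of monomials in the coordinates $(Y_p)_{a,b}$ with $a\in\{1,\dots,M\}$, $b\in\{1,\dots,N_p\}$, $p\in\{1,\dots,P\}$; finiteness is guaranteed since $g$ has finite degree. Every such monomial factors as a product over $p$ of a monomial in the $Y_p$-coordinates alone, so grouping terms yields
\begin{equation*}
    g(Y) = \sum_{q=1}^Q g_{1,q}(Y_1)\, g_{2,q}(Y_2)\cdots g_{P,q}(Y_P),
\end{equation*}
where each $g_{p,q}$ is a monomial, hence a polynomial, in $Y_p$ only. At this stage the individual factors need not be permutation invariant.

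Next, for each $p$ I would introduce the block-wise averaging operator $R_p$, defined on a function $u$ by $R_p u(Y) = \frac{1}{|\Scal_{N_p}|}\sum_{\varPi\in\Scal_{N_p}} u(Y_1,\dots,\varPi_{Y_p},\dots,Y_P)$, averaging only over column permutations of the $p$-th block. These operators send polynomials to polynomials of no larger degree, they commute because they act on disjoint sets of variables, and their composition $R=R_1\circ\cdots\circ R_P$ is exactly the Reynolds operator of $G$ (iterated averaging over the factors equals averaging over the product). The key point is that $R_p$ averages over $Y_p$ alone, so it passes through every factor not involving $Y_p$; applying $R$ to the factored form gives
\begin{equation*}
    R g = \sum_{q=1}^Q (R_1 g_{1,q})(R_2 g_{2,q})\cdots(R_P g_{P,q}).
\end{equation*}

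Finally, since $g$ is partially permutation invariant by hypothesis, $Rg=g$. Setting $h_{p,q}=R_p g_{p,q}$, each $h_{p,q}$ is a permutation invariant polynomial in $Y_p$, and we obtain $g=\sum_{q=1}^Q h_{1,q}(Y_1)\cdots h_{P,q}(Y_P)$, as claimed. The one genuinely load-bearing observation, and hence the step deserving the most care, is the factorization $R=R_1\circ\cdots\circ R_P$ together with the ability to distribute each $R_p$ into the single factor depending on $Y_p$; this rests on the disjointness of the variable blocks and the commutativity of the partial averagings. Everything else — the monomial decomposition, the finiteness of $Q$, and the fact that $R_p g_{p,q}$ is a permutation invariant polynomial — is routine.
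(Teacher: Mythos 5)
Your proof is correct, and it is worth noting that the paper itself gives no proof of Lemma \ref{lem: PPIE decompo}: the lemma is stated bare, with only the remark that it is ``similar to'' Lemma \ref{lem: weyl} (Weyl's polarization, cited from the literature). Your argument therefore supplies a complete proof where the paper has none, and it does so by a more elementary route than the polarization machinery the paper gestures at. You use only two routine facts: first, a polynomial in the pooled variables is a finite sum of products of block monomials; second, the Reynolds operator of the product group $\Scal_{N_1}\times\cdots\times\Scal_{N_P}$ factors as $R=R_1\circ\cdots\circ R_P$, where the commuting block averages $R_p$ each distribute onto the unique factor depending on $Y_p$. Combined with $Rg=g$ for invariant $g$, this immediately yields $g=\sum_{q}(R_1g_{1,q})\cdots(R_Pg_{P,q})$ with each $h_{p,q}=R_pg_{p,q}$ an $\Scal_{N_p}$-invariant polynomial, which is exactly the claim; the one load-bearing step you flag (pushing each $R_p$ into its own factor) is valid precisely because the blocks are disjoint sets of variables. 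Your route also has two quantitative bonuses the bare statement does not demand but the paper's downstream analysis implicitly leans on: it bounds $Q$ by the number of monomials of $g$ (hence by its degree and number of variables), and it shows each $h_{p,q}$ has degree at most $\deg g$ with no coefficient blow-up beyond averaging. What a Weyl-type argument would buy instead is extra structure on the invariant factors (representation through linear forms $\af_t^{\top}Y_p$), but that structure is not needed here: the paper re-applies Lemma \ref{lem: Hilbert} to each $h_{p,q}$ afterwards in any case.
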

With Lemma \ref{lem: PPIE decompo}, we can show $g(Y) = \sum_{q=1}^Q h_{1, q}\left(Y_1\right) h_{2, q}\left(Y_2\right) \cdots h_{P, q}\left(Y_P\right)$, where each function $h_{p, q}$ is a fully permutation invariant polynomial. Therefore, we can further decompose each $h_{p,q}$ into a power sum basis as described in Lemma \ref{lem: Hilbert}
\begin{equation*}
    h_{p,q}(Y_p) = \sum_{t=0}^{T_{p,q}}s_{p,q,t}(\sum_{n=1}^{N_p}(a_{p,q,t}^{\top}Y_{p,n})^1,\cdots,\sum_{n=1}^{N_p}(a_{p,q,t}^{\top}Y_{p,n})^{N_p}).
\end{equation*}

With this decomposition of $g(Y)$, we can construct a PPIE, $\widetilde{g}(Y)$, as follows
\begin{equation*}
    \widetilde{g}(Y) = \tau(\widetilde{h}_{1,q}(Y_1), \ldots, \widetilde{h}_{p,q}(Y_p), \ldots, \widetilde{h}_{P,Q}(Y_P)),
\end{equation*}
where $\widetilde{h}_{p,q}(Y_p)$ approximates $h_{p,q}(Y_p)$, and $\tau$ approximates the product of the $h_{p,q}$ functions. Then we propose the structure of $\tau$ and $\widetilde{h}_{p,q}$ in detail. $\tau$ is iterativly defined with $\tilde{x}$ such that
\begin{equation}
    \tau(\widetilde{h}_{1,1}, \ldots, \widetilde{h}_{1,Q}, \widetilde{h}_{2,1}, \ldots, \widetilde{h}_{2,Q},\ldots, \widetilde{h}_{P,Q}) = \sum_{q=1}^{Q}\tilde{x}(\widetilde{h}_{1,q},\tilde{x}(\widetilde{h}_{2,q}\ldots,\widetilde{h}_{P,q})),
\end{equation}
where $\tilde{x}$ is a ReLU network from Lemma \ref{lem: relu} with an error of $\epsilon = \delta_3$. Following the methodology in Section \ref{sec: app}, each $h_{p,q}$ can approximated by a PIE, denoted as $\widetilde{h}_{p,q}$.
\begin{equation*}
    \widetilde{h}_{p,q}(Y_p) = \sum_{t=0}^{T_{p,q}}\psi_{p,q,t}(\varphi_{p,q,t,1}(Y_p),\cdots,\varphi_{p,q,t,N_p}(Y_p)),
\end{equation*}
where $\varphi_{p,q,t,n}$ are iteratively defined using $\tilde{x}_1$, akin to $\varphi_{t,n}$ in \eqref{eq: varphi_t}. Similarly, $\psi_{p,q,t}$ is defined iteratively with $\tilde{x}_2$, as is $\psi_t$ in \eqref{eq: psi_t}. The approximation errors for $\tilde{x}_1$ and $\tilde{x}_2$ are denoted as $\delta_1$ and $\delta_2$, respectively. We establish that $T_{p,q} = O((N')^{MN_p})$ and accordingly define $T = O((N')^{MN})$.

With some calculation, we can show function $\tilde{g}$ is a PPIE as in Definition \ref{def:ppie network}.  The function class of PPIE, denoted as $\Fcal = \Fcal(S_{\tau}, L_{\tau}, S_{\psi}, L_{\psi}, S_{\varphi}, L_{\varphi}, B)$, encapsulates the structure of $\tilde{g}$. Here, $S_{\tau}, S_{\psi}, S_{\varphi}$ and $L_{\tau}, L_{\psi}, L_{\varphi}$ represent the total number of parameters and the depth of the networks for $\tau, \psi, \varphi$, respectively. All the paramters in these networks are bounded by $B$.

Considering the definition of $f_n$, we have
\begin{equation*}
    \|f_1-f_n\|_{\infty} \leq \|f_1-\tilde{f}\|_{\infty} \leq |g(Y) - \tilde{g}(Y)|, \quad \forall Y .
\end{equation*}

To establish the bound for $\|f_1 - f_n\|_{\infty}$, it suffices to bound $|g(Y) - \tilde{g}(Y)|$. This decomposition and bounding process will be carried out following the approach described in Section \ref{sec: app}.

\begin{align*}
    |\tilde{g}(Y)-g(Y)| \nonumber 
    &\leq\sum_{q=1}^{Q}|\tilde{x}(\widetilde{h}_{1,q}(Y_1),\tilde{x}(\widetilde{h}_{2,q}(Y_2),\tilde{x}(\widetilde{h}_{3,q}(Y_3),\cdots)))-h_{1, q}\left(Y_1\right) h_{2, q}\left(Y_2\right) \cdots h_{P, q}\left(Y_P\right)|\\
    &\leq\sum_{q=1}^{Q}\left[\underbrace{|\tilde{x}(\widetilde{h}_{1,q}(Y_1),\tilde{x}(\widetilde{h}_{2,q}(Y_2),\tilde{x}(\widetilde{h}_{3,q}(Y_3),\cdots)))-\widetilde{h}_{1, q}\left(Y_1\right) \widetilde{h}_{2, q}\left(Y_2\right) \cdots \widetilde{h}_{P, q}\left(Y_P\right)|}_{\eta_1}\right . \\
    &\left . +\underbrace{|\widetilde{h}_{1, q}\left(Y_1\right) \widetilde{h}_{2, q}\left(Y_2\right) \cdots \widetilde{h}_{P, q}\left(Y_P\right)-h_{1, q}\left(Y_1\right) h_{2, q}\left(Y_2\right) \cdots h_{P, q}\left(Y_P\right)|}_{\eta_2}\right].
    \label{PPIE decomp}
\end{align*}

From results in Section \ref{sec: app}, it is established that for each pair of indices $p$ and $q$, the bound $\|h_{p,q}\|_{\infty} \leq T_{p,q}(MN_p+\beta)(aK_k^{\prime})^{MN_p+\beta}$ holds true. If we define $G$ as $T(MN+\beta)(aK^{\prime})^{MN+\beta}$, then it follows that $\|h_{p,q}\|_{\infty} \leq G$ for all $p, q$. According to \eqref{eq: in-out diff}, we note that each $\|\widetilde{h}_{p,q} - h_{p,q}\|_{\infty}$ can be upper bounded by $T_{p,q}(N(aK'_k)^N)^{MN_p+\beta}(\delta_1+ \delta_2)$, provided we disregard lower order terms. Setting $\delta$ to be $T(N(aK')^N)^{MN+\beta}(\delta_1+ \delta_2)$, we find that $\|\widetilde{h}_{p,q} - h_{p,q}\|_{\infty} \leq \delta$ for every $p, q$, upon neglecting lower-order components. Consequently, $\|\widetilde{h}_{p,q}\|_{\infty}$ is bounded by $G+\delta$.

Accordingly, the bound for $\eta_1$ can be similarly derived, utilizing the results in \eqref{eq: reduction}
\begin{equation*}
    \eta_1\leq (G+\delta)^{P+1}\delta_3.
\end{equation*}

To establish an upper bound for $\eta_2$, we consider the following inequalities
\begin{align*}
    \eta_2&\leq |\widetilde{h}_{1} \widetilde{h}_{2} \cdots \widetilde{h}_{P} - h_{1}h_{2}\cdots h_P|\\
    &\leq|\widetilde{h}_{1} \widetilde{h}_{2} \cdots \widetilde{h}_{P}-\widetilde{h}_{1} \widetilde{h}_{2} \cdots \widetilde{h}_{P-1}h_P|\\
    &+|\widetilde{h}_{1} \widetilde{h}_{2} \cdots \widetilde{h}_{P-1}h_P- \widetilde{h}_{1} \widetilde{h}_{2} \cdots h_{P-1}h_P|\\
    &+\cdots\\
    &+|\widetilde{h}_{1} h_{2}\cdots h_P- h_{1}h_{2}\cdots h_P|\\
    &\leq P (G+\delta)^{P-1}\delta.
\end{align*}

Given that $\delta$ is a relatively small error term compared to $G$, and considering that $P$ is independent of $\delta$, we can neglect the lower order terms. Consequently, we arrive at the bounds $\eta_1 \leq G^P \delta_3$ and $\eta_2 \leq PG^P\delta$. To summarize
\begin{equation*}
    \|f_n-f_1\|_{\infty} \leq |\tilde{g}(Y)-g(Y)| \leq Q(\eta_1+\eta_2) \leq QPG^P\delta + QG^P\delta_3.
\end{equation*}

Remembering that $\|f_n - f^*\|_{\infty}$ can be bounded by $\|f_1 - f^*\|_{\infty} + \|f_n - f_1\|_{\infty}$, we aim to ensure $\|f_n - f^*\|_{\infty} \leq \epsilon$. To achieve this, we set $\|f_1 - f^*\|_{\infty} = \frac{\epsilon}{2}$ and constrain $\|f_n - f_1\|_{\infty}$ to be at most $\frac{\epsilon}{2}$.

Let $\|f_1-f^*\| =\frac{\epsilon}{2}$, we can get $N' =O(\epsilon^{-1/\beta})$.

To ensure $\|f_n-f_1\| \leq \frac{\epsilon}{2}$ , we set $QPG^P\delta = QG^P\delta_3 = \frac{\epsilon}{4}$. By setting $QG^P\delta_3 = \frac{\epsilon}{4}$
\begin{eqnarray*}
    \delta_3 = O(\frac{\epsilon}{QT^P(MN+\beta)^P(aK^{\prime})^{P(MN+\beta)}})=O(\epsilon^{\frac{(MN+\beta)P}{\beta}}).
\end{eqnarray*}

Given that $\delta = (N(aK')^N)^{MN+\beta}(\delta_1 + \delta_2)$, and aiming for $QPG^P\delta = \frac{\epsilon}{8}$, we set
\begin{align*}
    &\delta_1 = \delta_2 = O(\frac{\epsilon}{QN^{MN+\beta}T^P(MN+\beta)^K(aK^{\prime})^{(N+K)(MN+\beta)}})=O(\epsilon^{\frac{(MN^2+\beta N)P}{\beta}}).\\
\end{align*}
With $\delta_3$ established, we can determine the network parameters for $\tau$. Recalling that $\tau$ sums up $Q$ distinct sub-networks, each constructed iteratively with $P$ instances of $\tilde{x}$, the depth and total number of parameters of $\tau$ can be approximated as
\begin{align*}
    L_{\tau} = P O (\ln(1/\delta_3))=O(\ln(\epsilon^{-\frac{(MN+\beta)P}{\beta}})),\\
    S_{\tau} = QK O(\ln(1/\delta_3)) = O(\ln(\epsilon^{-\frac{(MN+\beta)P}{\beta}})).
\end{align*}

The network parameters for $\psi$ and $\varphi$ are derived in a similar fashion to those in Section \ref{sec: app}, using $\delta_1$ and $\delta_2$. The details are omitted for brevity.
\begin{align}
    &L_{\psi} = (MN+\beta)\times O(\ln(1/\delta_2)) = O(\ln(\epsilon^{-\frac{(MN^2+N\beta)P}{\beta}})),\nonumber\\
    &S_{\psi} = TN^{MN+\beta}O(\ln(1/\delta_2)) = O(\epsilon^{-\frac{MN}{\beta}}),\nonumber\\
    &L_{\varphi}=N O(\ln(1/\delta_1)) = O(\ln(\epsilon^{-\frac{(MN^2+N\beta)P}{\beta}})),\nonumber\\
    &S_{\varphi} = PT+TNO(ln(1/\delta_1)) =O(\epsilon^{-\frac{MN}{\beta}}) \nonumber.
\end{align}

\subsection{Estimation Error}
\label{ppie est}
In this section, our focus is on bounding the term 
\begin{eqnarray*}
    \frac{2}{n}\sum_{i=1}^{n}\xi_i(\hat{f}(x_{i})-f(x_i)).
\end{eqnarray*}
The primary objective here is to establish an upper limit for the covering number associated with the PPIE function class.

Adopting the approach used in Section \ref{sec: est}, we consider two functions from the class $\Fcal$, denoted as $f$ and $f^*$, with the all corresponding parameters of these functions differ by at most $\zeta$. With this setup, we proceed to the following decomposition
\begin{align*}
    \|f-f^*\|_{\infty} &= \| \sum_{q=1}^{Q}\tilde{x}(\widetilde{h}_{1,q},\tilde{x}(\widetilde{h}_{2,q}\ldots,\widetilde{h}_{P,q}))
    -\sum_{q=1}^{Q}\tilde{x}^*(\widetilde{h}^*_{1,q},\tilde{x}^*(\widetilde{h}^*_{2,q}\ldots,\widetilde{h}^*_{P,q}))\|_{\infty}\nonumber\\
    &\leq \underbrace{\sum_{q=1}^{Q} \| \tilde{x}(\widetilde{h}_{1,q},\tilde{x}(\widetilde{h}_{2,q}\ldots,\widetilde{h}_{P,q}))
    -\tilde{x}(\widetilde{h}^*_{1,q},\tilde{x}(\widetilde{h}^*_{2,q}\ldots,\widetilde{h}^*_{P,q}))\|_{\infty}}_{\gamma_1}\nonumber\\
    &+\underbrace{\sum_{q=1}^{Q}\| \tilde{x}(\widetilde{h}^*_{1,q},\tilde{x}(\widetilde{h}^*_{2,q}\ldots,\widetilde{h}^*_{P,q}))
    -\tilde{x}^*(\widetilde{h}^*_{1,q},\tilde{x}^*(\widetilde{h}^*_{2,q}\ldots,\widetilde{h}^*_{P,q}))\|_{\infty}}_{\gamma_2}.\nonumber\\
\end{align*}

Building upon the insights from Section \ref{sec: est}, we have established that
\begin{align*}
    \|\widetilde{h}_{p,q} - \widetilde{h}^*_{p,q}\|_{\infty} &\leq \zeta T_{p,q}(C_wB)^{L_{\psi}+L_{\varphi}}(L_{\psi}+L_{\varphi})N_p^{MN_p+\beta}K'_p \\
    &\leq \zeta T(C_wB)^{L_{\psi}+L_{\varphi}}(L_{\psi}+L_{\varphi})N^{MN+\beta}K'.
\end{align*}

Applying a similar recursive technique as used for bounding the first term in \eqref{eq: cover-decomp}, it can be shown that
\begin{equation*}
    \gamma_1 \leq \sum_{q=1}^Q (C_wB)^{L_{\tau}}L_{\tau} \max_{1\leq  p\leq P}\|\widetilde{h}_{p,q} - \widetilde{h}^*_{p,q}\|_{\infty} \leq \zeta Q  T(C_wB)^{L_{\tau}+L_{\psi}+L_{\varphi}}(L_{\psi}+L_{\varphi})N^{MN+\beta}K'.
\end{equation*}

Similarly, for $\gamma_2$, following the approach for bounding the second term in \eqref{eq: cover-decomp}, we have
\begin{equation*}
    \gamma_2 \leq \sum_{q=1}^Q (C_wB)^{L_{\tau}} L_{\tau} \max_{1\leq  p\leq P}\|\widetilde{h}^*_{p,q}\|_{\infty} \leq  \zeta Q T(C_wB)^{L_{\tau}+L_{\psi}+L_{\varphi}}L_{\tau}(L_{\psi}+L_{\varphi})N^{MN+\beta}K',
\end{equation*}
where the final inequality is derived using results from Section \ref{sec: est}.

In summary, a perturbation of the parameters by $\zeta$ results in a bounded change in the output
\begin{equation*}
    \|f-f^*\|_{\infty}\leq \sum_{q=1}^Q (C_wB)^{L_{\tau}} L_{\tau} \max_{1\leq  p\leq P}\|\widetilde{h}^*_{p,q}\|_{\infty} \leq  \zeta Q T(C_wB)^{L_{\tau}+L_{\psi}+L_{\varphi}}(L_{\tau}+1)(L_{\psi}+L_{\varphi})N^{MN+\beta}K'.
\end{equation*}

Following this, we can replicate the arguments from Section \ref{sec: est} to determine the upper bound for both the covering number and the empirical Rademacher Complexity. Let $L = L_{\psi} + L_{\varphi} + L_{\tau}$ and $S = S_{\psi} + S_{\varphi} + S_{\tau}$.
\begin{align}
    \log\Ncal(\epsilon, \Fcal_{\delta}, \| \cdot \|_{\infty}) &\leq S\log(\frac{QBT(C_wB)^L(L_{\psi}+L_{\varphi})(L_{\tau}+1)N^{MN+\beta}K' }{\epsilon}), \nonumber\\
    E_{\xi} \left[ \sup_{f'\in \Fcal_{\delta}} | \frac{1}{n} \sum_{i=1}^n\xi_if'(Y_{(i)})| \right] &\leq 2\sqrt{2} \frac{\sigma \sqrt{S}\delta}{n^{1/2} } \log  (\frac{QT(C_wB)^L(L_{\psi}+L_{\varphi})(L_{\tau}+1)N^{MN+\beta} K'}{\delta} +1 ).    
\end{align}

Building upon the arguments from \eqref{eq: est-concentration} to \eqref{eq: final-decomp-results}, let's denote $V_n = 4\sqrt{2} \frac{\sigma \sqrt{S}}{n^{1/2}}$. Using this notation, we can  express the inequality as follows
\begin{eqnarray*}
        \|\hat{f}-f^*\|_n^2 \leq 10\|f^*-f\|_n^2+ 8\left\{V_n\log (\frac{QT(C_wB)^L(L_{\psi}+L_{\varphi})(L_{\tau}+1)N^{MN+\beta} K'}{V_n} +1) \right\}^2 +4u,
\end{eqnarray*}
with probability at least $1-\text{exp}(-nu^2/2\sigma^2\delta^2)$. 

Upon taking the expectation  on both sides, we derive a bound for $\|\hat{f} - f^*\|_{L_2(P_X)}$ similar to \eqref{eq: final-step}
\begin{align}
        \|&\hat{f}-f^*\|_{L_2(P_X)}^2 \nonumber \\
    & \leq 21\epsilon^2 +17\left\{V_n\log (\frac{QT(C_wB)^L(L_{\psi}+L_{\varphi})(L_{\tau}+1)N^{MN+\beta} K'}{V_n} +1) \right\}^2 ,\label{eq:ppie error}
\end{align}
with probability converging to one.
Let $\epsilon=n^{-\frac{\beta}{MN+2\beta}}$. By substituting the expressions of $V_n$, $S_{\psi}$, $S_{\varphi}$, $S_{\tau}$, $L_{\psi}$, $L_{\varphi}$, and $L_{\tau}$ in terms of $\epsilon$ into equation (\ref{eq:ppie error}) and disregarding smaller terms, we arrive at
\begin{equation*}
    \| \hat{f} - f^*\|^2 _{L^2(P_X)} = O_p(n^{-\frac{2\beta}{MN+2\beta}}).
\end{equation*}
  
In conclusion, it's noteworthy that the Permutation Invariant Estimator (PIE) is a specific instance of the partially permutation invariant function, achievable by setting $P = 2$, $N_1 = 1$, and $N_2 = N$, while adjusting the column dimension to $M + 1$. This modification seamlessly aligns with the desired result.

\section{Proof of Value-based Estimator}
\label{sec: EMP}
\subsection{The cross-fitting scheme}
 Consider our dataset $\{(X_{i,j}, A_{i,j}, Y_{i,j}): 1 \le i \le R, 1 \le j \le S\}$. Specifically, the $S$ samples are partitioned into $m$ equally sized batches, denoted as $[S] \in \bigcup_{z \in [m]} B_z$. Here, $m$ is chosen such that $m \geq 2$ and typically $m \asymp 1$. Each batch, $B_z$, contains approximately $|B_z| = S_z \asymp S/m \asymp S$ samples. For each sample indexed by $j \in [S]$, let $z_j \in [m]$ be the index of the batch containing that sample, such that $j \in B_{z_j}$. For the $j$-th sample, the optimization process described in (\ref{eq: MSE-non-d}) of the main text is applied to the out-of-batch samples $B_{-z_j} = [S] \backslash B_{z_j}$. This procedure yields estimators $\widehat{f}_j$ and $\widehat{m}_j$ which for offline policy evaluation.

\subsection{Proof of Corollary \ref{coro: nondynamic}}
According to the definitions of $\widehat{J}_{\textrm{VB}}$ and $J(\pi)$, the following holds true
\begin{equation*}
    |\widehat{J}_{\textrm{VB}}(\pi) -J(\pi)| = |\frac{1}{S}\sum_{j=1}^S\sum_{i=1}^R \widehat{f}_{i,j}(X_{i,j},\pi(X_{i,j}), \widehat{m}_i(X_{\mathcal{N}(i),t}, \pi(X_{\mathcal{N}(i),t}))) - \sum_{i=1}^R \mathbb{E}[\mathbb{E}(Y_i| \{A_j=\pi(X_j)\}_j, \{X_j\}_j)]|.
\end{equation*}
To simplify, we introduce the notation $x_{i,j}$ to denote $\left(X_{i,j}, \pi(X_{i,j}), \widehat{m}_i(X_{\mathcal{N}(i),t}, \pi(X_{\mathcal{N}(i),t}))\right)$ and $x_i$ for $(X_i, A_i, m_i(X_{\mathcal{N}(i)}, A_{\mathcal{N}(i)}))$. With this notation, the conditional expectation  $\mathbb{E}(Y_i | \{A_j = \pi(X_j)\}_j, \{X_j\}_j)$ can be rewritten as $f_i(X_i, A_i, m_i(X_{\mathcal{N}(i)}, A_{\mathcal{N}(i)}) = f_i(x_i)$. Thus, we have
\begin{align}
    |\widehat{J}_{\textrm{VB}}(\pi) -J(\pi)| &= |\frac{1}{S}\sum_{j=1}^S\sum_{i=1}^R \widehat{f}_{i,j}(x_{i,j}) - \sum_{i=1}^R \mathbb{E}[f_i(x_{i})]|\nonumber\\
    & \leq \sum_{i=1}^R\left\{|\frac{1}{S}\sum_{j=1}^S \widehat{f}_{i,j}(x_{i,j}) - \frac{1}{S}\sum_{j=1}^S\mathbb{E}[\widehat{f}_{i,j}(x_{i,j})]| + |\frac{1}{S}\sum_{j=1}^S\mathbb{E}[\widehat{f}_{i,j}(x_{i,j})]- \mathbb{E}[f_i(x_{i})]|\right\}\nonumber\\
    & \leq \sum_{i=1}^R\left\{|\frac{1}{S}\sum_{z=1}^m S_z\frac{1}{S_z}\sum_{j\in B_z} \widehat{f}_{i,j}(x_{i,j}) - \frac{1}{S}\sum_{z=1}^m S_z\frac{1}{S_z}\sum_{j\in B_z}\mathbb{E}[\widehat{f}_{i,j}(x_{i,j})]| \right . \nonumber\\
    & \left . + |\frac{1}{S}\sum_{z=1}^m S_z\frac{1}{S_z}\sum_{j\in B_z}\mathbb{E}[\widehat{f}_{i,j}(x_{i,j})]- \mathbb{E}[f_i(x_{i})]|\right\}\nonumber\\
     & \leq \sum_{i=1}^R\left\{\frac{1}{S}\sum_{z=1}^m S_z|\frac{1}{S_z}\sum_{j\in B_z} \widehat{f}_{i,j}(x_{i,j}) - \frac{1}{S_z}\sum_{j\in B_z}\mathbb{E}[\widehat{f}_{i,j}(x_{i,j})]| \right .\nonumber\\
     & \left . + \frac{1}{S}\sum_{z=1}^m S_z|\frac{1}{S_z}\sum_{j\in B_z}\mathbb{E}[\widehat{f}_{i,j}(x_{i,j})]- \mathbb{E}[f_i(x_{i})]|\right\}\nonumber\\
     &\leq \sum_{i=1}^R\left\{\frac{1}{S}\sum_{z=1}^m S_z|\frac{1}{S_z}\sum_{j\in B_z} \widehat{f}_{i,j}(x_{i,j}) - \mathbb{E}[\widehat{f}_{i,B^{(1)}_z}(x_{i,B^{(1)}_z})]| \right .\nonumber\\
     & \left . + \frac{1}{S}\sum_{z=1}^m S_z|\mathbb{E}[\widehat{f}_{i,B^{(1)}_z}(x_{i,B^{(1)}_z})]- \mathbb{E}[f_i(x_{i})]|\right\}\label{notation}.
\end{align}
The derivation of (\ref{notation}) is based on the observation that $\widehat{f}_{i,j}$ are equal for all $j\in B_z, \forall z\in[m]$. Additionally, the expected value $\mathbb{E}[\widehat{f}_{i,j}(x_{i,j})]$ remains constant for $t\in B_j$. Therefore, we designate $B^{(1)}_z$ as the initial element in $B_j$, leading to the conclusion that $\mathbb{E}[\widehat{f}_{i,B^{(1)}_z}(x_{i,B^{(1)}_z})] = \mathbb{E}[\widehat{f}_{i,j}(x_{i,j})]$ for all $t\in B_j$. Concerning the term $|\frac{1}{S_z}\sum_{j\in B_z} \widehat{f}_{i,j}(x_{i,j}) - \mathbb{E}[\widehat{f}_{i,B^{(1)}_z}(x_{i,B^{(1)}_z})]|$, Theorem 4.10 in \citep{wainwright2019high} is applied, yielding
\begin{equation*}
    \bigg\vert \frac{1}{S_z}\sum_{j\in B_z} \widehat{f}_{i,j}(x_{i,j}) - \mathbb{E}\widehat{f}_{i,B^{(1)}_z}(x_{i,B^{(1)}_z})\bigg\vert \leq \sup_{f\in \Gcal} \bigg\vert \frac{1}{S_z}\sum_{j\in B_z} f(x_{i,j}) - \mathbb{E}f(x_{i,B^{(1)}_z})\bigg\vert \leq \E_{X}\Rcal_{S_z} \Gcal + e,
\end{equation*}
with probability at least $1-\exp(-\frac{S_ze^2}{8J^2})$. Subsequently, Lemma \ref{lem:F-Chaining} is utilized to establish an upper bound for $\Rcal_{S_z}(\Gcal)$
\begin{equation*}
    \Rcal_{S_z}\Gcal \leq \frac{4\sqrt{2}}{S_z}\int_0^{J}\sqrt{\log2\Ncal(\epsilon,\Gcal, \|\cdot\|_{\infty})}d\epsilon.
\end{equation*}

Building upon the arguments presented in \ref{ppie est}, we establish a bound for the covering number of $\Gcal$. For simplicity, we omit these details. Considering $x_{i,j} \in \R^{(M+1)\times (N+1)}$ and recognizing that $S-S_z$ samples are utilized to construct $\hat{f}_{i,j}$, the Rademacher complexity $\Rcal_{S_z}(\Gcal)$ can be upper bounded as $O((S-S_z)^{\frac{(M+1)(N+1)}{2((M+1)(N+1)+2\beta)}}S_z^{-1/2})$.

Next, combining this with the bound for $\|\widehat{f}_i-f^*_i\|_{L^2(P_X)}$, we derive
\begin{align*}
    |\widehat{J}_{\textrm{VB}}(\pi) -J(\pi)| 
    &\leq \sum_{i=1}^R\left\{\frac{1}{S}\sum_{z=1}^m S_zC_j(S-S_z)^{\frac{(M+1)(N+1)}{2((M+1)(N+1)+2\beta)}}S_z^{-1/2}  + \frac{1}{S}\sum_{z=1}^m S_zC^{\prime}_j(S-S_z)^{-\frac{\beta}{(M+1)(N+1)+2\beta}}\right\}+ e \nonumber\\
    &\leq \tilde{C}RS^{-\frac{\beta}{(M+1)(N+1)+2\beta}} +e = O(RS^{-\frac{\beta}{(M+1)(N+1)+2\beta}})+e,
\end{align*}
with probability at least $1-\exp(-\frac{Se^2}{8J^2})$. The second inequality is predicated on the assumption that $S_z \asymp S/m \asymp S$.

\subsection{Proof of Corollary \ref{coro: dynamic}}
\label{appendix:dynamic}

In this section, our objective is to establish the convergence rate for the value-based estimator in dynamic setting. We commence by broadening the scope of the transition kernel $\Pcal^{\pi}$, as defined in Section \ref{subsection:dy}.
\begin{equation*}
    \Pcal^{\pi}r(X,A) = \E\left[r(X',A')|X'\sim P(\cdot|X,A), A'\sim \pi(X')\right].
\end{equation*}

As deduced from \eqref{eq: VB}, the value-based estimator $\widehat{J}_{\gamma}^{VB}(\pi)$ is intrinsically linked to $\widehat{Q}^{(i,1)}$. Our initial endeavor is to establish an upper bound for $\|Q^{(i,1)}_{i}(X_{\Ncal^*(i), 1}, A_{\Ncal^*(i), 1}) - \widehat{Q}_{\pi}^{(i,1)}(X_{\Ncal^*(i), 1}, A_{\Ncal^*(i), 1})\|_{L^2(p_i)}$, with $p_i$ denoting the stationary distribution of the confounder-action pair $(X_{\Ncal^*(i), t}, A_{\Ncal^*(i), t})$ 
under the given behavioural policy. For the sake of brevity, we will hereafter refer to these as $Q^{(i,1)}_{i}$ and $\widehat{Q}_{\pi}^{(i,1)}$, omitting the  explicit  mention of $X$ and $A$.

Based on the established definition of $Q_{\pi_i}^{(1)}$ in (\ref{eq: def-Q}), it follows that 
\begin{align*}
    \widehat{Q}_{\pi}^{(i,1)} - Q^{(i,1)}_{i} &= \widehat{Q}_{\pi}^{(i,1)} - \left[r_i + \gamma \Pcal^{\pi} Q_{\pi_i}^{(1)}\right] + \gamma\Pcal^{\pi} \widehat{Q}_{\pi}^{(i,1)} - \gamma \Pcal^{\pi} \widehat{Q}_{\pi}^{(i,1)}\\
    & = \widehat{Q}_{\pi}^{(i,1)} - r_i - \gamma \Pcal^{\pi} \widehat{Q}_{\pi}^{(i,1)} + \gamma\Pcal^{\pi}[\widehat{Q}_{\pi}^{(i,1)} - Q^{(i,1)}_{i}]. 
\end{align*}

Defining the square error of one-step approximation as $\zeta^{(t)}_i= \left(\widehat{Q}_{\pi}^{(i,T)} - r_i - \gamma \Pcal^{\pi} \widehat{Q}_{\pi}^{(i,T)}\right)^2$, we obtain 
\begin{align*}
    \left(\widehat{Q}_{\pi}^{(i,1)} - Q^{(i,1)}_{i}\right)^2 &= \zeta^{(1)}_i + \gamma^2 \left[\Pcal^{\pi}(\widehat{Q}_{\pi}^{(i,1)} - Q^{(i,1)}_{i})\right]^2 \\
    &\leq \zeta^{(1)}_i + \gamma^2 \Pcal^{\pi}(\widehat{Q}_{\pi}^{(i,1)} - Q^{(i,1)}_{i})^2. 
\end{align*}
where the second inequality holds since the expectation  of a squared random variable exceeds the square of its expectation . In a similar vein, we deduce that  
\begin{equation*}
    \Pcal^{\pi}\left(\widehat{Q}_{\pi}^{(i,1)} - Q^{(i,1)}_{i}\right)^2 \leq \Pcal^{\pi}\zeta^{(2)}_i + \gamma^2 \Pcal^{\pi} \Pcal^{\pi} (\widehat{Q}_{\pi}^{(i,1)} - Q^{(i,1)}_{i})^2,  
\end{equation*}
leading to the conclusion that 
\begin{equation*}
    \left(\widehat{Q}_{\pi}^{(i,1)} - Q^{(i,1)}_{i}\right)^2 \leq \zeta^{(1)}_i + \gamma^2 \Pcal^{\pi}\zeta^{(2)}_i + \gamma^4 \Pcal^{\pi} \Pcal^{\pi} (\widehat{Q}_{\pi}^{(i,1)} - Q^{(i,1)}_{i})^2. 
\end{equation*}
Utilizing the established recurrence relation, we derive that
\begin{equation*}
    \left(\widehat{Q}_{\pi}^{(i,1)} - Q^{(i,1)}_{i}\right)^2 \leq \sum_{t=1}^T \gamma^{2t-2} (\Pcal^{\pi})^{t-1} \zeta^{(t)}_i + \gamma^{2T} (\Pcal^{\pi})^{\top} (\widehat{Q}_{\pi}^{(i,1)} - Q^{(i,1)}_{i})^2. 
\end{equation*}
Taking expectation s on both sides with respect to $p_i$, we obtain
\begin{equation}
    \|Q^{(i,1)}_{i} - \widehat{Q}_{\pi}^{(i,1)}\|_{L^2(p_i)} \leq \sum_{t=1}^T \E_{p_i}\left[\gamma^{2t-2} (\Pcal^{\pi})^{t-1} \zeta^{(t)}_i \right] + \gamma^{2T}\E_{p_i}\left[ (\Pcal^{\pi})^{\top} (\widehat{Q}_{\pi}^{(i,1)} - Q^{(i,1)}_{i})^2 \right]. 
    \label{eq: propagation}
\end{equation}
Considering any measurable function $f$ over time step $t$, we have
\begin{equation*}
    \E_{p_{i}}\left[(\Pcal^{\pi})^tf\right] = \int (\Pcal^{\pi})^tf dp_{i,1} = \int f d p_{\pi, t}= \E_{p_{\pi,t}}\left[f\right]. 
\end{equation*} 
Applying the Cauchy-Schwarz inequality leads to
\begin{equation*}
    \E_{p_{\pi, t}}(f) \leq \sqrt{\int\Big\vert\frac{d p_{\pi, t}}{d p_{i}} \Big\vert d p_i} \cdot \sqrt{\int f d p_{i}}. 
\end{equation*}
From Assumption \ref{ass: concentration}, it follows that
\begin{equation*}
    \E_{p_{\pi, t}}(f) \leq \kappa(t) \cdot \E_{p_{i}}(f) \leq 
 \kappa\E_{p_{i}}(f).
\end{equation*}
Incorporating this into \eqref{eq: propagation}, we deduce
\begin{equation}
\label{eq: Q-inequality}
    \|Q^{(1)}_{\pi} - \widehat{Q}_{\pi}^{(i,1)}\|_{L^2(p_{i})} \leq \kappa \sum_{t=1}^T \gamma^{2t-2} \E_{p_{i}}\left[ \zeta^{(t)}_i \right] + \kappa \gamma^{2T}\E_{p_{i}}(\widehat{Q}_{\pi}^{(i,T)} - Y_i)^2. 
\end{equation}

From Assumption \ref{ass: Tass}, for all $t$ both $r_i + \gamma \Pcal^{\pi}\widehat{Q}_{\pi}^{(i,t)}$ and $r_i$ are permutation invariant and belongs to $\Wcal^{\beta, \infty}([0,J]^{(N+1)(M+1)})$. Using the results in Corollary \eqref{co: PPIE rate}, it is evident that for each $t$, $\E_{p_i}\left[ \zeta^{(t)}_i\right]$ and $\E_{p_i}(\widehat{Q}_{\pi}^{(i,T)} - Y_i)^2$ are bounded above by $O(S^{-\frac{2\beta}{(N+1)(M+1)+2\beta}})$ with probability converging to one. Bring back to \eqref{eq: Q-inequality}, we have
\begin{equation*}
    \|Q^{(1)}_{p_{i,1}} - \widehat{Q}_{\pi}^{(i,1)}\|_{L^2(p_i)} \leq \kappa T\gamma^{2T} O(S^{-\frac{\beta}{(N+1)(M+1)+2\beta}}) \leq O(\kappa T S^{-\frac{\beta}{(N+1)(M+1)+2\beta}}). 
\end{equation*}
Employing empirical process techniques akin to those in Section \ref{sec: EMP}, we can establish that
\begin{align*}
    |J_{\gamma}(\pi) - \widehat{J}_{\gamma}^{\text{VB}}(\pi)| \leq O(RS^{-\frac{\beta}{(N+1)(M+1)+2\beta}}) + O(\kappa TRS^{-\frac{\beta}{(N+1)(M+1)+2\beta}}) + e = O(\kappa TRS^{-\frac{\beta}{((N+1)(M+1)+2\beta)}})+e,
\end{align*} 
with probability at least $1-\exp(-\frac{Se^2}{8J^2})$. 
\bibliography{reference}